\theoremstyle{plain}
\newtheorem{theorem}{Theorem}
\newtheorem{lemma}[theorem]{Lemma}
\theoremstyle{definition}
\newtheorem{definition}[theorem]{Definition}
\newtheorem{assumption}[theorem]{Assumption}
\theoremstyle{remark}
\newtheorem{remark}[theorem]{Remark}
\newcommand{\cmark}{\ding{51}}%
\newcommand{\xmark}{\ding{55}}%
\newcommand*\diff{\mathop{}\!\mathrm{d}}
\icmltitlerunning{Learning with Complementary Labels Revisited: The Selected-Completely-at-Random Setting Is More Practical}
\begin{document}

\twocolumn[
\icmltitle{Learning with Complementary Labels Revisited: The Selected-Completely-at-Random Setting Is More Practical}


\icmlsetsymbol{equal}{*}

\begin{icmlauthorlist}
\icmlauthor{Wei Wang}{utokyo,riken}
\icmlauthor{Takashi Ishida}{riken,utokyo}
\icmlauthor{Yu-Jie Zhang}{utokyo}
\icmlauthor{Gang Niu}{riken}
\icmlauthor{Masashi Sugiyama}{riken,utokyo}
\end{icmlauthorlist}

\icmlaffiliation{utokyo}{The University of Tokyo}
\icmlaffiliation{riken}{RIKEN}

\icmlcorrespondingauthor{Masashi Sugiyama}{sugi@k.u-tokyo.ac.jp}

\icmlkeywords{Complementary-label learning, weakly supervised learning, positive-unlabeled learning}

\vskip 0.3in
]



\printAffiliationsAndNotice{}  

\begin{abstract}
Complementary-label learning is a weakly supervised learning problem in which each training example is associated with one or multiple complementary labels indicating the classes to which it does not belong. Existing consistent approaches have relied on the uniform distribution assumption to model the generation of complementary labels, or on an ordinary-label training set to estimate the transition matrix in non-uniform cases. However, either condition may not be satisfied in real-world scenarios. In this paper, we propose a novel consistent approach that does not rely on these conditions. Inspired by the positive-unlabeled~(PU) learning literature, we propose an \emph{unbiased risk estimator} based on the \emph{Selected-Completely-at-Random assumption} for complementary-label learning. We then introduce a \emph{risk-correction approach} to address overfitting problems. Furthermore, we find that complementary-label learning can be expressed as a set of \emph{negative-unlabeled binary classification} problems when using the one-versus-rest strategy. Extensive experimental results on both synthetic and real-world benchmark datasets validate the superiority of our proposed approach over state-of-the-art methods. 
\end{abstract}
\section{Introduction}
Complementary-label learning is a weakly supervised learning problem that has received a lot of attention recently~\citep{ishida2017learning,feng2020learning,gao2021discriminative,liu2023consistent}. 
In complementary-label learning, we are given training data associated with complementary labels that specify the classes to which the examples do not belong. The task is to learn a multi-class classifier that assigns correct labels to test data as in the standard supervised learning. Collecting training data with complementary labels is much easier and cheaper than collecting ordinary-label data. For example, when asking workers on crowdsourcing platforms to annotate training data, we only need to randomly select a candidate label and then ask them whether the example belongs to that class or not. Such ``yes'' or ``no'' questions are much easier to answer than asking workers to determine the ground-truth label from a large set of candidate labels. The benefits and effectiveness of complementary-label learning have also been demonstrated in several machine learning problems and applications, such as domain adaptation~\citep{zhang2021learning,han2023rethinking}, semi-supervised learning~\citep{chen2020negative,ma2023rethinking,deng2024boosting},
noisy-label learning~\citep{kim2019nlnl}, adversarial robustness~\citep{zhou2022adversarial}, few-shot learning~\citep{wei2022an}, and medical image analysis~\citep{rezaei2020recurrent}.
\begin{table*}[t]
\center
  \caption{Comparison between SCARCE and previous risk-consistent or classifier-consistent complementary-label learning methods.}
  \label{comparison}
  \begin{tabular}{lcccc}
\toprule[1pt]
\multirow{2}{*}{Method} & \multirow{2}{*}{\shortstack{Uniform distribution\\assumption-free}} & \multirow{2}{*}{\shortstack{Ordinary-label \\ training set-free}} & \multirow{2}{*}{\shortstack{Classifier-\\consistent}} &  \multirow{2}{*}{\shortstack{Risk-\\consistent}} \\
&&&\\\midrule
PC~\citep{ishida2017learning}& \xmark & \cmark &\cmark& \cmark\\
Forward~\citep{yu2018learning} & \cmark & \xmark  &\cmark & \xmark \\
NN~\citep{ishida2019complementary}& \xmark & \cmark &\cmark& \cmark\\
LMCL~\citep{feng2020learning}& \xmark & \cmark &\cmark& \cmark\\
OP~\citep{liu2023consistent} & \xmark & \cmark &\cmark & \xmark \\
\midrule
SCARCE~(Ours) & \cmark &\cmark &\cmark &~~\cmark \\
\bottomrule[1pt]
  \end{tabular}
\end{table*}

Existing research works with \emph{consistency guarantees} have attempted to solve complementary-label learning problems by making assumptions about the distribution of complementary labels. The remedy started with~\citet{ishida2017learning}, which proposed the \emph{uniform distribution assumption} that a label other than the ground-truth label is sampled from the uniform distribution to be the complementary label. A subsequent work extended it to arbitrary loss functions and models~\citep{ishida2019complementary} based on the same distribution assumption. Then,~\citet{feng2020learning} extended the problem setting to the existence of multiple complementary labels. Recent works have proposed discriminative methods that work by modeling the posterior probabilities of complementary labels instead of the generation process~\citep{chou2020unbiased,gao2021discriminative,liu2023consistent,lin2023reduction}. However, the uniform distribution assumption is still necessary to ensure the classifier consistency property~\citep{liu2023consistent}.~\citet{yu2018learning} proposed the \emph{biased distribution assumption}, elaborating that the generation of complementary labels follows a \emph{transition matrix}, i.e., the complementary-label distribution is determined by the true label. 

In summary, previous complementary-label learning approaches all require either the uniform distribution assumption or the biased distribution assumption to guarantee the consistency property, to the best of our knowledge.  However, such assumptions may not be satisfied in real-world scenarios. On the one hand, the uniform distribution assumption is too strong, since the transition probability for different complementary labels is undifferentiated, i.e., the transition probability from the true label to a complementary label is constant for all labels. Such an assumption is not realistic since the annotations may be imbalanced and biased~\citep{wei2023class,wang2023clcifar}. On the other hand, although the biased distribution assumption is more practical, an ordinary-label training set with \emph{deterministic labels}, also known as \emph{anchor points}~\citep{liu2015classification}, is essential for estimating transition probabilities during the training phase~\citep{yu2018learning}. However, the collection of ordinary-label data with deterministic labels is often unrealistic in complementary-label learning problems~\citep{feng2020learning,gao2021discriminative}. 

To this end, we propose a novel risk-consistent approach named SCARCE, i.e., \emph{Selected-Completely-At-Random ComplEmentary-label learning}, without relying on the uniform distribution assumption or an additional ordinary-label training set. Inspired by the PU learning literature, we propose the Selected-Completely-at-Random~(SCAR) assumption for complementary-label learning and propose an unbiased risk estimator accordingly. We then introduce a risk-correction approach to mitigate overfitting issues with risk consistency maintained. Furthermore, we show that complementary-label learning can be expressed as a set of negative-unlabeled~(NU) binary classification problems when using the one-versus-rest~(OVR) strategy. Table~\ref{comparison} shows the comparison between SCARCE and previous methods. The main contributions of this work are summarized as follows: 
\begin{itemize}[leftmargin=1em, itemsep=1pt, topsep=1pt, parsep=-1pt]
\item Methodologically, we propose the first consistent complementary-label learning approach without relying on the uniform distribution assumption or an additional ordinary-label dataset in non-uniform cases.  
\item Theoretically, we uncover the relation between complementary-label learning and NU learning, which provides a new perspective for understanding complementary-label learning. We also prove the convergence rate of the proposed risk estimator by providing an estimation error bound.
\item Empirically, the proposed approach is shown to achieve superior performance over state-of-the-art methods on both synthetic and real-world benchmark datasets.
\end{itemize}
\section{Preliminaries}
In this section, we review the background of learning with ordinary labels, complementary labels, and PU learning. Then, we introduce a new data distribution assumption for generating complementary labels.

\subsection{Learning with Ordinary Labels}
Let $\mathcal{X} = \mathbb{R}^d$ denote the $d$-dimensional feature space and $\mathcal{Y}=\left\{1,2,\ldots,q\right\}$ denote the label space with $q$ class labels. Let $p(\bm{x},y)$ be the joint probability density over the random variables $(\bm{x},y)\in \mathcal{X}\times \mathcal{Y}$, then the classification risk is 
\begin{equation}\label{ordinary_risk}
R(f)= \mathbb{E}_{p(\bm{x},y)}\left[\mathcal{L}(f(\bm{x}), y)\right],
\end{equation}
where $f(\bm{x})$ is the model prediction and $\mathcal{L}$ can be any \emph{classification-calibrated} loss function, such as the cross-entropy loss~\citep{bartlett2006convexity}. Let $p(\bm{x})$ denote the marginal density of unlabeled data. Besides, let $\pi_{k}=p(y=k)$ be the class-prior probability of the $k$-th class and $p(\bm{x}|y=k)$ denote the class-conditional density. Then, the classification risk in Eq.~(\ref{ordinary_risk}) can be written as
\begin{equation}
R(f)=\sum_{k=1}^{q}\left(\pi_{k}\mathbb{E}_{p(\bm{x}|y=k)}\left[\mathcal{L}(f(\bm{x}), k)\right]\right).
\end{equation}
\subsection{Learning with Complementary Labels}
In complementary-label learning, each training example is associated with one or multiple complementary labels specifying the classes to which the example does not belong. Let $\mathcal{D}=\left\{\left(\bm{x}_i, \bar{Y}_i\right)\right\}_{i=1}^{n}$ denote the complementary-label training set sampled i.i.d.~from an unknown density $p(\bm{x}, \bar{Y})$. Here, $\bm{x} \in \mathcal{X}$ is a feature vector, and $\bar{Y} \subseteq \mathcal{Y}$ is a complementary-label set associated with $\bm{x}$. In the literature, complementary-label learning can be categorized into single complementary-label learning when we have $|\bar{Y}|=1$~\citep{ishida2017learning,
gao2021discriminative,liu2023consistent}, and multiple complementary-label learning when we have $1\leq|\bar{Y}|\leq q-1$~\citep{feng2020learning}. 

In this paper, we consider a more general case where $\bar{Y}$ can contain \emph{any number} of complementary labels, ranging from zero to $q-1$. This means that our method can cover a wider range of applications and can handle additional unlabeled data without complementary labels. For ease of notation, we use a $q$-dimensional label vector $\bar{\bm{y}}=\left[\bar{y}_{1},\bar{y}_{2},\ldots,\bar{y}_{q}\right] \in \{0,1\}^q$ to denote the vector version of $\bar{Y}$, where $\bar{y}_{k}=1$ when $k \in \bar{Y}$ and $\bar{y}_{k}=0$ otherwise. Let $\bar{\pi}_{k}=p\left(\bar{y}_{k}=1\right)$ denote the fraction of training data where the $k$-th class is considered as a complementary label. Let $p\left(\bm{x}|\bar{y}_{k}=1\right)$ and $p\left(\bm{x}|\bar{y}_k=0\right)$ denote the marginal densities where the $k$-th class is considered as a complementary label or not. The task of complementary-label learning is to learn a multi-class classifier $f:\mathcal{X}\rightarrow \mathcal{Y}$ from $\mathcal{D}$. 
\subsection{Learning from Positive and Unlabeled Data}
In PU learning~\citep{elkan2008learning,du2014analysis,kiryo2017positive}, the goal is to learn a binary classifier only from a positive dataset $\mathcal{D}^{\mathrm P}=\left\{(\bm{x}_{i},+1)\right\}_{i=1}^{n^{\mathrm P}}$ and an unlabeled dataset $\mathcal{D}^{\mathrm U}=\{\bm{x}_{i}\}_{i=1}^{n^{\mathrm U}}$. There are mainly two problem settings for PU learning, i.e., the two-sample setting~\citep{du2014analysis,niu2016theoretical,chen2020variational} and the one-sample setting~\citep{elkan2008learning,coudray2023risk}. In the two-sample setting, we assume that $\mathcal{D}^{\mathrm P}$ is sampled from the positive-class density $p(\bm{x}|y=+1)$ and $\mathcal{D}^{\mathrm U}$ is sampled from the marginal density $p(\bm{x})$. In contrast, in the one-sample setting, we assume that an unlabeled dataset is first sampled from the marginal density $p(\bm{x})$. Then, if a training example is positive, its label is observed with a \emph{constant probability} $c$, and the example remains unlabeled with probability $1-c$. If a training example is negative, its label is never observed and the example remains unlabeled with probability 1. In this paper, we make use of the one-sample setting for complementary-label learning.
\subsection{Generation Process of Complementary Labels}
Inspired by the SCAR assumption in PU learning~\citep{elkan2008learning,coudray2023risk}, we introduce the SCAR assumption for generating complementary labels, which can be summarized as follows.
\begin{assumption}[Selected-Completely-at-Random~(SCAR) Assumption]\label{scar}
The complementary-label data with the $k$-th class as a complementary label are sampled completely at random from the marginal density of the data not belonging to the $k$-th class, i.e.,
\begin{align}
p\left(k\in\bar{Y}|\bm{x},k\in{\mathcal{Y}\backslash \{y\}}\right)=p\left(k\in\bar{Y}|k\in{\mathcal{Y}\backslash \{y\}}\right)=c_{k},
\end{align}
where $c_{k}=\bar{\pi}_{k}/(1-\pi_k)$ is a constant specifying the fraction of data with the $k$-th class as a complementary label and $(\bm{x},y)$ is sampled from the density $p(\bm{x},y)$.
\end{assumption}

Our motivation is that complementary labels are often generated in a \emph{class-wise} manner. They can be collected by answering ``yes'' or ``no'' questions given a pair of an example and a candidate label~\citep{hu2019active,wang2021learning}. During an annotation round, we randomly select a candidate label and ask the annotators whether the example belongs to that class or not. The process is repeated iteratively, so that each example may be annotated with \emph{multiple} complementary labels. The SCAR assumption differs from the biased distribution assumption, where only one \emph{single} complementary label is generated by sampling only once from a multinomial distribution. Moreover, the SCAR assumption can be generalized to non-uniform cases by setting $c_k$ to different values for different labels. Therefore, our assumption is more practical in real-world scenarios. 

We generate the complementary-label training set $\mathcal{D}$ as follows. First, an unlabeled dataset is sampled from $p(\bm{x})$. Then, if the latent ground-truth label of an example is not the $k$-th class, we assign it a complementary label $k$ with probability $c_k$ and still consider it to be an unlabeled example with probability $1-c_k$. We generate complementary labels for all the examples by following the procedure w.r.t.~each of the $q$ labels. The data generation process is summarized in Appendix~\ref{data_generation_process}. 
   
\section{Methodology}
In this section, we first introduce a risk rewrite formulation for complementary-label learning. Then, we propose an unbiased risk estimator, followed by its theoretical analysis. Finally, we present a risk-correction approach to improve the generalization performance.
\subsection{Risk Rewrite}
Under the SCAR assumption, the ordinary multi-class classification risk in Eq.~(\ref{ordinary_risk}) can be rewritten as follows~(the proof is given in Appendix~\ref{proof_ure_cce}).
\begin{theorem}\label{ure_cce}
Under Assumption~\ref{scar}, the classification risk in Eq.~(\ref{ordinary_risk}) can be equivalently expressed as
\begin{align}\label{ure_cce_eq}
R(f) =& \sum_{k=1}^{q}\left(\mathbb{E}_{p\left(\bm{x}|\bar{y}_{k}=1\right)}\left[\left(\bar{\pi}_{k}+\pi_{k}-1\right)\mathcal{L}(f(\bm{x}), k)\right]\right. \nonumber \\
&\left.+\mathbb{E}_{p\left(\bm{x}|\bar{y}_{k}=0\right)}\left[\left(1-\bar{\pi}_{k}\right)\mathcal{L}(f(\bm{x}), k)\right]\right).
\end{align}
\end{theorem}
Theorem~\ref{ure_cce} shows that the ordinary classification risk in Eq.~(\ref{ordinary_risk}) can be equivalently expressed using densities $p\left(\bm{x}|\bar{y}_{k}=1\right)$ and $p\left(\bm{x}|\bar{y}_{k}=0\right)$. Therefore, we can perform \emph{empirical risk minimization} by minimizing an unbiased estimation of Eq.~(\ref{ure_cce_eq}) with training data sampled from $p\left(\bm{x}|\bar{y}_{k}=1\right)$ and $p\left(\bm{x}|\bar{y}_{k}=0\right)$. 

Ideally, \emph{any multi-class loss function} can be used to instantiate $\mathcal{L}$, such as the cross-entropy loss. In addition, \emph{any model and optimizer} can be used, which reveals the universality of our proposed approach. According to our experimental results, we find that the cross-entropy loss is not robust and often leads to inferior performance, possibly due to its unboundedness~\citep{ghosh2017robust,zhang2018generalized,feng2020learning,wei2023mitigating}, which will be discussed in Section~\ref{exp_further_res}. In Section~\ref{ovr_section}, we provide an instantiation based on the OVR strategy.  
\subsection{OVR Strategy}\label{ovr_section}
The OVR strategy decomposes multi-class classification into a series of binary classification problems, which is a common strategy with extensive theoretical guarantees and sound performance~\citep{rifkin2004defense,zhang2004statistical}. It instantiates the loss function $\mathcal{L}$ in Eq.~(\ref{ordinary_risk}) with the OVR loss, i.e. 
\begin{align}\label{ordinary_risk_ovr}
R\left(f_{1}, f_{2}, \ldots, f_{q}\right) =& \mathbb{E}_{p(\bm{x},y)}\left[\ell\left(f_{y}\left(\bm{x}\right)\right)\right. \nonumber \\
&+\sum_{k\in{\mathcal{Y}\backslash \{y\}}}\ell\left(-f_{k}\left(\bm{x}\right)\right)].
\end{align}
Here, $f_{k}$ is a binary classifier w.r.t.~the $k$-th class, $\mathbb{E}$ denotes the expectation, and $\ell:\mathbb{R}\rightarrow \mathbb{R}_{+}$ is a non-negative binary-class loss function.
Then, the predicted label for a test instance $\bm{x}$ is determined as 
\begin{equation}    
f(\bm{x}) = \mathop{\arg\max}_{k\in\mathcal{Y}}~f_{k}(\bm{x}).
\end{equation}
The goal is to find optimal classifiers $f_{1}^{*},f_{2}^{*},\ldots,f_{q}^{*}$ in a function class $\mathcal{F}$ which achieve the minimum classification risk in Eq.~(\ref{ordinary_risk_ovr}), i.e., 
\begin{equation}
\left(f_{1}^{*},f_{2}^{*},\ldots,f_{q}^{*}\right) = \mathop{\arg\min}_{f_{1}, f_{2}, \ldots, f_{q}\in\mathcal{F}}~R\left(f_{1}, f_{2}, \ldots, f_{q}\right). 
\end{equation}
We show that the OVR risk can be rewritten using densities $p\left(\bm{x}|\bar{y}_{k}=1\right)$ and $p\left(\bm{x}|\bar{y}_{k}=0\right)$ as well.
\begin{theorem}\label{ure}
When the OVR loss is used, the classification risk in Eq.~(\ref{ordinary_risk_ovr}) can be equivalently expressed as $R(f_{1}, f_{2}, \ldots, f_{q})=\sum_{k=1}^{q}R_{k}(f_{k})$, where
\begin{align}
R_{k}(f_k) = &\mathbb{E}_{p\left(\bm{x}|\bar{y}_{k}=1\right)}\left[(1-\pi_{k})\ell\left(-f_{k}(\bm{x})\right)+\left(\bar{\pi}_{k}+\pi_{k}-1\right)\right. \nonumber \\
&\left.\ell\left(f_{k}(\bm{x})\right)\right]+\mathbb{E}_{p\left(\bm{x}|\bar{y}_{k}=0\right)}\left[\left(1-\bar{\pi}_{k}\right)\ell\left(f_{k}(\bm{x})\right)\right]. 
\end{align}
\end{theorem}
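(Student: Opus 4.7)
The plan is to split the OVR risk class-by-class, rewrite each per-class term using the ordinary class-conditional densities $p(\bm{x}\mid y=k)$ and $p(\bm{x}\mid y\neq k)$, and then use the SCAR assumption to replace these with the complementary-side densities $p(\bm{x}\mid \bar{y}_k=1)$ and $p(\bm{x}\mid \bar{y}_k=0)$.

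First, I would collect, for each fixed $k$, all terms of the integrand in Eq.~(\ref{ordinary_risk}) that involve $f_k$. The summand is $\ell(f_y(\bm{x}))+\sum_{j\neq y}\ell(-f_j(\bm{x}))$, so $f_k$ appears as $\ell(f_k(\bm{x}))$ when $y=k$ and as $\ell(-f_k(\bm{x}))$ when $y\neq k$. Writing this with indicators and using the law of total expectation gives $R=\sum_{k=1}^{q}R_k(f_k)$ with
\begin{equation*}
R_k(f_k)=\pi_k\,\mathbb{E}_{p(\bm{x}\mid y=k)}[\ell(f_k(\bm{x}))]+(1-\pi_k)\,\mathbb{E}_{p(\bm{x}\mid y\neq k)}[\ell(-f_k(\bm{x}))].
\end{equation*}

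Next, I would translate the two class-conditional densities via SCAR. Because Assumption~\ref{scar} makes the event $\bar{y}_k=1$ an $\bm{x}$-independent Bernoulli thinning (with probability $c_k$) of the subpopulation $\{y\neq k\}$, a direct application of Bayes' rule yields $p(\bm{x}\mid \bar{y}_k=1)=p(\bm{x}\mid y\neq k)$; this immediately takes care of the second term of $R_k$, producing $(1-\pi_k)\mathbb{E}_{p(\bm{x}\mid \bar{y}_k=1)}[\ell(-f_k(\bm{x}))]$ as required.

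The main obstacle is converting the first term, since $p(\bm{x}\mid y=k)$ is not directly one of the complementary-side densities. To handle this, I would decompose $p(\bm{x},\bar{y}_k=0)$ by conditioning on whether $y=k$ or $y\neq k$: the piece with $y=k$ is unaffected (it can never have $k$ as a complementary label) and contributes $\pi_k p(\bm{x}\mid y=k)$, while the piece with $y\neq k$ is thinned by $1-c_k$ and contributes $(1-c_k)(1-\pi_k)p(\bm{x}\mid y\neq k)$. Using $\bar{\pi}_k=c_k(1-\pi_k)$, this gives
\begin{equation*}
(1-\bar{\pi}_k)\,p(\bm{x}\mid \bar{y}_k=0)=\pi_k\, p(\bm{x}\mid y=k)+(1-\pi_k-\bar{\pi}_k)\,p(\bm{x}\mid y\neq k).
\end{equation*}
Solving for $\pi_k p(\bm{x}\mid y=k)$ and replacing $p(\bm{x}\mid y\neq k)$ by $p(\bm{x}\mid \bar{y}_k=1)$ yields a signed two-density representation. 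Multiplying by $\ell(f_k(\bm{x}))$ and integrating produces $(1-\bar{\pi}_k)\mathbb{E}_{p(\bm{x}\mid \bar{y}_k=0)}[\ell(f_k(\bm{x}))]+(\bar{\pi}_k+\pi_k-1)\mathbb{E}_{p(\bm{x}\mid \bar{y}_k=1)}[\ell(f_k(\bm{x}))]$. Adding the second term from the previous paragraph gives exactly the claimed form of $R_k(f_k)$, and summing over $k$ finishes the proof.

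The only delicate point is keeping the signs straight when solving for $\pi_k p(\bm{x}\mid y=k)$: the coefficient $1-\pi_k-\bar{\pi}_k$ need not be nonnegative, which is why the final expression is a signed linear combination and the resulting risk is not manifestly nonnegative (foreshadowing the need for the risk correction developed later in the paper).
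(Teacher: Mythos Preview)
Your proposal is correct and takes essentially the same approach as the paper: both split the OVR risk per class, use SCAR to identify $p(\bm{x}\mid\bar{y}_k=1)$ with $p(\bm{x}\mid y\neq k)$, and then express $\pi_k\,\mathbb{E}_{p(\bm{x}\mid y=k)}[\ell(f_k(\bm{x}))]$ via the complementary-side densities. The only cosmetic difference is that the paper detours through the full marginal $p(\bm{x})$ (writing $\pi_k p(\bm{x}\mid y=k)=p(\bm{x})-(1-\pi_k)p(\bm{x}\mid y\neq k)$ and then splitting $p(\bm{x})$ by $\bar{y}_k$), whereas you decompose $(1-\bar{\pi}_k)p(\bm{x}\mid\bar{y}_k=0)$ directly as a mixture of the $y=k$ and $y\neq k$ components; the resulting algebra is identical.
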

The proof is given in Appendix~\ref{proof_ure}. Since the true densities $p\left(\bm{x}|\bar{y}_{k}=1\right)$ and $p\left(\bm{x}|\bar{y}_{k}=0\right)$ are not directly accessible, we approximate the risk \emph{empirically}. Suppose we have binary-class datasets $\mathcal{D}^{\rm N}_{k}$ and $\mathcal{D}^{\rm U}_{k}$ sampled i.i.d.~from $p\left(\bm{x}|\bar{y}_{k}=1\right)$ and $p\left(\bm{x}|\bar{y}_{k}=0\right)$, respectively. Then, an unbiased risk estimator can be derived from these binary-class datasets to approximate the classification risk in Theorem~\ref{ure} as $\widehat{R}(f_{1}, f_{2}, \ldots, f_{q})=\sum_{k=1}^{q}\widehat{R}_{k}(f_k)$, where 
\begin{align}\label{ure_eq}
\widehat{R}_{k}(f_k)=&\frac{1}{n^{\rm N}_{k}}\sum_{i=1}^{n^{\rm N}_{k}}\left(\left(1-\pi_{k}\right)\ell\left(-f_{k}(\bm{x}_{k,i}^{\mathrm{N}})\right)+\left(\bar{\pi}_{k}+\pi_{k}-1\right)\right.\nonumber \\
&\left.\ell\left(f_{k}(\bm{x}_{k,i}^{\mathrm{N}})\right)\right) +\frac{(1-\bar{\pi}_{k})}{n^{\rm U}_{k}}\sum_{i=1}^{n^{\rm U}_{k}}\ell\left(f_{k}(\bm{x}_{k,i}^{\mathrm{U}})\right).
\end{align}
We may add regularization terms to $\widehat{R}\left(f_{1}, f_{2}, \ldots, f_{q}\right)$ when necessary~\citep{loshchilov2019decoupled}. This paper considers generating the binary-class datasets $\mathcal{D}^{\rm N}_{k}$ and $\mathcal{D}^{\rm U}_{k}$ by \emph{duplicating} instances of $\mathcal{D}$. Specifically, if the $k$-th class is a complementary label of a training example, we regard its duplicated instance as a \emph{negative example} sampled from $p\left(\bm{x}|\bar{y}_{k}=1\right)$ and put the duplicated instance
in $\mathcal{D}^{\rm N}_{k}$. If the $k$-th class is not a complementary label of a training example, we regard its duplicated instance as an \emph{unlabeled example} sampled from $p\left(\bm{x}|\bar{y}_{k}=0\right)$ and put the duplicated instance
in $\mathcal{D}^{\rm U}_{k}$. In this way, we can obtain $q$ negative binary-class datasets and $q$ unlabeled binary-class datasets~($k\in \mathcal{Y}$):
\begin{align}
\mathcal{D}^{\rm N}_{k}&=\left\{(\bm{x}_{k,i}^{\mathrm{N}}, -1)\right\}_{i=1}^{n^{\rm N}_{k}}=\left\{(\bm{x}_{j}, -1)|(\bm{x}_{j},\bar{Y}_{j})\in\mathcal{D}, k\in\bar{Y}_{j}\right\};\label{neg_binary} \\
\mathcal{D}^{\rm U}_{k}&=\left\{\bm{x}_{k,i}^{\mathrm{U}}\right\}_{i=1}^{n^{\rm U}_{k}}=\left\{\bm{x}_{j}|(\bm{x}_{j},\bar{Y}_{j})\in\mathcal{D},k\notin\bar{Y}_{j}\right\}. \label{unlabel_binary}
\end{align}
The details of our algorithm are summarized in Algorithm~\ref{scarce_algo_code}. When the class priors $\pi_{k}$ are not accessible to the learning algorithm, they can be estimated by off-the-shelf mixture proportion estimation approaches~\citep{scott2015rate,ramaswamy2016mixture,zhang2020unbiased,garg2021mixture,yao2022rethinking} with $\mathcal{D}^{\rm N}_{k}$ and $\mathcal{D}^{\rm U}_{k}$. Notably, the \emph{irreducibility}~\citep{blanchard2010semi,scott2013classification} assumption is necessary for class-prior estimation. However, it is still less demanding than the biased distribution assumption, which requires additional ordinary-label training data with deterministic labels, a.k.a.~anchor points, to estimate the transition matrix~\citep{yu2018learning}. We present the details of a class-prior estimation algorithm in Appendix~\ref{cpe_apd}. 
\begin{algorithm}[t]
\caption{SCARCE}\label{scarce_algo_code}
\noindent {\bf Input:} Complementary-label training set $\mathcal{D}$, class priors $\pi_{k}$~($k\in\mathcal{Y}$), unseen instance $\bm{x}_{*}$, epoch $T_{\text{max}}$, iteration $I_{\text{max}}$.  
\begin{algorithmic}
\FOR{$t = 1,2,\ldots,T_{\text{max}}$}
    \STATE \textbf{Shuffle} the complementary-label training set $\mathcal{D}$; 
    \FOR{$j = 1,\ldots,I_{\text{max}}$}
        \STATE \textbf{Fetch} mini-batch $\mathcal{D}_j$ from $\mathcal{D}$;
        \STATE \textbf{Update} the shared representation layers and specific classification layers $f_{1}, f_{2}, \ldots, f_{q}$ by minimizing the corrected risk estimator $\widetilde{R}\left(f_{1}, f_{2}, \ldots, f_{q}\right)$ in  Eq.~(\ref{corrected_eq});
    \ENDFOR
\ENDFOR
\STATE \textbf{Return} $y_{*} = \mathop{\arg\max}_{k\in\mathcal{Y}}~f_{k}(\bm{x}_{*})$;
\end{algorithmic}
\hspace*{0.02in} {\bf Output:} Predicted label $y_{*}$.
\end{algorithm}
\begin{figure*}[htbp]
  \centering
  \subfigure[MNIST]{
    \includegraphics[width=4cm]{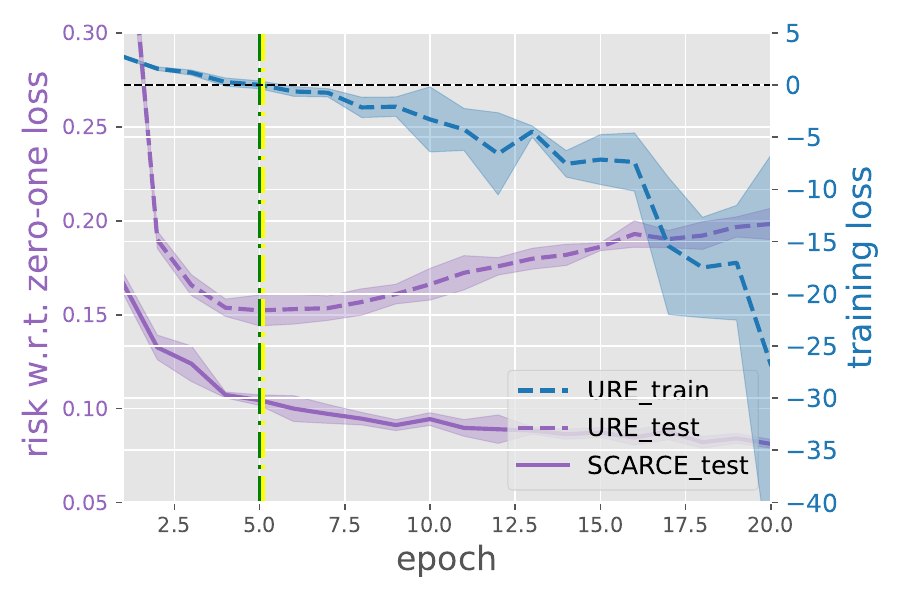}
  }
  \subfigure[Kuzushiji-MNIST]{
    \includegraphics[width=4cm]{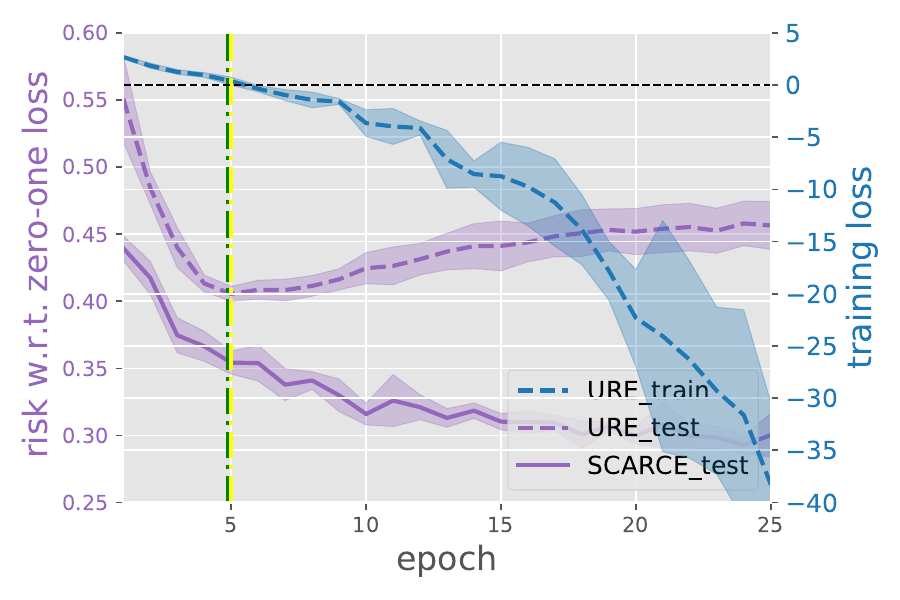}
  }
  \subfigure[Fashion-MNIST]{
    \includegraphics[width=4cm]{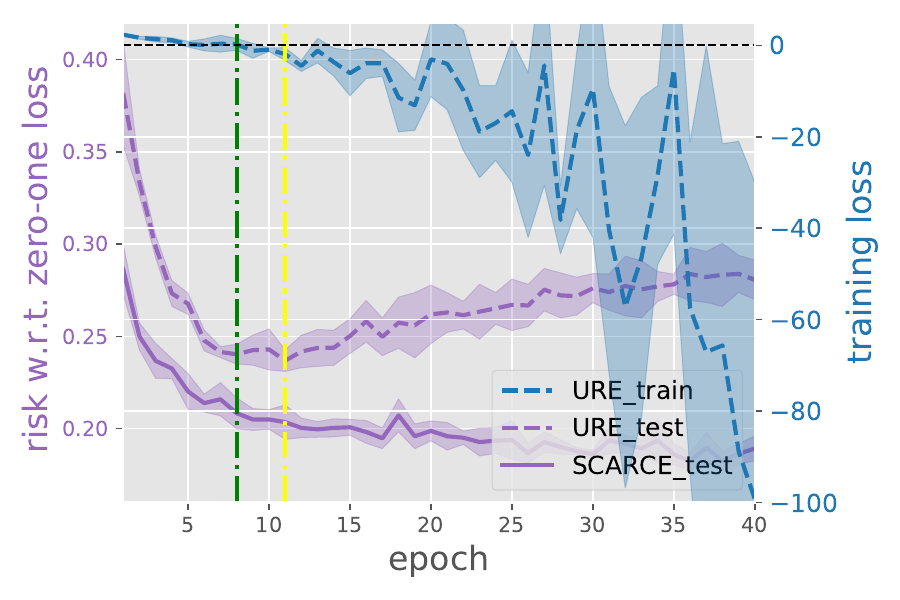}
  }
  \subfigure[CIFAR-10]{
    \includegraphics[width=4cm]{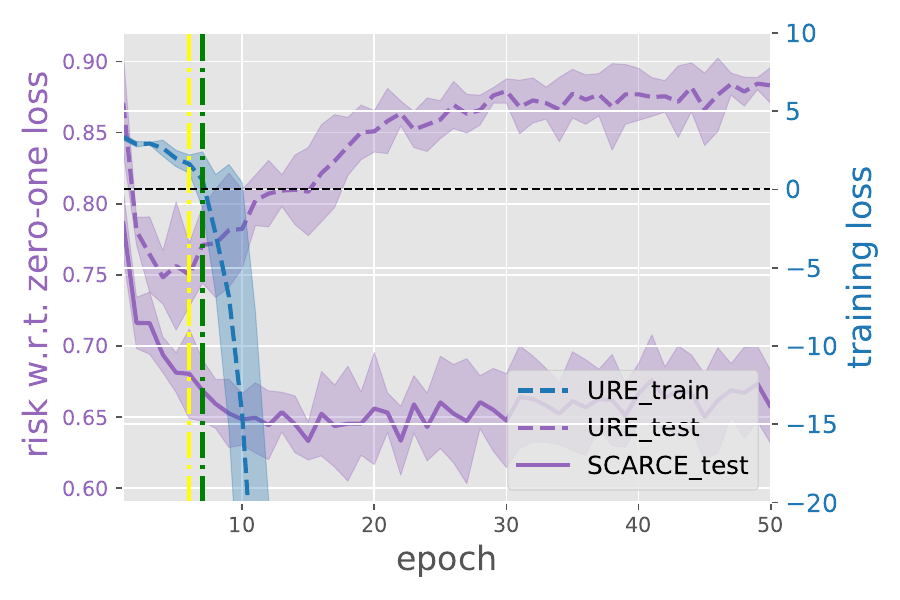}
  }
  \\
  \caption{Training curves and test curves of the method that minimizes the URE and test curves of our proposed risk-correction approach SCARCE. The green dashed lines indicate when the URE becomes negative while the yellow dashed lines indicate when the overfitting phenomena occur. The complementary labels are generated by following the uniform distribution assumption. ResNet is used as the model architecture for CIFAR-10 and MLP is used for other datasets.}\label{overfit_exp}
\end{figure*}
\subsection{Relation to Negative-Unlabeled Learning}\label{nu_analysis}
First, we provide the risk rewrite results of NU learning~\citep{niu2016theoretical}. Consider the binary classification problem with a class prior $\pi_{+}=p(y=+1)$. Suppose an NU dataset is generated by following the SCAR assumption in PU learning with the positive and negative classes swapped. Our goal is to learn a binary classifier $f^{\rm NU}$ from the NU dataset. Let $\bar{\pi}_{-}=p(\bar{y}=-1)$ denote the proportion of negative data in the entire NU dataset. Let $p(\bm{x}|\bar{y}=-1)$ and $p(\bm{x}|\bar{y}=0)$ denote the density of observed negative data and unlabeled data, respectively. 
\begin{lemma}\label{nu_risk}
Based on the assumptions above, the classification risk of binary classification can be equivalently expressed as
\begin{align}\label{nu_risk_rewrite}
&\mathbb{E}_{p(\bm{x}|\bar{y}=-1)}\left[(1-\pi_{+})\ell\left(-f^{\rm NU}(\bm{x})\right)+\left(\bar{\pi}_{-}+\pi_{+}-1\right)\right. \nonumber \\
&\left.\ell\left(f^{\rm NU}(\bm{x})\right)\right]+\mathbb{E}_{p(\bm{x}|\bar{y}=0)}\left[\left(1-\bar{\pi}_{-}\right)\ell\left(f^{\rm NU}(\bm{x})\right)\right]. 
\end{align}
\end{lemma}
We observe that the multi-class classification risk in Theorem~\ref{ure} is the sum of Eq.~(\ref{nu_risk_rewrite}) by considering each class as a positive class in turn. Furthermore, besides minimizing the NU classification risk $R_{k}(f_k)$, we can adopt any other PU learning approach~\citep{chen2020variational,garg2021mixture,li2022who,wang2023beyond,jiang2023positive,dai2023gradpu} to derive the binary classifier $f_{k}$ by swapping the positive class and the negative class. Finally, we can predict the label for a test instance as the class of the minimum model output, since the positive and negative classes are swapped.
Therefore, the proposal can be considered as a \emph{general framework} for solving complementary-label learning problems. Based on this finding, we propose a \emph{meta} complementary-label learning algorithm in Algorithm~\ref{scarce_meta_code}, and the proposed method SCARCE can be considered as an instantiation. In particular, when employing deep neural networks as the model architecture for PU learning algorithms, we can share the representation learning layers and use specific classification layers for different labels, which may allow training different classifiers simultaneously.
\begin{algorithm}[t]
\caption{SCARCE-Meta}\label{scarce_meta_code}
\noindent {\bf Input:} Complementary-label training set $\mathcal{D}$, PU learning algorithm $\mathcal{A}$, unseen instance $\bm{x}_{*}$, epoch $T_{\text{max}}$, iteration $I_{\text{max}}$, number of labels $q$.
\begin{algorithmic}
\FOR{$t = 1,2,\ldots,T_{\text{max}}$}
    \STATE \textbf{Shuffle} the complementary-label training set $\mathcal{D}$; 
    \FOR{$j = 1,\ldots,I_{\text{max}}$}
        \STATE \textbf{Fetch} a mini-batch $\mathcal{D}_j$ from $\mathcal{D}$;
        \FOR{$k = 1,\ldots,q$}
            \STATE \textbf{Construct} a negative dataset $\mathcal{D}^{\rm N}_{k}$ and an unlabeled dataset $\mathcal{D}^{\rm U}_{k}$ according to Eq.~(\ref{neg_binary}) and (\ref{unlabel_binary});
            \STATE \textbf{Train} a binary classifier $f_k \mapsfrom \mathcal{A}(\mathcal{D}^{\rm N}_{k}, \mathcal{D}^{\rm U}_{k})$ by regarding negative data as positive.
        \ENDFOR
    \ENDFOR
\ENDFOR
\STATE \textbf{Return} $y_{*} = \mathop{\arg\min}_{k\in\mathcal{Y}}~f_{k}(\bm{x}_{*})$;
\end{algorithmic}
\hspace*{0.02in} {\bf Output:} Predicted label $y_{*}$.
\end{algorithm}
\subsection{Theoretical Analysis}
\paragraph{Calibration.}We show that the proposed risk can be calibrated to the 0-1 loss \citep{zhang2004statistical}. Let $R_{\mathrm{0-1}}(f)=\mathbb{E}_{p(\bm{x},y)}\mathbb{I}(f(\bm{x})\neq y)$ denote the expected 0-1 loss where $f(\bm{x}) = \mathop{\arg\max}_{k\in\mathcal{Y}}~f_{k}(\bm{x})$ and $R_{\mathrm{0-1}}^{*}=\mathop{\min}_{f}~R_{\mathrm{0-1}}(f)$ denote the Bayes error. Besides, let $R^{*}= \mathop{\min}_{f_{1}, f_{2}, \ldots, f_{q}}~R(f_{1}, f_{2}, \ldots, f_{q})$ denote the minimum risk of the proposed risk. Then we have the following theorem~(its proof is given in Appendix~\ref{proof_calibration}).
\begin{theorem}\label{calibration}
Suppose the binary-class loss function $\ell$ is convex, bounded below, differential, and satisfies $\ell(z)\leq \ell(-z)$ when $z > 0$. Then we have that for any $\epsilon_{1} > 0$, there exists an $\epsilon_{2} > 0$ such that
\begin{equation}
R\left(f_{1}, f_{2}, \ldots, f_{q}\right) \leq R^{*} + \epsilon_{2} \Rightarrow R_{\mathrm{0-1}}(f) \leq R_{\mathrm{0-1}}^{*} + \epsilon_{1}.
\end{equation}
\end{theorem}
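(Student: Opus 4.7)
The plan is to prove a multi-class calibration result for the OVR surrogate $R$ in the spirit of Zhang (2004), by converting excess $\ell$-risk into excess 0-1 risk through a pointwise calibration function. I would proceed in three main steps.

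First, I would condition on $\bm{x}$. Writing $\eta_{k}(\bm{x})=p(y=k|\bm{x})$ and taking $\mathbb{E}_{y|\bm{x}}$ inside the risk in Eq.~(\ref{ordinary_risk}) yields
\begin{equation*}
R(f_{1},\ldots,f_{q})=\mathbb{E}_{\bm{x}}\bigl[W(\bm{x})\bigr],\qquad W(\bm{x})=\sum_{k=1}^{q}\bigl(\eta_{k}(\bm{x})\ell(f_{k}(\bm{x}))+(1-\eta_{k}(\bm{x}))\ell(-f_{k}(\bm{x}))\bigr).
\end{equation*}
Because the inner sum separates across $k$, the pointwise infimum is $W^{*}(\bm{x})=\sum_{k}h(\eta_{k}(\bm{x}))$ with $h(\eta)=\min_{z}[\eta\ell(z)+(1-\eta)\ell(-z)]$, attained at some $z^{*}(\eta)$.

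Second, I would establish Fisher consistency at the pointwise level. The hypothesis $\ell(z)\le\ell(-z)$ for $z>0$ combined with convexity forces $z^{*}(\eta)$ to have the same sign as $2\eta-1$: for $\eta>1/2$ and any $z<0$, $\eta\ell(z)+(1-\eta)\ell(-z)\ge\eta\ell(-z)+(1-\eta)\ell(z)$, so any minimizer must satisfy $z^{*}\ge 0$. Differentiability and convexity then give the first-order condition $\eta\ell'(z^{*})=(1-\eta)\ell'(-z^{*})$; since $\ell'$ is non-decreasing, a monotone-comparative-statics argument shows $z^{*}(\eta)$ is non-decreasing in $\eta$. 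Hence $\arg\max_{k}z^{*}(\eta_{k})=\arg\max_{k}\eta_{k}$, which is Fisher consistency.

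Finally, I would quantify this with a calibration function
\begin{equation*}
\psi(\varepsilon)=\inf\bigl\{W(\bm{x},z_{1},\ldots,z_{q})-W^{*}(\bm{x}):\ \eta_{y^{*}(\bm{x})}(\bm{x})-\eta_{\arg\max_{k}z_{k}}(\bm{x})\ge\varepsilon\bigr\},
\end{equation*}
where $y^{*}(\bm{x})=\arg\max_{k}\eta_{k}(\bm{x})$. Showing $\psi$ is non-negative, non-decreasing, and strictly positive on $(0,\infty)$ reduces to isolating a pair of indices $i,j$ with $\eta_{i}-\eta_{j}\ge\varepsilon$ but $z_{i}\le z_{j}$, and using convexity of the two binary inner risks together with the strict monotonicity of $z^{*}$ from Step~2 to lower bound the sum of their excess conditional risks. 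Replacing $\psi$ by its largest concave minorant $\psi^{**}$ and applying Jensen's inequality in the Bartlett--Jordan--McAuliffe fashion yields $\psi^{**}(R_{\rm 0-1}(f)-R_{\rm 0-1}^{*})\le R(f_{1},\ldots,f_{q})-R^{*}$, so given $\epsilon_{1}>0$ we take $\epsilon_{2}=\psi^{**}(\epsilon_{1})>0$. The main obstacle is the last step: turning the qualitative hypotheses on $\ell$ into a strictly positive, uniform-in-$\bm{x}$ lower bound on the pointwise excess risk as a function of the ranking-error margin, without invoking strict convexity.
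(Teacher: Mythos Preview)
Your approach is reasonable but takes a much longer route than the paper. The paper's proof is essentially a citation: since Theorem~\ref{ure} identifies $R(f_1,\ldots,f_q)$ with the ordinary one-versus-rest surrogate risk of Eq.~(\ref{ordinary_risk}), the claimed $\epsilon_1$--$\epsilon_2$ calibration is exactly the content of Theorems~3 and~10 of \citet{zhang2004statistical}, which the paper restates and invokes directly without further argument. You instead propose to re-derive those two theorems from scratch via the pointwise decomposition, monotonicity of $z^{*}(\eta)$, a calibration function $\psi$, and a Jensen-type argument in the Bartlett--Jordan--McAuliffe style. Your route buys self-containment and makes transparent where each hypothesis on $\ell$ enters; the paper's route buys brevity by outsourcing the generic OVR calibration entirely to the literature, leaving only the identification in Theorem~\ref{ure} as paper-specific work. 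Two technical cautions if you persist with the from-scratch route: first, to force $\psi(\varepsilon)>0$ for $\varepsilon>0$ in your Step~3 you actually need the \emph{strict} inequality $\ell(z)<\ell(-z)$ for $z>0$ (this is what Zhang's Theorem~10 assumes; the weak $\le$ in the stated hypothesis admits a constant $\ell$, for which the implication plainly fails), so the paper's statement is implicitly relying on the strict version; second, in your Jensen step you want the largest \emph{convex} minorant $\psi^{**}$, not concave, since the inequality you need is $E[\psi^{**}(X)]\ge\psi^{**}(E[X])$.
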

\begin{remark}
The infinite-sample consistency elucidates that the proposed risk can be calibrated to the 0-1 loss. Therefore, if we minimize the proposed risk and obtain the optimal classifier, the classifier also achieves the Bayes error.
\end{remark}
\paragraph{Estimation error bound.}We further elaborate the convergence property of the empirical risk estimator $\widehat{R}(f_{1}, f_{2}, \ldots, f_{q})$ by providing its estimation error bound. The optimal classifiers w.r.t.~$\widehat{R}(f_{1}, f_{2}, \ldots, f_{q})$ are 
\begin{equation}
\left(\widehat{f}_{1}, \widehat{f}_{2}, \ldots, \widehat{f}_{q}\right) = \mathop{\arg\min}_{f_{1}, f_{2}, \ldots, f_{q}\in\mathcal{F}}~\widehat{R}\left(f_{1}, f_{2}, \ldots, f_{q}\right).
\end{equation}
\begin{theorem}\label{eeb}
Based on the above assumptions, for any $\delta > 0$, the following inequality holds with probability at least $1 -\delta$:
\begin{align}
&R\left(\widehat{f}_{1}, \widehat{f}_{2}, \ldots, \widehat{f}_{q}\right) - R\left(f_{1}^{*},f_{2}^{*},\ldots,f_{q}^{*}\right)\leq \nonumber \\
&\sum_{k=1}^{q} \left(
(4-4\bar{\pi}_{k})L_{\ell}\mathfrak{R}_{n^{\rm U}_{k},p^{\rm U}_{k}}(\mathcal{F})+(1-\bar{\pi}_{k})C_{\ell}\sqrt{\frac{2\ln{\left(2/\delta\right)}}{n^{\rm U}_{k}}}\right. \nonumber \\
&+(8-8\pi_{k}-4\bar{\pi}_{k})L_{\ell}\mathfrak{R}_{n^{\rm N}_{k},p^{\rm N}_{k}}(\mathcal{F}) \nonumber \\
&\left.+(2-2\pi_{k}-\bar{\pi}_{k})C_{\ell}\sqrt{\frac{2\ln{\left(2/\delta\right)}}{n^{\rm N}_{k}}}\right),
\end{align}
where $\mathfrak{R}_{n^{\rm U}_{k},p^{\rm U}_{k}}(\mathcal{F})$ and $\mathfrak{R}_{n^{\rm N}_{k},p^{\rm N}_{k}}(\mathcal{F})$ denote the Rademacher complexity of $\mathcal{F}$ given $n^{\rm U}_{k}$ unlabeled data sampled from $p\left(\bm{x}|\bar{y}_{k}=0\right)$ and $n^{\rm N}_{k}$ negative data sampled from $p\left(\bm{x}|\bar{y}_{k}=1\right)$ respectively.
\end{theorem}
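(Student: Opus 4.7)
My plan is the standard uniform-deviation route, exploiting the class-by-class decomposition of the risk. Because $R(f_1,\dots,f_q)=\sum_{k=1}^q R_k(f_k)$ and $\widehat R(f_1,\dots,f_q)=\sum_{k=1}^q \widehat R_k(f_k)$ with each summand depending only on its own $f_k$, the empirical minimizer decouples coordinate-wise and the usual optimality sandwich yields
\begin{equation*}
R(\widehat f_1,\ldots,\widehat f_q)-R(f_1^\ast,\ldots,f_q^\ast)\;\leq\;2\sum_{k=1}^{q}\sup_{f_k\in\mathcal{F}}\bigl|\widehat R_k(f_k)-R_k(f_k)\bigr|.
\end{equation*}
Theorem~\ref{ure} guarantees each summand has mean zero, so it suffices to control each supremum and then sum.

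\textbf{Concentration.} For fixed $k$, $\widehat R_k$ is a sum of three empirical averages: two over $\mathcal{D}^{\rm N}_k$ with weights $(1-\pi_k)$ on $\ell(-f_k)$ and $(\bar\pi_k+\pi_k-1)$ on $\ell(f_k)$, and one over $\mathcal{D}^{\rm U}_k$ with weight $(1-\bar\pi_k)$ on $\ell(f_k)$. Under Assumption~\ref{scar}, $\bar\pi_k=c_k(1-\pi_k)\leq 1-\pi_k$, so $|\bar\pi_k+\pi_k-1|=1-\pi_k-\bar\pi_k$. Replacing one instance of $\mathcal{D}^{\rm N}_k$ therefore changes $\widehat R_k(f_k)$ by at most $(2-2\pi_k-\bar\pi_k)C_\ell/n^{\rm N}_k$, and replacing one instance of $\mathcal{D}^{\rm U}_k$ by at most $(1-\bar\pi_k)C_\ell/n^{\rm U}_k$. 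Two applications of McDiarmid's inequality (one per dataset, each at confidence $1-\delta/2$) then produce the two $\sqrt{2\ln(2/\delta)/n}$ terms in the statement.

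\textbf{From expected supremum to Rademacher complexity.} To bound $\mathbb{E}\sup_{f_k}|\widehat R_k-R_k|$, I would apply the standard symmetrization inequality separately to the $\mathcal{D}^{\rm N}_k$ and $\mathcal{D}^{\rm U}_k$ pieces, picking up a factor of $2$ in front of the Rademacher complexity of each loss-composed class. Since $\ell$ is $L_\ell$-Lipschitz, Talagrand's contraction lemma strips the loss and introduces a factor of $L_\ell$. The negative-data side contributes $2\bigl[(1-\pi_k)+(1-\pi_k-\bar\pi_k)\bigr]L_\ell\,\mathfrak{R}_{n^{\rm N}_k,p^{\rm N}_k}(\mathcal F)=2(2-2\pi_k-\bar\pi_k)L_\ell\,\mathfrak{R}_{n^{\rm N}_k,p^{\rm N}_k}(\mathcal F)$, and the unlabeled-data side contributes $2(1-\bar\pi_k)L_\ell\,\mathfrak{R}_{n^{\rm U}_k,p^{\rm U}_k}(\mathcal F)$. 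Multiplying by the leading factor of $2$ from the first display reproduces the coefficients $(8-8\pi_k-4\bar\pi_k)$ and $(4-4\bar\pi_k)$ of the theorem, and summation over $k$ finishes the proof.

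\textbf{Main obstacle.} The delicate step is the contraction on the negative-data term: the coefficient $(\bar\pi_k+\pi_k-1)$ is non-positive, so the two $\ell$-terms cannot be lumped into a single loss without tracking signs. I would either (i) apply contraction individually to $(1-\pi_k)\ell(-f_k)$ and $(1-\pi_k-\bar\pi_k)\ell(f_k)$, using the fact that Rademacher complexity is invariant under negation of the function class, or (ii) view their sum as a single function of $f_k$ whose effective Lipschitz constant is $(2-2\pi_k-\bar\pi_k)L_\ell$. A secondary technicality is that each example in $\mathcal{D}$ is recycled across all $q$ binary datasets, so the $q$ suprema share randomness; I would handle this by conditioning on the counts $\{n^{\rm N}_k,n^{\rm U}_k\}$ and invoking SCAR to argue that, conditionally, $\mathcal{D}^{\rm N}_k$ and $\mathcal{D}^{\rm U}_k$ are i.i.d.\ from $p(\bm x\mid\bar y_k=1)$ and $p(\bm x\mid\bar y_k=0)$, then absorb the union bound over $k$ into the $\ln(2/\delta)$ factor.
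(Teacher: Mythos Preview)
Your proposal is correct and follows essentially the same route as the paper: the optimality sandwich $R(\widehat f)-R(f^\ast)\leq 2\sup|R-\widehat R|$, McDiarmid's inequality with precisely the bounded-difference constants you computed, symmetrization, and Talagrand's contraction (applied term-by-term on the negative-data side, your option~(i)). One presentational difference is worth flagging: the paper does \emph{not} union-bound over $k$. Instead it explicitly assumes at the outset that all $2q$ datasets $\mathcal{D}^{\rm N}_k,\mathcal{D}^{\rm U}_k$ are mutually independent, and then applies a \emph{single} McDiarmid to the joint function $\sup_{f_1,\ldots,f_q}(R-\widehat R)$ of all the data, afterwards splitting the deviation term via $\sqrt{a+b}\leq\sqrt a+\sqrt b$. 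Your per-$k$ McDiarmid with a union bound over $k$ would inflate $\ln(2/\delta)$ to $\ln(2q/\delta)$ and hence would not reproduce the stated constants exactly; under the paper's independence assumption, the single-McDiarmid route is what yields the theorem as written, and the shared-randomness issue you raise is simply assumed away rather than handled.
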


\begin{remark}
Theorem~\ref{eeb} elucidates an estimation error bound of our proposed risk estimator. When $n^{\rm U}_{k}$ and $ n^{\rm N}_{k} \rightarrow \infty$, $R\left(\widehat{f}_{1}, \widehat{f}_{2}, \ldots, \widehat{f}_{q}\right) \rightarrow R\left(f_{1}^{*},f_{2}^{*},\ldots,f_{q}^{*}\right)$ because $\mathfrak{R}_{n^{\rm U}_{k},p^{\rm U}_{k}}(\mathcal{F}) \rightarrow 0$ and $ \mathfrak{R}_{n^{\rm N}_{k},p^{\rm N}_{k}}(\mathcal{F}) \rightarrow 0$ for all parametric models with a bounded norm such as deep neural networks with weight decay~\citep{golowich2018size}. Furthermore, the estimation error bound converges in $\mathcal{O}_{p}\left(\sum_{k=1}^{q}\left(1/\sqrt{n^{\rm N}_{k}}+1/\sqrt{n^{\rm U}_{k}}\right)\right)$, where $\mathcal{O}_{p}$ denotes the order in probability.
\end{remark}
\begin{table*}[t]
\small
\caption{Classification accuracy~(mean$\pm$std) of each method on MNIST. The best performance is shown in bold~(pairwise \emph{t}-test at the 0.05 significance level).}\label{res_mnist}
\centering
\begin{tabular}{l cc cc cc cc cc} 
\toprule[1pt]	 	
Setting &\multicolumn{2}{c}{Uniform}&\multicolumn{2}{c}{Biased-a}&\multicolumn{2}{c}{Biased-b}&\multicolumn{2}{c}{SCAR-a}&\multicolumn{2}{c}{SCAR-b}\\ 
\midrule Model&  MLP & LeNet & MLP & LeNet &MLP & LeNet& MLP & LeNet & MLP & LeNet\\
\midrule	PC& \makecell{71.11 \\ $\pm$\scriptsize{0.83 }}
& \makecell{82.69 \\ $\pm$\scriptsize{1.15 }}
& \makecell{69.29 \\ $\pm$\scriptsize{0.97 }}
& \makecell{87.82 \\ $\pm$\scriptsize{0.69 }}
& \makecell{71.59 \\ $\pm$\scriptsize{0.85 }}
& \makecell{87.66 \\ $\pm$\scriptsize{0.66 }}
& \makecell{66.97 \\ $\pm$\scriptsize{1.03 }}
&\makecell{11.00 \\ $\pm$\scriptsize{0.79 }}
& \makecell{57.67 \\ $\pm$\scriptsize{0.98 }}
& \makecell{49.17 \\ $\pm$\scriptsize{35.9 }} \\ 
\midrule NN& \makecell{67.75 \\ $\pm$\scriptsize{0.96 }}
& \makecell{86.16 \\ $\pm$\scriptsize{0.69 }}
& \makecell{30.59 \\ $\pm$\scriptsize{2.31 }}
& \makecell{46.27 \\ $\pm$\scriptsize{2.61 }}
& \makecell{38.50 \\ $\pm$\scriptsize{3.93 }}
& \makecell{63.67 \\ $\pm$\scriptsize{3.75 }}
& \makecell{67.39 \\ $\pm$\scriptsize{0.68 }}
&\makecell{86.58 \\ $\pm$\scriptsize{0.95 }}
& \makecell{63.95 \\ $\pm$\scriptsize{0.56 }}
& \makecell{79.94 \\ $\pm$\scriptsize{0.48 }} \\ 
\midrule GA& \makecell{88.00 \\ $\pm$\scriptsize{0.85 }}
& \makecell{96.02 \\ $\pm$\scriptsize{0.15 }}
& \makecell{65.97 \\ $\pm$\scriptsize{7.87 }}
& \makecell{94.55 \\ $\pm$\scriptsize{0.43 }}
& \makecell{75.77 \\ $\pm$\scriptsize{1.48 }}
& \makecell{94.87 \\ $\pm$\scriptsize{0.28 }}
& \makecell{62.62 \\ $\pm$\scriptsize{2.29 }}
&\makecell{90.23 \\ $\pm$\scriptsize{0.92 }}
& \makecell{56.91 \\ $\pm$\scriptsize{2.08 }}
& \makecell{78.66 \\ $\pm$\scriptsize{0.61 }} \\ 
\midrule L-UW& \makecell{73.49 \\ $\pm$\scriptsize{0.88 }}
& \makecell{77.74 \\ $\pm$\scriptsize{0.97 }}
& \makecell{39.63 \\ $\pm$\scriptsize{0.57 }}
& \makecell{32.21 \\ $\pm$\scriptsize{1.20 }}
& \makecell{42.77 \\ $\pm$\scriptsize{1.42 }}
& \makecell{34.57 \\ $\pm$\scriptsize{1.90 }}
& \makecell{35.08 \\ $\pm$\scriptsize{1.59 }}
&\makecell{33.82 \\ $\pm$\scriptsize{2.44 }}
& \makecell{30.24 \\ $\pm$\scriptsize{1.81 }}
& \makecell{24.28 \\ $\pm$\scriptsize{2.74 }} \\ 
\midrule L-W& \makecell{62.24 \\ $\pm$\scriptsize{0.50 }}
& \makecell{63.04 \\ $\pm$\scriptsize{1.58 }}
& \makecell{36.90 \\ $\pm$\scriptsize{0.34 }}
& \makecell{29.25 \\ $\pm$\scriptsize{0.94 }}
& \makecell{41.55 \\ $\pm$\scriptsize{0.63 }}
& \makecell{32.98 \\ $\pm$\scriptsize{2.25 }}
& \makecell{33.53 \\ $\pm$\scriptsize{2.08 }}
&\makecell{26.02 \\ $\pm$\scriptsize{1.31 }}
& \makecell{28.99 \\ $\pm$\scriptsize{2.38 }}
& \makecell{23.69 \\ $\pm$\scriptsize{2.94 }} \\ 
\midrule OP& \makecell{78.87 \\ $\pm$\scriptsize{0.46 }}
& \makecell{88.76 \\ $\pm$\scriptsize{1.68 }}
& \makecell{73.46 \\ $\pm$\scriptsize{0.71 }}
& \makecell{85.96 \\ $\pm$\scriptsize{1.02 }}
& \makecell{74.16 \\ $\pm$\scriptsize{0.52 }}
& \makecell{87.23 \\ $\pm$\scriptsize{1.31 }}
& \makecell{76.29 \\ $\pm$\scriptsize{0.23 }}
&\makecell{86.94 \\ $\pm$\scriptsize{1.94 }}
& \makecell{68.12 \\ $\pm$\scriptsize{0.51 }}
& \makecell{71.67 \\ $\pm$\scriptsize{2.30 }} \\ 
\midrule SCARCE& \textbf{\makecell{91.27  \\ $\pm$\scriptsize{0.20  }}}
& \textbf{\makecell{ 97.00 \\ $\pm$\scriptsize{0.30  }}}
& \textbf{\makecell{88.14  \\ $\pm$\scriptsize{0.70  }}}
& \textbf{\makecell{96.14  \\ $\pm$\scriptsize{0.32  }}}
& \textbf{\makecell{89.51  \\ $\pm$\scriptsize{0.44  }}}
& \textbf{\makecell{96.62  \\ $\pm$\scriptsize{0.10  }}}
& \textbf{\makecell{90.98  \\ $\pm$\scriptsize{0.27  }}}
&\textbf{\makecell{96.72  \\ $\pm$\scriptsize{0.16  }}}
& \textbf{\makecell{81.85  \\ $\pm$\scriptsize{0.25  }}}
& \textbf{\makecell{87.05  \\ $\pm$\scriptsize{0.28  }}} \\ 
\bottomrule[1pt]
\end{tabular}
\end{table*}
\begin{table*}[t]
\vspace{-15pt}
\small
\caption{Classification accuracy~(mean$\pm$std) of each method on Kuzushiji-MNIST. The best performance is shown in bold~(pairwise \emph{t}-test at the 0.05 significance level).
}\label{res_kmnist}
\centering
\begin{tabular}{l cc cc cc cc cc} 
\toprule[1pt]   
Setting &\multicolumn{2}{c}{Uniform}&\multicolumn{2}{c}{Biased-a}&\multicolumn{2}{c}{Biased-b}&\multicolumn{2}{c}{SCAR-a}&\multicolumn{2}{c}{SCAR-b}\\ 
\midrule Model&  MLP & LeNet & MLP & LeNet &MLP & LeNet& MLP & LeNet & MLP & LeNet\\
\midrule PC& \makecell{42.93 \\ $\pm$\scriptsize{0.33 }}
& \makecell{56.79 \\ $\pm$\scriptsize{1.54 }}
& \makecell{41.60 \\ $\pm$\scriptsize{0.97 }}
& \makecell{67.39 \\ $\pm$\scriptsize{1.04 }}
& \makecell{42.53 \\ $\pm$\scriptsize{0.80 }}
& \makecell{66.81 \\ $\pm$\scriptsize{1.33 }}
& \makecell{39.58 \\ $\pm$\scriptsize{1.35 }}
&\makecell{42.59 \\ $\pm$\scriptsize{29.8}}
& \makecell{33.95 \\ $\pm$\scriptsize{1.14 }}
& \makecell{37.67 \\ $\pm$\scriptsize{25.3}} \\ 
\midrule NN& \makecell{39.42 \\ $\pm$\scriptsize{0.68 }}
& \makecell{58.57 \\ $\pm$\scriptsize{1.15 }}
& \makecell{23.97 \\ $\pm$\scriptsize{2.53 }}
& \makecell{31.10 \\ $\pm$\scriptsize{2.95 }}
& \makecell{29.93 \\ $\pm$\scriptsize{1.80 }}
& \makecell{48.72 \\ $\pm$\scriptsize{2.89 }}
& \makecell{39.31 \\ $\pm$\scriptsize{1.18 }}
&\makecell{56.84 \\ $\pm$\scriptsize{2.10 }}
& \makecell{38.68 \\ $\pm$\scriptsize{0.58 }}
& \makecell{56.70 \\ $\pm$\scriptsize{1.08 }} \\ 
\midrule GA& \makecell{60.83 \\ $\pm$\scriptsize{1.37 }}
& \makecell{76.17 \\ $\pm$\scriptsize{0.44 }}
& \makecell{43.22 \\ $\pm$\scriptsize{3.03 }}
& \textbf{\makecell{75.04 \\ $\pm$\scriptsize{0.92 }}}
& \makecell{48.03 \\ $\pm$\scriptsize{2.93 }}
& \textbf{\makecell{77.05 \\ $\pm$\scriptsize{1.67 }}}
& \makecell{36.56 \\ $\pm$\scriptsize{2.96 }}
&\makecell{59.16 \\ $\pm$\scriptsize{3.30 }}
& \makecell{33.02 \\ $\pm$\scriptsize{2.31 }}
& \makecell{52.92 \\ $\pm$\scriptsize{2.39 }} \\ 
\midrule L-UW& \makecell{43.00 \\ $\pm$\scriptsize{1.20 }}
& \makecell{49.31 \\ $\pm$\scriptsize{1.95 }}
& \makecell{27.89 \\ $\pm$\scriptsize{0.51 }}
& \makecell{25.82 \\ $\pm$\scriptsize{0.78 }}
& \makecell{31.53 \\ $\pm$\scriptsize{0.42 }}
& \makecell{30.05 \\ $\pm$\scriptsize{1.63 }}
& \makecell{21.49 \\ $\pm$\scriptsize{0.57 }}
&\makecell{19.71 \\ $\pm$\scriptsize{1.44 }}
& \makecell{18.36 \\ $\pm$\scriptsize{1.23 }}
& \makecell{16.67 \\ $\pm$\scriptsize{1.86 }} \\ 
\midrule L-W& \makecell{37.21 \\ $\pm$\scriptsize{0.59 }}
& \makecell{42.69 \\ $\pm$\scriptsize{2.54 }}
& \makecell{26.75 \\ $\pm$\scriptsize{0.61 }}
& \makecell{25.86 \\ $\pm$\scriptsize{0.64 }}
& \makecell{30.10 \\ $\pm$\scriptsize{0.57 }}
& \makecell{27.94 \\ $\pm$\scriptsize{1.68 }}
& \makecell{21.22 \\ $\pm$\scriptsize{0.77 }}
&\makecell{18.28 \\ $\pm$\scriptsize{2.11 }}
& \makecell{18.41 \\ $\pm$\scriptsize{1.66 }}
& \makecell{16.25 \\ $\pm$\scriptsize{1.51 }} \\ 
\midrule OP& \makecell{51.78 \\ $\pm$\scriptsize{0.41 }}
& \makecell{65.94 \\ $\pm$\scriptsize{1.38 }}
& \makecell{45.66 \\ $\pm$\scriptsize{0.90 }}
& \makecell{65.59 \\ $\pm$\scriptsize{1.71 }}
& \makecell{47.47 \\ $\pm$\scriptsize{1.26 }}
& \makecell{64.65 \\ $\pm$\scriptsize{1.68 }}
& \makecell{49.95 \\ $\pm$\scriptsize{0.79 }}
&\makecell{59.93 \\ $\pm$\scriptsize{1.38 }}
& \makecell{42.72 \\ $\pm$\scriptsize{0.95 }}
& \makecell{56.36 \\ $\pm$\scriptsize{2.15 }} \\ 
\midrule SCARCE& \textbf{\makecell{67.95 \\ $\pm$\scriptsize{1.29  }}}
& \textbf{\makecell{79.81  \\ $\pm$\scriptsize{1.19  }}}
& \textbf{\makecell{62.43  \\ $\pm$\scriptsize{1.02  }}}
& \textbf{\makecell{75.99  \\ $\pm$\scriptsize{0.91  }}}
& \textbf{\makecell{64.98  \\ $\pm$\scriptsize{0.72  }}}
& \textbf{\makecell{78.53  \\ $\pm$\scriptsize{0.57  }}}
& \textbf{\makecell{66.72  \\ $\pm$\scriptsize{0.69  }}}
&\textbf{\makecell{78.27  \\ $\pm$\scriptsize{1.09  }}}
& \textbf{\makecell{61.78  \\ $\pm$\scriptsize{0.36  }}}
& \textbf{\makecell{72.03  \\ $\pm$\scriptsize{0.45  }}} \\ 
\bottomrule[1pt]   
\end{tabular}
\end{table*}
\begin{table*}[htbp]
\small
\caption{Classification accuracy~(mean$\pm$std) of each method on Fashion-MNIST. The best performance is shown in bold~(pairwise \emph{t}-test at the 0.05 significance level).
}\label{res_fashion}
\centering
\begin{tabular}{l cc cc cc cc cc} 
\toprule[1pt]   
Setting &\multicolumn{2}{c}{Uniform}&\multicolumn{2}{c}{Biased-a}&\multicolumn{2}{c}{Biased-b}&\multicolumn{2}{c}{SCAR-a}&\multicolumn{2}{c}{SCAR-b}\\ 
\midrule Model&  MLP & LeNet & MLP & LeNet &MLP & LeNet& MLP & LeNet & MLP & LeNet\\
\midrule PC& \makecell{64.82 \\ $\pm$\scriptsize{1.27 }}
& \makecell{69.56 \\ $\pm$\scriptsize{1.82 }}
& \makecell{61.14 \\ $\pm$\scriptsize{1.09 }}
& \makecell{72.89 \\ $\pm$\scriptsize{1.26 }}
& \makecell{61.20 \\ $\pm$\scriptsize{0.79 }}
& \makecell{73.04 \\ $\pm$\scriptsize{1.38 }}
& \makecell{63.08 \\ $\pm$\scriptsize{0.88 }}
&\makecell{23.28 \\ $\pm$\scriptsize{29.7 }}
& \makecell{47.23 \\ $\pm$\scriptsize{2.38 }}
& \makecell{37.53 \\ $\pm$\scriptsize{25.2}} \\ 
\midrule NN& \makecell{63.89 \\ $\pm$\scriptsize{0.92 }}
& \makecell{70.34 \\ $\pm$\scriptsize{1.09 }}
& \makecell{25.66 \\ $\pm$\scriptsize{2.12 }}
& \makecell{36.93 \\ $\pm$\scriptsize{3.86 }}
& \makecell{30.75 \\ $\pm$\scriptsize{0.96 }}
& \makecell{40.88 \\ $\pm$\scriptsize{3.71 }}
& \makecell{63.47 \\ $\pm$\scriptsize{0.70 }}
&\makecell{70.83 \\ $\pm$\scriptsize{0.87 }}
& \makecell{55.96 \\ $\pm$\scriptsize{1.55 }}
& \makecell{63.06 \\ $\pm$\scriptsize{1.38 }} \\ 
\midrule GA& \makecell{77.04 \\ $\pm$\scriptsize{0.95 }}
& \makecell{81.91 \\ $\pm$\scriptsize{0.43 }}
& \makecell{50.04 \\ $\pm$\scriptsize{4.30 }}
& \makecell{74.73 \\ $\pm$\scriptsize{0.96 }}
& \makecell{49.02 \\ $\pm$\scriptsize{5.76 }}
& \makecell{75.66 \\ $\pm$\scriptsize{1.10 }}
& \makecell{54.74 \\ $\pm$\scriptsize{3.04 }}
&\makecell{74.75 \\ $\pm$\scriptsize{1.17 }}
& \makecell{44.75 \\ $\pm$\scriptsize{3.04 }}
& \makecell{60.01 \\ $\pm$\scriptsize{1.47 }} \\ 
\midrule L-UW& \textbf{\makecell{80.29 \\ $\pm$\scriptsize{0.44 }}}
& \makecell{72.43 \\ $\pm$\scriptsize{2.07 }}
& \makecell{40.26 \\ $\pm$\scriptsize{2.49 }}
& \makecell{29.46 \\ $\pm$\scriptsize{1.70 }}
& \makecell{43.55 \\ $\pm$\scriptsize{1.61 }}
& \makecell{33.53 \\ $\pm$\scriptsize{1.35 }}
& \makecell{35.71 \\ $\pm$\scriptsize{1.50 }}
&\makecell{30.73 \\ $\pm$\scriptsize{1.64 }}
& \makecell{31.43 \\ $\pm$\scriptsize{2.98 }}
& \makecell{22.03 \\ $\pm$\scriptsize{3.62 }} \\ 
\midrule L-W& \makecell{75.14 \\ $\pm$\scriptsize{0.40 }}
& \makecell{61.89 \\ $\pm$\scriptsize{0.88 }}
& \makecell{39.87 \\ $\pm$\scriptsize{0.95 }}
& \makecell{27.57 \\ $\pm$\scriptsize{1.70 }}
& \makecell{42.02 \\ $\pm$\scriptsize{1.41 }}
& \makecell{32.69 \\ $\pm$\scriptsize{0.68 }}
& \makecell{31.86 \\ $\pm$\scriptsize{2.16 }}
&\makecell{27.37 \\ $\pm$\scriptsize{2.30 }}
& \makecell{30.26 \\ $\pm$\scriptsize{1.68 }}
& \makecell{21.61 \\ $\pm$\scriptsize{2.12 }} \\ 
\midrule OP& \makecell{69.03 \\ $\pm$\scriptsize{0.71 }}
& \makecell{71.28 \\ $\pm$\scriptsize{0.94 }}
& \makecell{62.93 \\ $\pm$\scriptsize{1.25 }}
& \makecell{70.82 \\ $\pm$\scriptsize{1.15 }}
& \makecell{62.25 \\ $\pm$\scriptsize{0.36 }}
& \makecell{68.94 \\ $\pm$\scriptsize{2.78 }}
& \makecell{66.29 \\ $\pm$\scriptsize{0.60 }}
&\makecell{69.52 \\ $\pm$\scriptsize{1.18 }}
& \makecell{56.55 \\ $\pm$\scriptsize{1.39 }}
& \makecell{56.39 \\ $\pm$\scriptsize{3.03 }} \\ 
\midrule SCARCE& \textbf{\makecell{80.44  \\ $\pm$\scriptsize{0.19  }}}
& \textbf{\makecell{82.74  \\ $\pm$\scriptsize{0.39  }}}
& \textbf{\makecell{70.08  \\ $\pm$\scriptsize{2.53  }}}
& \textbf{\makecell{79.74  \\ $\pm$\scriptsize{1.10  }}}
& \textbf{\makecell{71.97  \\ $\pm$\scriptsize{1.09  }}}
& \textbf{\makecell{80.43  \\ $\pm$\scriptsize{0.69  }}}
& \textbf{\makecell{79.75  \\ $\pm$\scriptsize{0.60  }}}
&\textbf{\makecell{82.55  \\ $\pm$\scriptsize{0.30  }}}
& \textbf{\makecell{71.16  \\ $\pm$\scriptsize{0.66  }}}
& \textbf{\makecell{ 72.79 \\ $\pm$\scriptsize{0.62  }}} \\ 
\bottomrule[1pt]   
\end{tabular}
\end{table*}
\begin{table*}[t]
\vspace{-15pt}
\footnotesize
\caption{Classification accuracy~(mean$\pm$std) of each method on CLCIFAR-10 and CLCIFAR-20. The best performance is shown in bold~(pairwise \emph{t}-test at the 0.05 significance level).
}\label{real_res}
\setlength{\tabcolsep}{3pt}
\centering
\begin{tabular}{llcccccccccc} 
\toprule[1pt]	 	
\multicolumn{1}{c}{Dataset} & \multicolumn{1}{c}{Model} & CC & PRODEN & EXP & MAE & Phuber-CE & LWS & CAVL & IDGP & POP & SCARCE \\ 
\midrule \multirow{3.3}{*}{CLCIFAR-10} & ResNet &  \makecell{31.56 \\ $\pm$\scriptsize{2.17 }}
& \makecell{26.37 \\ $\pm$\scriptsize{0.98 }}
& \makecell{34.84 \\ $\pm$\scriptsize{4.19 }}
& \makecell{19.48 \\ $\pm$\scriptsize{2.88 }}
& \textbf{\makecell{41.13 \\ $\pm$\scriptsize{0.74 }}}
& \makecell{13.05 \\ $\pm$\scriptsize{4.18 }}
& \makecell{24.12 \\ $\pm$\scriptsize{3.32 }}
&\makecell{10.00 \\ $\pm$\scriptsize{0.00 }}
&\makecell{26.75  \\ $\pm$\scriptsize{1.28  }}
& \textbf{\makecell{42.04 \\ $\pm$\scriptsize{0.96 }}} \\ \cmidrule{2-12} & DenseNet &  \makecell{37.03 \\ $\pm$\scriptsize{1.77 }}
& \makecell{31.31 \\ $\pm$\scriptsize{1.06 }}
& \textbf{\makecell{43.27 \\ $\pm$\scriptsize{1.33 }}}
& \makecell{22.77 \\ $\pm$\scriptsize{0.22 }}
& \makecell{39.92 \\ $\pm$\scriptsize{0.91 }}
& \makecell{10.00 \\ $\pm$\scriptsize{0.00 }}
& \makecell{25.31 \\ $\pm$\scriptsize{4.06 }}
&\makecell{10.00 \\ $\pm$\scriptsize{0.00 }}
&\makecell{31.45  \\ $\pm$\scriptsize{1.16  }}
& \textbf{\makecell{44.41 \\ $\pm$\scriptsize{0.43 }}} \\ \midrule \multirow{3.3}{*}{CLCIFAR-20} & ResNet &  \makecell{5.00 \\ $\pm$\scriptsize{0.00 }}
& \makecell{6.69 \\ $\pm$\scriptsize{0.31 }}
& \makecell{7.21 \\ $\pm$\scriptsize{0.17 }}
& \makecell{5.00 \\ $\pm$\scriptsize{0.00 }}
& \makecell{8.10 \\ $\pm$\scriptsize{0.18 }}
& \makecell{5.20 \\ $\pm$\scriptsize{0.45 }}
& \makecell{5.00 \\ $\pm$\scriptsize{0.00 }}
&\makecell{4.96 \\ $\pm$\scriptsize{0.09 }}
&\makecell{6.40  \\ $\pm$\scriptsize{0.33  }}
& \textbf{\makecell{20.08 \\ $\pm$\scriptsize{0.62 }}} \\ \cmidrule{2-12} & DenseNet &  \makecell{5.00 \\ $\pm$\scriptsize{0.00 }}
& \makecell{5.00 \\ $\pm$\scriptsize{0.00 }}
& \makecell{7.51 \\ $\pm$\scriptsize{0.91 }}
& \makecell{5.67 \\ $\pm$\scriptsize{1.49 }}
& \makecell{7.22 \\ $\pm$\scriptsize{0.39 }}
& \makecell{5.00 \\ $\pm$\scriptsize{0.00 }}
& \makecell{5.09 \\ $\pm$\scriptsize{0.13 }}
&\makecell{5.00 \\ $\pm$\scriptsize{0.00 }}
&\makecell{5.00  \\ $\pm$\scriptsize{0.00  }}
& \textbf{\makecell{19.91 \\ $\pm$\scriptsize{0.68 }}} \\
\bottomrule[1pt]
\end{tabular}
\end{table*}
\subsection{Risk-Correction Approach}
Although the unbiased risk estimator~(URE) has sound theoretical properties, we have found that it can encounter several overfitting problems when using complex models such as deep neural networks. The training curves and test curves of the method that works by minimizing the URE in Eq.~(\ref{ure_eq}) are shown in Figure~\ref{overfit_exp}. We can observe that the overfitting phenomena often occur almost simultaneously when the training loss becomes negative. We conjecture the overfitting problems are related with the negative terms in Eq.~(\ref{ure_eq})~\citep{kiryo2017positive,lu2020mitigating,cao2021learning}. Therefore, following~\citet{lu2020mitigating,wang2023binary}, we wrap each potentially negative term with a \emph{non-negative risk-correction function} $g(z)$, such as the absolute value function $g(z)=|z|$. For ease of notation, we introduce 
\begin{align}
\widehat{R}^{\rm P}_{k}(f_{k})=&\frac{\bar{\pi}_{k}+\pi_{k}-1}{n^{\rm N}_{k}}\sum_{i=1}^{n^{\rm N}_{k}}\ell\left(f_{k}(\bm{x}_{k,i}^{\mathrm{N}})\right) \nonumber \\
&+\frac{1-\bar{\pi}_{k}}{n^{\rm U}_{k}}\sum_{i=1}^{n^{\rm U}_{k}}\ell\left(f_{k}(\bm{x}_{k,i}^{\mathrm{U}})\right).
\end{align}
Then, the corrected risk estimator can be written as $\widetilde{R}\left(f_{1}, f_{2}, \ldots, f_{q}\right)=\sum_{k=1}^{q}\widetilde{R}_{k}(f_{k})$, where
\begin{equation} \label{corrected_eq}
\widetilde{R}_{k}(f_{k})=g\left(\widehat{R}^{\rm P}_{k}(f_{k})\right)+\frac{1-\pi_{k}}{n^{\rm N}_{k}}\sum_{i=1}^{n^{\rm N}_{k}}\ell\left(-f_{k}(\bm{x}_{k,i}^{\mathrm{N}})\right).
\end{equation}
It is obvious that Eq.~(\ref{corrected_eq}) is an upper bound of Eq.~(\ref{ure_eq}), so the bias is always present. Next, we perform a theoretical analysis to clarify that the corrected risk estimator is \emph{biased but consistent}. Since $\mathbb{E}\left[\widehat{R}^{\rm P}_{k}(f_{k})\right]=\pi_{k}\mathbb{E}_{p(\bm{x}|y=k)}\ell\left(f_{k}(\bm{x})\right)$ is non-negative, we assume that there exists a \emph{non-negative constant} $\beta$ such that for $\forall k\in\mathcal{Y}, \mathbb{E}\left[\widehat{R}^{\rm P}_{k}(f_{k})\right] \geq \beta$. We also assume that the risk-correction function $g(z)$ is Lipschitz continuous with a Lipschitz constant $L_{g}$. Besides, we assume that there exists some constant $C_{\mathfrak{R}}$ such that the Rademacher complexity $\mathfrak{R}_{n,p}(\mathcal{F})$ for unlabeled~(with $n=n^{\rm U}_{k},p=p^{\rm U}_{k}$) and negative data~(with $n=n^{\rm N}_{k},p=p^{\rm N}_{k}$) satisfies $\mathfrak{R}_{n,p}(\mathcal{F}) \leq C_{\mathfrak{R}}/\sqrt{n}$. This assumption holds for many models, such as fully connected neural networks and linear-in-parameter models with a bounded norm~\citep{golowich2018size,lu2020mitigating}. 
We introduce $\left(\widetilde{f}_{1}, \widetilde{f}_{2}, \ldots, \widetilde{f}_{q}\right) = \mathop{\arg\min}_{f_{1}, f_{2}, \ldots, f_{q}\in\mathcal{F}}~\widetilde{R}\left(f_{1}, f_{2}, \ldots, f_{q}\right)$ and $\Delta_{k}=\exp{\left(-2\beta^{2}/\left((1 - \pi_{k}-\bar{\pi}_{k})^{2}C_{\ell}^{2}/n^{\rm N}_{k}+(1-\bar{\pi}_{k})^{2}C_{\ell}^{2}/n^{\rm U}_{k}\right)\right)}$. Then we have the following theorems~(the proofs are given in Appendix~\ref{proof_bias_and_consistency} and \ref{proof_corrected_eeb} respectively).
\begin{theorem}\label{bias_and_consistency}
Based on the above assumptions, the bias of the expectation of the corrected risk estimator has the following lower and upper bounds: 
\begin{align}\label{bias_ineq}
0&\leq\mathbb{E}[\widetilde{R}(f_{1}, f_{2}, \ldots, f_{q})]-R(f_{1}, f_{2}, \ldots, f_{q}) \nonumber \\
&\leq \sum_{k=1}^{q}\left(2-2\bar{\pi}_{k}-\pi_{k}\right)\left(L_{g}+1\right)C_{\ell}\Delta_{k}.
\end{align}
Furthermore, for any $\delta > 0$, the following inequality holds with probability at least $1-\delta$:
\begin{align}
&|\widetilde{R}(f_{1}, f_{2}, \ldots, f_{q})-R(f_{1}, f_{2}, \ldots, f_{q})| \nonumber \\
&\leq\mathcal{O}_{p}\left(\sum_{k=1}^{q}\left(1/\sqrt{n^{\rm N}_{k}}+1/\sqrt{n^{\rm U}_{k}}\right)\right).
\end{align}
\end{theorem}
\begin{theorem}\label{corrected_eeb}
Based on the above assumptions, for any $\delta>0$, the following inequality holds with probability at least $1-\delta$:
\begin{align}
&R(\widetilde{f}_{1}, \widetilde{f}_{2}, \ldots, \widetilde{f}_{q}) - R(f_{1}^{*},f_{2}^{*},\ldots,f_{q}^{*}) \nonumber \\
&\leq\mathcal{O}_{p}\left(\sum_{k=1}^{q}\left(1/\sqrt{n^{\rm N}_{k}}+1/\sqrt{n^{\rm U}_{k}}\right)\right). 
\end{align}
\end{theorem}
\begin{remark}
Theorem~\ref{bias_and_consistency} shows that $\widetilde{R}(f_{1}, f_{2}, \ldots, f_{q})\rightarrow R(f_{1}, f_{2}, \ldots, f_{q})$ as $n^{\rm U}_{k}$ and $ n^{\rm N}_{k} \rightarrow \infty$, indicating that the corrected risk estimator is biased but consistent. An estimation error bound is also shown in Theorem~\ref{corrected_eeb}. The convergence rate of the estimation error bound is still the same after employing the risk-correction function.
\end{remark}
\section{Experiments}

In this section, we validate the effectiveness of SCARCE through extensive experiments.\footnote{Our implementation of SCARCE is available at \url{https://github.com/wwangwitsel/SCARCE}.}
\subsection{Experiments on Synthetic Benchmark Datasets}\label{exp_results_syn_section}
We conducted experiments on synthetic benchmark datasets, including MNIST~\citep{lecun1998gradient}, Kuzushiji-MNIST~\citep{clanuwat2018deep}, Fashion-MNIST~\citep{xiao2017fashion}, and CIFAR-10~\citep{krizhevsky2009learning}. We considered various generation processes of complementary labels by following the uniform, biased, and SCAR assumptions. We evaluated the classification performance of SCARCE against six single complementary-label learning methods, including PC~\citep{ishida2017learning}, NN~\citep{ishida2019complementary}, GA~\citep{ishida2019complementary}, L-UW~\citep{gao2021discriminative}, L-W~\citep{gao2021discriminative}, and OP~\citep{liu2023consistent}. The logistic loss was adopted to instantiate the binary-class loss function $l$, and the absolute value function was used as the risk-correction function $g$ for SCARCE. All the methods were implemented in PyTorch~\citep{paszke2019pytorch}. We used the Adam optimizer~\citep{kingma2015adam}. For a fair comparison, we use the same hyperparameters for all methods. The learning rate and batch size were fixed to 1e-3 and 256 for all the datasets, respectively. The weight decay was 1e-3 for CIFAR-10 and 1e-5 for the other three datasets. The number of epochs was set to 200, and we recorded the mean accuracy in the last ten epochs. Details of the datasets, models, and method descriptions can be found in Appendix~\ref{exp_setup}.
We assumed that the class priors were accessible to the learning algorithm. We randomly generated complementary labels five times with different seeds and recorded the mean accuracy and standard deviations. In addition, a pairwise \emph{t}-test at the 0.05 significance level is performed to show whether the performance advantages are significant.  

Tables~\ref{res_mnist}, \ref{res_kmnist}, and \ref{res_fashion} show the classification performance of each method with different models and generation settings of complementary labels on MNIST, Kuzushiji-MNIST, and Fashion-MNIST respectively. The experimental result on CIFAR-10 is shown in Appendix~\ref{more_res}. We can observe that: a) Out of 40 cases of different distributions and datasets, SCARCE achieves the best performance in 39 cases, which clearly validates its effectiveness. b) Some consistent approaches based on the uniform distribution assumption can achieve comparable or better performance for the ``uniform'' setting. For example, GA outperforms SCARCE on CIFAR-10. However, its performance drops significantly on other distribution settings. 
\subsection{Experiments on Real-World Benchmark Datasets}
We also verified the effectiveness of SCARCE on two real-world complementary-label datasets CLCIFAR-10 and CLCIFAR-20~\citep{wang2023clcifar}. The datasets were annotated by human annotators from Amazon Mechanical Turk~(MTurk). The distribution of complementary labels is too complex to be captured by any of the above assumptions. Moreover, the complementary labels may be \emph{noisy}, which means that the complementary labels may be annotated as ground-truth labels by mistake. There are three human-annotated complementary labels for each example, so they can be considered as multiple complementary-label datasets. We evaluated the classification performance of SCARCE against nine multiple complementary-label learning or partial-label learning methods, including CC~\citep{feng2020provably}, PRODEN~\citep{lv2020progressive}, EXP~\citep{feng2020learning}, MAE~\citep{feng2020learning}, Phuber-CE~\citep{feng2020learning}, LWS~\citep{wen2021leveraged}, CAVL~\citep{zhang20222exploiting}, IDGP~\citep{qiao2023decompositional}, and POP~\citep{xu2023progressive}. For a fair comparison, we set the same hyperparameters for all the methods as those of CIFAR-10 in Section~\ref{exp_results_syn_section}. Details of the models and method descriptions can be found in Appendix~\ref{exp_setup}. We found that the performance of some approaches was unstable with different network initialization, so we randomly initialized the network five times with different seeds and recorded the mean accuracy and standard deviations. 

Table~\ref{real_res} shows the experimental results on CLCIFAR-10 and CLCIFAR-20 with different models. We can observe that: a) SCARCE achieves the best performance in all cases, further confirming its effectiveness. b) The superiority is even more evident on CLCIFAR-20, a more complex dataset with extremely limited supervision. It demonstrates the advantages of SCARCE in dealing with real-world datasets. 

\subsection{Further Analysis}\label{exp_further_res}

\begin{figure}[htbp]
\centering
\subfigure[Comparison between Different Instantiations of SCARCE]{
    \includegraphics[width=3.9cm]{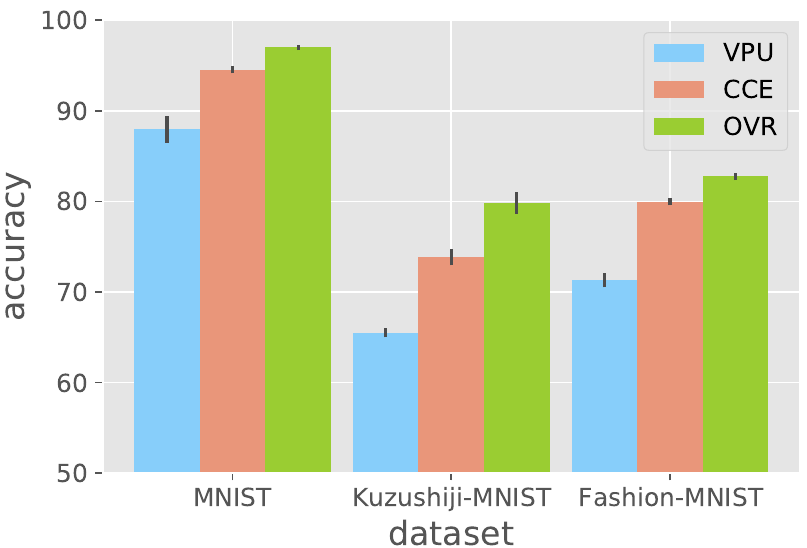}
}
\subfigure[Sensitivity Analysis]{
    \includegraphics[width=3.9cm]{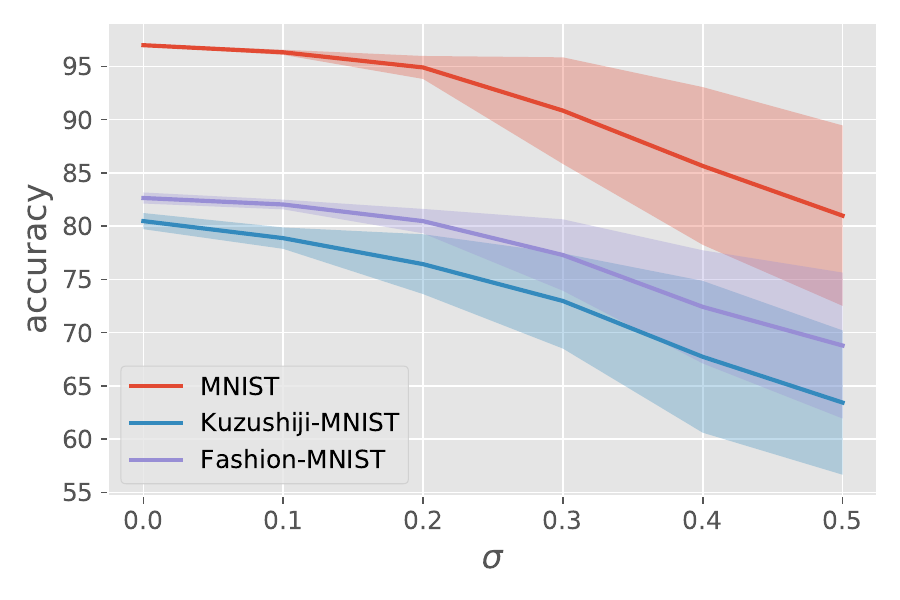}
}
\caption{(a) Classification accuracy of different instantiations of SCARCE on different datasets.~(b) Classification accuracy given inaccurate class priors.}\label{sensitivity}
\end{figure}
\paragraph{Comparison between different instantiations of SCARCE.}In Theorem~\ref{ure_cce}, any multi-class loss function can be used to instantiate $\mathcal{L}$. Therefore, we also investigated the classification performance of the cross-entropy loss~(CCE). Furthermore, we adopted the risk-correction approach to mitigate overfitting problems. We also included another instantiation of the meta-algorithm in Section~\ref{nu_analysis}. We used VPU~\citep{chen2020variational} as the PU learning approach. For a fair comparison, we did not use the mixup loss. We compared them with the default instantiation of SCARCE, i.e. the OVR loss, and Figure~\ref{sensitivity}~(a) shows the experimental results. We generated complementary labels with the uniform distribution assumption and used LeNet as the model architecture. We can observe that the OVR loss outperforms CCE and VPU. We conjecture that the inferior performance of CCE may be related to its unboundedness~\citep{ghosh2017robust,zhang2018generalized,wei2023mitigating}.
\paragraph{Sensitivity analysis.}We investigated the influence of inaccurate class priors on the classification performance of SCARCE. Specifically, let $\bar{\pi}_{k}=\xi_{k}\pi_{k}$ denote the corrupted class prior probability for the $k$-th class where $\xi_{k}$ is sampled from a normal distribution $\mathcal{N}(1, \sigma^2)$. 
We further normalized the obtained corrupted class priors to ensure that they sum up to one. 
Figure~\ref{sensitivity}~(b) shows the classification performance given inaccurate class priors using the uniform generation process and LeNet as the model architecture. From Figure~\ref{sensitivity}~(b), we can see that the performance is still satisfactory with small perturbations of the class priors. However, the performance will degenerate if the class priors deviate too much from the ground-truth values. 

\section{Conclusion}
In this paper, we proposed the first attempt towards consistent complementary-label learning without relying on the uniform distribution assumption or an ordinary-label training set to estimate the transition matrix in non-uniform cases. Based on a more practical distribution assumption, a consistent approach was proposed with theoretical guarantees. We also observed that complementary-label learning could be expressed as a set of NU classification problems when using the OVR strategy. Extensive experimental results on benchmark datasets validated the effectiveness of our proposed approach. 
\section*{Acknowledgements}
The authors thank Hsuan-Tien Lin, Deng-Bao Wang, and Yi Gao for helpful comments and discussions. The authors thank the anonymous reviewers for their helpful comments and suggestions. WW was supported by the SGU MEXT Scholarship, by the Junior Research Associate~(JRA) program of RIKEN, and by Microsoft Research Asia. TI was supported by KAKENHI Grant Number 22K17946. MS was supported by Institute for AI and Beyond, UTokyo,
and by a grant from Apple, Inc. Any views, opinions, findings, and conclusions or recommendations
expressed in this material are those of the authors and should not be
interpreted as reflecting the views, policies or position, either
expressed or implied, of Apple Inc.
\section*{Impact Statement}
This paper presents work that aims to advance the field of machine learning. There are many potential societal consequences of our work, none of which we feel need to be highlighted here.

\bibliographystyle{icml2024}
\bibliography{conu}
\newpage
\appendix
\onecolumn
\section{Class-Prior Estimation}\label{cpe_apd}
When the class priors $\pi_{k}$ are not accessible to the learning algorithm, they can be estimated by off-the-shelf mixture proportion estimation approaches~\citep{ramaswamy2016mixture,scott2015rate,garg2021mixture,yao2022rethinking}. In this section, we discuss the problem formulation and how to adapt a state-of-the-art class-prior estimation method to our problem as an example.
\paragraph{Mixture proportion estimation.} Let $F$ be a mixture distribution of two component distributions $G$ and $H$ with a proportion $\theta^{*}$, i.e., 
\begin{equation}
F=(1-\theta^{*})G+\theta^{*}H. \nonumber
\end{equation}
The task of mixture proportion estimation problems is to estimate $\theta^{*}$ given training examples sampled from $F$ and $H$. For PU learning, we consider $F=p(\bm{x})$, $G=p(\bm{x}|y=-1)$, and $H=p(\bm{x}|y=+1)$. Then, the estimation of $\theta^{*}$ corresponds to the estimation of the class prior $p(y=+1)$. It is shown that $\theta^{*}$ cannot be identified without any additional assumptions~\citep{scott2013classification, scott2015rate}. Hence, various assumptions have been proposed to ensure the identifiability, including the irreducibility assumption~\citep{scott2013classification}, the anchor point assumption~\citep{scott2015rate,liu2015classification}, the separability assumption~\citep{ramaswamy2016mixture}, etc. 

\paragraph{Best Bin Estimation.} We use Best Bin Estimation~(BBE)~\citep{garg2021mixture} as the base algorithm for class-prior estimation because it can achieve nice performance with simple implementations. First, they split the PU data into PU training data $\mathcal{D}^{\rm PTr}=\left\{\left(\bm{x}_{i}^{\rm PTr},+1\right)\right\}_{i=1}^{n^{\rm PTr}}$ and $\mathcal{D}^{\rm UTr}=\left\{\bm{x}_{i}^{\rm UTr}\right\}_{i=1}^{n^{\rm UTr}}$, and PU validation data $\mathcal{D}^{\rm PVal}=\left\{\left(\bm{x}^{\rm PVal}_{i},+1\right)\right\}_{i=1}^{n^{\rm PVal}}$ and $\mathcal{D}^{\rm UVal}=\left\{\bm{x}^{\rm UVal}_{i}\right\}_{i=1}^{n^{\rm UVal}}$. Then, they train a positive-versus-unlabeled~(PvU) classifier $f^{\rm PvU}$ with $\mathcal{D}^{\rm PTr}$ and $\mathcal{D}^{\rm UTr}$. They collect the model outputs of the PU validation data $\mathcal{Z}^{\rm P}=\left\{z^{\rm P}_{i}\right\}_{i=1}^{n^{\rm PVal}}$ and $\mathcal{Z}^{\rm U}=\left\{z^{\rm U}_{i}\right\}_{i=1}^{n^{\rm UVal}}$ where $z^{\rm P}_{i}=f^{\rm PvU}\left(\bm{x}^{\rm PVal}_{i}\right)$ and $z^{\rm U}_{i}=f^{\rm PvU}\left(\bm{x}^{\rm UVal}_{i}\right)$. They also introduce $q(z)=\int_{A_z}p(\bm{x})\diff\bm{x}$ where $A_z=\left\{\bm{x}\in\mathcal{X}|f^{\rm PvU}(\bm{x})\geq z\right\}$. Then, $q(z)$ can be regarded as the proportion of data with the model output not less than $z$. For $p\left(\bm{x}|y=+1\right)$ and $p\left(\bm{x}\right)$, they define $q^{\rm P}(z)$ and $q^{\rm U}(z)$ respectively. They empirically estimate them as
\begin{equation}
\widehat{q}^{\rm P}(z)=\frac{\sum_{i=1}^{n^{\rm PVal}}\mathbb{I}\left(f^{\rm PvU}\left(\bm{x}^{\rm PVal}_{i}\right)\geq z \right)}{n^{\rm PVal}} \textrm{~~and~~ } \widehat{q}^{\rm U}(z)=\frac{\sum_{i=1}^{n^{\rm UVal}}\mathbb{I}\left(f^{\rm PvU}\left(\bm{x}^{\rm UVal}_{i}\right)\geq z \right)}{n^{\rm UVal}}.
\end{equation}
Then, they obtain $\widehat{z}$ as
\begin{equation}
\widehat{z} = \mathop{\arg\max}_{z\in\left[0,1\right]}\left(\frac{\widehat{q}^{\rm U}(z)}{\widehat{q}^{\rm P}(z)}+\frac{1+\gamma}{\widehat{q}^{\rm P}(z)}\left(\sqrt{\frac{\ln{\left(4/\delta\right)}}{2n^{\rm PVal}}}+\sqrt{\frac{\ln{\left(4/\delta\right)}}{2n^{\rm UVal}}}\right)\right)
\end{equation}
where $\gamma$ and $\delta$ are hyperparameters respectively. Finally, they calculate the estimation value of the mixture proportion as 
\begin{equation}
\widehat{\theta}=\frac{\widehat{q}^{\rm U}\left(\widehat{z}\right)}{\widehat{q}^{\rm P}\left(\widehat{z}\right)}
\end{equation}
and they prove that $\widehat{\theta}$ is an unbiased estimator of $\theta^{*}$ when satisfying the \emph{pure positive bin assumption}, a variant of the irreducibility assumption. More detailed descriptions of the approach can be found in~\citet{garg2021mixture}. 
\paragraph{Class-prior estimation for SCARCE.}Our class-prior estimation approach is based on BBE. First, we split the complementary-label data into training and validation data. Then, we generate $q$ negative binary-class datasets $\mathcal{D}^{\rm NTr}_{k}$ and $q$ unlabeled binary-class datasets $\mathcal{D}^{\rm UTr}_{k}$ by Eq.~(\ref{neg_binary}) and Eq.~(\ref{unlabel_binary}) with training data~($k\in\mathcal{Y}$). We also generate $q$ negative binary-class datasets $\mathcal{D}^{\rm NVal}_{k}$ and $q$ unlabeled binary-class datasets $\mathcal{D}^{\rm UVal}_{k}$ by Eq.~(\ref{neg_binary}) and Eq.~(\ref{unlabel_binary}) with validation data~($k\in\mathcal{Y}$). Then, we estimate the class priors $(1-\pi_{k})$ for each label $k\in\mathcal{Y}$ using BBE adapted by swapping the positive and negative classes. Finally, we normalize $\pi_{k}$ to ensure that they sum up to one. The details of the algorithm are summarized in Algorithm~\ref{cpe_algo}. 
\begin{algorithm}
\caption{Class-prior Estimation}
\label{cpe_algo}
\noindent {\bf Input:} Complementary-label training set $\mathcal{D}$. 
\begin{algorithmic}
\FOR{$k\in\mathcal{Y}$}
\STATE {\bf Generate} training datasets $\mathcal{D}^{\rm NTr}_{k}$, $\mathcal{D}^{\rm UTr}_{k}$, validation data $\mathcal{D}^{\rm NVal}_{k}$, and $\mathcal{D}^{\rm UVal}_{k}$ by Eq.~(\ref{neg_binary}) and Eq.~(\ref{unlabel_binary});
\STATE {\bf Estimate} the value of $(1-\pi_{k})$ by employing the BBE algorithm and interchanging the positive and negative classes;
\ENDFOR
\STATE {\bf Normalize} $\pi_{k}$ to ensure they sum up to one;
\end{algorithmic}
\hspace*{0.02in} {\bf Output:} Class priors $\pi_{k}$~($k\in\mathcal{Y}$).
\end{algorithm}
\paragraph{Experimental results.}We assumed that the ground-truth class priors for all labels and datasets were 0.1, meaning that the test set was balanced. We generated complementary labels using the SCAR assumption with $\bar{\pi}_{k}=0.5$. We repeated the generation process 5 times with different random seeds. Table~\ref{cpe_res} shows the experimental results of the proposed class-prior estimation approach. We can observe that the class priors are generally accurately estimated with the proposed class-prior estimation method.
\begin{table*}[htbp]
\footnotesize
\caption{Estimated values~(mean$\pm$std) of class priors.}\label{cpe_res}
\centering
\setlength{\tabcolsep}{3pt}
\begin{tabular}{l cc cc cc cc cc} 

\toprule[1pt]	 	
 Label Index & 1 & 2 & 3& 4&5 \\ 
\midrule MNIST & 0.104$\pm$0.011 & 0.119$\pm$0.012 &  0.110$\pm$0.009 & 0.099$\pm$0.008 & 0.101$\pm$0.010   \\ 
\midrule Kuzushiji-MNIST &  0.108$\pm$0.026 & 0.098$\pm$0.011 & 0.087$\pm$0.012 & 0.104$\pm$0.004 & 0.101$\pm$0.021   \\ 
\midrule Fashion-MNIST & 0.091$\pm$0.016 & 0.118$\pm$0.005 & 0.090$\pm$0.024 & 0.090$\pm$0.009 & 0.077$\pm$0.020   \\ 
\midrule CIFAR-10 & 0.085$\pm$0.016 & 0.102$\pm$0.039 & 0.073$\pm$0.019 & 0.109$\pm$0.047 & 0.100$\pm$0.031 \\ 
\bottomrule[1pt]
\toprule[1pt]	 	
 Label Index & 6 & 7 & 8& 9&10 \\ 
\midrule MNIST & 0.087$\pm$0.007 & 0.089$\pm$0.005 & 0.106$\pm$0.019 & 0.091$\pm$0.008 & 0.096$\pm$0.016   \\  
\midrule Kuzushiji-MNIST & 0.095$\pm$0.010 & 0.105$\pm$0.025 & 0.095$\pm$0.007 & 0.094$\pm$0.016 & 0.113$\pm$0.035   \\ 
\midrule Fashion-MNIST & 0.117$\pm$0.007 & 0.070$\pm$0.010 & 0.114$\pm$0.023 & 0.117$\pm$0.016 & 0.117$\pm$0.016   \\ 
\midrule CIFAR-10 & 0.098$\pm$0.013 & 0.115$\pm$0.023 & 0.120$\pm$0.033 & 0.097$\pm$0.041 & 0.100$\pm$0.013 \\ 
\bottomrule[1pt]
\end{tabular}
\end{table*}
\section{Discussions}
\subsection{Related Work on PU Learning}
PU learning methods can be roughly categorized into two groups: sample selection methods and cost-sensitive methods. Sample selection methods try to identify reliable negative examples from the unlabeled dataset and then use supervised learning methods to learn the classifier~\citep{wang2023beyond,dai2023gradpu,garg2021mixture}. Cost-sensitive methods are based on an unbiased risk estimator, which rewrites the classification risk as that only on positive and unlabeled data~\citep{kiryo2017positive,jiang2023positive,zhao2022distpu}.
\subsection{Data Generation Process}\label{data_generation_process}
The generation process of complementary labels is summarized in Algorithm~\ref{data_gen_code}.
\begin{algorithm}[htbp]
\caption{Data Generation Process}
\label{data_gen_code}
\noindent {\bf Input:} Marginal density $p(\bm{x})$, Posterior probability distribution $p(y|\bm{x})$, number of training data $n$, probabilities $c_{k} (k\in\mathcal{Y})$. 
\begin{algorithmic}
\STATE {\bf Initialize} a complementary-label dataset $\mathcal{D}=\emptyset$;
\FOR{$i = 1,2,\ldots,n$}
\STATE {\bf Sample} an unlabeled example $\bm{x}_i$ from $p(\bm{x})$;
\STATE {\bf Sample} the label $y_i$ from $p(y|\bm{x}_i)$;
\STATE {\bf Initialize} $\bar{Y}_i=\emptyset$; 
\FOR{$k\in\mathcal{Y}$} 
\IF{$y_i \neq k$}
\STATE {\bf Assign} $\bar{Y}_i=\bar{Y}_i\cup\{k\}$ with $c_{k}$;
\ENDIF
\ENDFOR
\STATE {\bf Assign} $\mathcal{D}=\mathcal{D}\cup\{(\bm{x}_i,\bar{Y}_i)\}$;
\ENDFOR
\end{algorithmic}
\hspace*{0.02in} {\bf Output:} Complementary-label dataset $\mathcal{D}$.
\end{algorithm}
\subsection{Use of Ordinary-Label Data}
If an additional ordinary-label training set $\mathcal{D}^{\mathrm O}=\{(\bm{x}^{\mathrm O}_{i},y^{\mathrm O}_{i})\}_{i=1}^{n^{\mathrm O}}$ consisting of $n^{\mathrm O}$ training examples is available, we can include it in the training as well. Let $\widehat{R}^{\mathrm O}\left(f_{1}, f_{2}, \ldots, f_{q}\right) = \sum_{i=1}^{n^{\mathrm O}}\left(\ell\left(f_{y^{\mathrm O}_{i}}(\bm{x}^{\mathrm O}_{i})\right) + \sum_{k\in{\mathcal{Y}\backslash \{y^{\mathrm O}_{i}\}}}\ell\left(-f_{k}(\bm{x}^{\mathrm O}_{i})\right)\right)/n^{\mathrm O}$ denote the empirical risk estimator calculated on $\mathcal{D}^{\mathrm O}$, it is easy to see that $\alpha\widehat{R}\left(f_{1}, f_{2}, \ldots, f_{q}\right)+(1-\alpha)\widehat{R}^{\mathrm O}\left(f_{1}, f_{2}, \ldots, f_{q}\right)$ is still an \emph{unbiased risk estimator} w.r.t. Eq.~(\ref{ordinary_risk_ovr}) where $\alpha\in\left[0,1\right]$ is a weight.
\subsection{Summary of Assumptions}
In this section, we briefly summarize the assumptions used in this paper.
\begin{itemize}[leftmargin=1em, itemsep=1pt, topsep=1pt, parsep=-1pt]
\item The SCAR assumption, which is used to describe the data generation process.
\item The irreducibility assumption, which is used to allow the estimation of class priors.
\item The assumptions used to prove the classification-calibrated property in Theorem~\ref{calibration}, i.e. $\ell$ is convex, bounded below, differential, and satisfies $\ell(z)\leq \ell(-z)$ when $z > 0$.
\item The assumptions used to prove the estimation error bounds in Theorems~\ref{eeb}, \ref{bias_and_consistency}, and \ref{corrected_eeb}, e.g., $\sup_{f\in\mathcal{F}}\|f\|_{\infty} \leq C_{f}$, $\sup_{|z|\leq C_{f}}\ell(z) \leq C_{\ell}$, $\ell(z)$ is Lipschitz continuous, $\forall k\in\mathcal{Y}, \mathbb{E}\left[\widehat{R}^{\rm P}_{k}(f_{k})\right] \geq \beta$, the risk-correction function $g(z)$ is Lipschitz continuous, $\mathfrak{R}_{n,p}(\mathcal{F}) \leq C_{\mathfrak{R}}/\sqrt{n}$, etc.
\end{itemize}
\subsection{Limitations}
Our method is designed for instance-independent settings, where the generation of complementary labels is determined only by the ground-truth label and is not related to the feature. Moreover, if we adopt some PU learning methods that cannot share the representation layers for different labels, we may need additional storage and computational cost.
\section{Proof of Theorem~\ref{ure_cce}}\label{proof_ure_cce}
First, we introduce the following lemma.
\begin{lemma}\label{density_equality}
Under Assumption~\ref{scar}, we have $p\left(\bm{x}|\bar{y}_{k}=1\right)=p(\bm{x}|y\neq k)$.
\end{lemma}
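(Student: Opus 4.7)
The plan is to unwind both sides via Bayes' rule, use the fact that $\bar{y}_k = 1$ (i.e., $k \in \bar{Y}$) implies $y \neq k$, and then invoke the SCAR assumption to cancel the dependence on $\bm{x}$.

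More concretely, I would first observe that the event $\{k \in \bar{Y}\}$ is contained in $\{y \neq k\}$, since a complementary label for an example cannot coincide with its ground-truth label. Hence the joint density factors as
\begin{equation*}
p(\bm{x}, \bar{y}_k = 1) \;=\; p(\bm{x}, \bar{y}_k = 1, y \neq k) \;=\; p(\bar{y}_k = 1 \mid \bm{x}, y \neq k)\, p(\bm{x}, y \neq k).
\end{equation*}
By Assumption \ref{scar}, the first factor on the right equals $c_k = \bar{\pi}_k/(1-\pi_k)$, which is independent of $\bm{x}$. Writing $p(\bm{x}, y \neq k) = p(\bm{x} \mid y \neq k)\,p(y \neq k) = (1-\pi_k)\, p(\bm{x} \mid y \neq k)$, I get
\begin{equation*}
p(\bm{x}, \bar{y}_k = 1) \;=\; \bar{\pi}_k \, p(\bm{x} \mid y \neq k).
\end{equation*}
Dividing both sides by $p(\bar{y}_k = 1) = \bar{\pi}_k$ yields the claimed identity $p(\bm{x} \mid \bar{y}_k = 1) = p(\bm{x} \mid y \neq k)$.

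The argument is essentially a one-line application of SCAR, so there is no real obstacle; the only subtle point to state cleanly is the inclusion $\{\bar{y}_k = 1\} \subseteq \{y \neq k\}$, which is what lets us insert the conditioning event $y \neq k$ into the SCAR conditional probability. Once that is established, all remaining manipulations are straightforward substitutions using $c_k = \bar{\pi}_k/(1-\pi_k)$ and the definitions $\pi_k = p(y=k)$, $\bar{\pi}_k = p(\bar{y}_k = 1)$.
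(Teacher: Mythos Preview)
Your proof is correct and uses essentially the same ingredients as the paper's proof: the inclusion $\{\bar{y}_k=1\}\subseteq\{y\neq k\}$ and the SCAR assumption to cancel the $\bm{x}$-dependence. The paper organizes the argument by introducing the intermediate quantity $p(\bm{x}\mid \bar{y}_k=1, y\neq k)$ and showing it equals each side separately, whereas you compute the joint $p(\bm{x},\bar{y}_k=1)$ directly and divide by $\bar{\pi}_k$; this is a cosmetic difference only.
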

\begin{proof}
On one hand, we have 
\begin{equation}
p(\bm{x}|\bar{y}_{k}=1,y\neq k)=\frac{p\left(\bm{x}|\bar{y}_{k}=1\right)p(y\neq k|\bm{x}, \bar{y}_{k}=1)}{p(y\neq k|\bar{y}_{k}=1)}. \nonumber
\end{equation}
According to the definition of complementary labels, we have $p(y\neq k|\bm{x}, \bar{y}_{k}=1)=p(y\neq k|\bar{y}_{k}=1)=1$. Therefore, we have $p(\bm{x}|\bar{y}_{k}=1,y\neq k)=p\left(\bm{x}|\bar{y}_{k}=1\right)$.
On the other hand, we have 
\begin{equation}
p(\bm{x}|\bar{y}_{k}=1,y\neq k)=\frac{p(\bm{x}|y\neq k)p(\bar{y}_{k}=1|\bm{x}, y\neq k)}{p(\bar{y}_{k}=1|y\neq k)}=p(\bm{x}|y\neq k), \nonumber
\end{equation}
where the first equation is derived from Assumption~\ref{scar}. The proof is completed.
\end{proof}
Then, the proof of Theorem~\ref{ure_cce} is given.

\begin{proof}[Proof of Theorem~\ref{ure_cce}]
\begin{align}
R(f)=& \mathbb{E}_{p(\bm{x},y)}\left[\mathcal{L}(f(\bm{x}), y)\right] \nonumber \\
=&\sum_{k=1}^{q}\left(\pi_{k}\mathbb{E}_{p(\bm{x}|y=k)}\left[\mathcal{L}(f(\bm{x}), k)\right]\right)\nonumber \\
=&\sum_{k=1}^{q}\left(\mathbb{E}_{p(\bm{x})}\left[\mathcal{L}(f(\bm{x}), k)\right]-(1-\pi_{k})\mathbb{E}_{p(\bm{x}|y\neq k)}\left[\mathcal{L}(f(\bm{x}), k)\right]\right)\nonumber \\
=&\sum_{k=1}^{q}\left(\mathbb{E}_{p(\bm{x})}\left[\mathcal{L}(f(\bm{x}), k)\right]-(1-\pi_{k})\mathbb{E}_{p(\bm{x}|\bar{y}_{k}=1)}\left[\mathcal{L}(f(\bm{x}), k)\right]\right)\nonumber \\
=&\sum_{k=1}^{q}\left(\mathbb{E}_{p\left(\bm{x}|\bar{y}_{k}=1\right)}\left[\left(\bar{\pi}_{k}+\pi_{k}-1\right)\mathcal{L}(f(\bm{x}), k)\right]+\mathbb{E}_{p\left(\bm{x}|\bar{y}_{k}=0\right)}\left[\left(1-\bar{\pi}_{k}\right)\mathcal{L}(f(\bm{x}), k)\right]\right), \nonumber
\end{align}
which concludes the proof.
\end{proof}
\section{Proof of Theorem~\ref{ure}}\label{proof_ure}
\begin{proof}
\begin{align}
R(f_{1}, f_{2}, \ldots, f_{q}) =& \mathbb{E}_{p(\bm{x},y)}\left[\ell\left(f_{y}\left(\bm{x}\right)\right) + \sum_{k=1, k\neq y}^{q}\ell\left(-f_{k}\left(\bm{x}\right)\right)\right] \nonumber \\
=&\mathbb{E}_{p(\bm{x},y)}\left[\sum_{k=1}^{q}\left(\mathbb{I}(k=y)\ell(f_{k}(\bm{x}))+\mathbb{I}(k\neq y)\ell(-f_{k}(\bm{x}))\right)\right] \nonumber \\
=&\sum_{k=1}^{q}\mathbb{E}_{p(\bm{x},y)}\left[\mathbb{I}(k=y)\ell\left(f_{k}(\bm{x})\right)+\mathbb{I}(k\neq y)\ell\left(-f_{k}(\bm{x})\right)\right] \nonumber \\
=&\sum_{k=1}^{q}\left(\pi_{k}\mathbb{E}_{p(\bm{x}|y=k)}\left[\ell\left(f_{k}(\bm{x})\right)\right]+\left(1-\pi_{k}\right)\mathbb{E}_{p(\bm{x}|y\neq k)}\left[\ell\left(-f_{k}(\bm{x})\right)\right]\right) \nonumber \\
=&\sum_{k=1}^{q}\left(\mathbb{E}_{p(\bm{x})}\left[\ell(f_{k}(\bm{x}))\right] - (1-\pi_{k})\mathbb{E}_{p(\bm{x}|y\neq k)}\left[\ell(f_{k}(\bm{x}))\right]\right. \nonumber \\
&\left.+(1-\pi_{k})\mathbb{E}_{p(\bm{x}|y\neq k)}\left[\ell(-f_{k}(\bm{x}))\right]\right) \nonumber \\
=&\sum_{k=1}^{q}\left(\mathbb{E}_{p(\bm{x})}\left[\ell(f_{k}(\bm{x}))\right] - (1-\pi_{k})\mathbb{E}_{p\left(\bm{x}|\bar{y}_{k}=1\right)}\left[\ell(f_{k}(\bm{x}))\right]\right. \nonumber \\
&\left.+(1-\pi_{k})\mathbb{E}_{p\left(\bm{x}|\bar{y}_{k}=1\right)}\left[\ell(-f_{k}(\bm{x}))\right]\right) \nonumber \\
=&\sum_{k=1}^{q}\left(\bar{\pi}_{k}\mathbb{E}_{p\left(\bm{x}|\bar{y}_{k}=1\right)}\left[\ell(f_{k}(\bm{x}))\right] + (1-\bar{\pi}_{k})\mathbb{E}_{p\left(\bm{x}|\bar{y}_{k}=0\right)}\left[\ell(f_{k}(\bm{x}))\right]\right. \nonumber \\
&\left.-(1-\pi_{k})\mathbb{E}_{p\left(\bm{x}|\bar{y}_{k}=1\right)}\left[\ell(f_{k}(\bm{x}))\right]+(1-\pi_{k})\mathbb{E}_{p\left(\bm{x}|\bar{y}_{k}=1\right)}\left[\ell(-f_{k}(\bm{x}))\right]\right) \nonumber \\
=&\sum_{k=1}^{q}\left(\mathbb{E}_{p\left(\bm{x}|\bar{y}_{k}=1\right)}\left[(1-\pi_{k})\ell\left(-f_{k}(\bm{x})\right)+\left(\bar{\pi}_{k}+\pi_{k}-1\right)\ell\left(f_{k}(\bm{x})\right)\right]\right. \nonumber \\
&\left.+\mathbb{E}_{p\left(\bm{x}|\bar{y}_{k}=0\right)}\left[\left(1-\bar{\pi}_{k}\right)\ell\left(f_{k}(\bm{x})\right)\right]\right). \nonumber
\end{align}
Here, $\mathbb{I}(\cdot)$ returns 1 if predicate holds. Otherwise, $\mathbb{I}(\cdot)$ returns 0. The proof is completed.
\end{proof}
\section{Proof of Lemma~\ref{nu_risk}}
\begin{proof}
Under the binary classification setting, we have the marginal density $p(\bm{x})=\pi_{+} p(\bm{x}|y=+1)+(1-\pi_{+})p(\bm{x}|y=-1)$, where $p(\bm{x}|y=+1)$ and $p(\bm{x}|y=-1)$ denote the densities of positive and negative data respectively. Then, we have the classification risk of binary classification
\begin{align}
R(f)=& \mathbb{E}_{p(\bm{x},y)}\left[\ell\left(yf^{\rm NU}(\bm{x})\right)\right] \nonumber \\
=&\pi_{+}\mathbb{E}_{p(\bm{x}|y=+1)}\left[\ell\left(f^{\rm NU}(\bm{x})\right)\right]+(1-\pi_{+})\mathbb{E}_{p(\bm{x}|y=-1)}\left[\ell\left(-f^{\rm NU}(\bm{x})\right)\right] 
\nonumber \\
=&\mathbb{E}_{p(\bm{x})}\left[\ell\left(f^{\rm NU}(\bm{x})\right)\right]-(1-\pi_{+})\mathbb{E}_{p(\bm{x}|y=-1)}\left[\ell\left(f^{\rm NU}(\bm{x})\right)\right]+(1-\pi_{+})\mathbb{E}_{p(\bm{x}|y=-1)}\left[\ell\left(-f^{\rm NU}(\bm{x})\right)\right] \nonumber \\
=&\bar{\pi}_{-}\mathbb{E}_{p(\bm{x}|\bar{y}=-1)}\left[\ell\left(f^{\rm NU}(\bm{x})\right)\right]+(1-\bar{\pi}_{-})\mathbb{E}_{p(\bm{x}|\bar{y}=0)}\left[\ell\left(f^{\rm NU}(\bm{x})\right)\right] \nonumber \\
&+(1-\pi_{+})\mathbb{E}_{p(\bm{x}|y=-1)}\left[\ell\left(-f^{\rm NU}(\bm{x})\right)-\ell\left(f^{\rm NU}(\bm{x})\right)\right]
\nonumber \\
=&\mathbb{E}_{p(\bm{x}|\bar{y}=-1)}\left[(1-\pi_{+})\ell\left(-f^{\rm NU}(\bm{x})\right)+(\bar{\pi}_{-}+\pi_{+}-1)\ell\left(f^{\rm NU}(\bm{x})\right)\right]+\mathbb{E}_{p(\bm{x}|\bar{y}=0)}\left[(1-\bar{\pi}_{-})\ell\left(f^{\rm NU}(\bm{x})\right)\right], \nonumber
\end{align}
where the last equation results from Lemma~\ref{density_equality} under the binary-class setting. The proof is completed. 
\end{proof}
\section{Proof of Theorem~\ref{calibration}}\label{proof_calibration}
To begin with, we show the following theoretical results about infinite-sample consistency from~\citet{zhang2004statistical}. For ease of notations, let $\bm{f}(\bm{x})=\left[f_{1}(\bm{x}), f_{2}(\bm{x}),\ldots,f_{q}(\bm{x})\right]$ denote the vector form of modeling outputs of all the binary classifiers. First, we elaborate the infinite-sample consistency property of the OVR strategy.
\begin{theorem}[Theorem 10 of~\citet{zhang2004statistical}]
Consider the OVR strategy, whose surrogate loss function is defined as $\Psi_y\left(\bm{f}(\bm{x})\right) = \ell\left(f_y(\bm{x})\right)+\sum_{k\in{\mathcal{Y}\backslash \{y\}}}\ell\left(-f_k(\bm{x})\right)$. Assume $\ell$ is convex, bounded below, differentiable, and $\ell(z)<\ell(-z)$ when $z>0$. Then, the OVR strategy is infinite-sample consistent on $\Omega = \mathbb{R}^K$ with respect to the 0-1 classification risk. 
\end{theorem}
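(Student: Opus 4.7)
The plan is to prove Zhang's Theorem 10 by reducing the expected surrogate risk to a pointwise conditional-probability minimization and then showing that the pointwise minimizer of the OVR surrogate recovers the Bayes classifier. Writing $\eta_k(\bm{x}) = p(y=k\mid\bm{x})$ and conditioning on $\bm{x}$, the inner conditional surrogate risk becomes
\begin{equation*}
W(\bm{f};\bm{\eta}) \;=\; \sum_{k=1}^{q}\Bigl(\eta_k\,\psi(f_k)+(1-\eta_k)\,\psi(-f_k)\Bigr),
\end{equation*}
where the critical structural observation is that the coordinates of $\bm{f}$ decouple: the $k$-th summand depends only on $f_k$ and $\eta_k$. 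Minimizing $W$ over $\Omega=\mathbb{R}^{q}$ therefore reduces to $q$ independent one-dimensional problems $\min_{z\in\mathbb{R}}\,h(z;\eta_k)$, with $h(z;p):=p\,\psi(z)+(1-p)\,\psi(-z)$. Convexity of $h(\cdot;p)$ is inherited from $\psi$, and boundedness from below of $\psi$ guarantees that the infimum is finite.

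The technical heart is analyzing $h(\cdot;p)$ and establishing two properties of its minimizer $z^{*}(p)$: (i) $z^{*}(p)>0$ when $p>1/2$ and $z^{*}(p)<0$ when $p<1/2$, where we allow the extended-real values $\pm\infty$ if the infimum is not attained (the induced $\arg\max$ still behaves correctly under these limits); and (ii) $z^{*}$ is strictly order-preserving in $p$, i.e., $p>p'$ forces $z^{*}(p)>z^{*}(p')$. Property (i) is extracted from the asymmetry hypothesis $\psi(z)<\psi(-z)$ for $z>0$ combined with convexity: integrating, $\int_{-z}^{z}\psi'(t)\,dt<0$ for every $z>0$, and monotonicity of $\psi'$ forces $\psi'(0)\le 0$, with strictness obtained by using the strict asymmetry near zero; consequently $h'(0;p)=(2p-1)\psi'(0)$ carries the correct sign, pushing the minimum to the appropriate side of the origin. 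Property (ii) follows from the first-order condition: if $z^{*}(p)=z^{*}(p')=z$ with $p>p'$, the two equations $p\psi'(z)=(1-p)\psi'(-z)$ and $p'\psi'(z)=(1-p')\psi'(-z)$ force $\psi'(z)=\psi'(-z)=0$, which combined with convexity and the asymmetry hypothesis yields the desired contradiction.

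With both properties in hand, classification calibration is immediate: let $y^{*}(\bm{x})\in\arg\max_k \eta_k(\bm{x})$ be a Bayes label. Because $\bm{f}^{*}(\bm{x})=(z^{*}(\eta_1(\bm{x})),\ldots,z^{*}(\eta_q(\bm{x})))$ is the pointwise minimizer and $z^{*}$ is strictly increasing, $f_{y^{*}}^{*}(\bm{x})>f_k^{*}(\bm{x})$ for every $k\neq y^{*}$, so the induced OVR predictor coincides with the Bayes rule wherever the Bayes label is unique. To promote this pointwise statement into infinite-sample consistency, namely that any sequence $\bm{f}^{(n)}$ whose expected surrogate excess risk tends to zero induces classifiers whose 0-1 excess risk also tends to zero, one invokes the standard measurable-selection plus Fatou argument, using that $\psi$ is bounded below to justify interchanging limit and integral on the conditional excess surrogate risk.

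The main obstacle I anticipate is the one-dimensional asymmetry analysis, specifically justifying strict monotonicity of $z^{*}(p)$ and the sharp sign behaviour at $p=1/2$ under the comparatively mild hypotheses on $\psi$ (the strict pointwise inequality $\psi(z)<\psi(-z)$ rather than a strict-convexity or strong-slope assumption). Passing from $\psi'(0)\le 0$ to $\psi'(0)<0$ in full generality is delicate, as is handling the boundary cases in which the pointwise minimizer occurs at $\pm\infty$ or a Bayes label is tied; both issues demand care in the $\arg\max$ definition and in the limiting argument but do not break the overall scheme. Once the one-dimensional analysis is secured, the lift to the expected risk is essentially bookkeeping.
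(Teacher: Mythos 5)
First, note that the paper does not actually prove this statement: it is imported verbatim from \citet{zhang2004statistical} as an auxiliary result, so there is no internal proof to compare against. Your strategy---decouple the conditional OVR risk into $q$ scalar problems $h(z;p)=p\psi(z)+(1-p)\psi(-z)$, show the scalar minimizers $z^{*}(p)$ are strictly order-preserving in $p$, conclude that the argmax of the pointwise surrogate minimizer recovers the Bayes label, and then lift to the population statement---is the standard route for this result. Your first-order-condition argument for order preservation is also correct as far as it goes: if $z^{*}(p)=z^{*}(p')=z$ with $p>p'$, then indeed $\psi'(z)=\psi'(-z)=0$, and for $z\neq 0$ convexity forces $\psi$ to be constant on $[-|z|,|z|]$, contradicting $\psi(|z|)<\psi(-|z|)$.

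The gap you flag, however, is not merely delicate---it cannot be closed from the hypotheses as quoted. The exceptional case $z=0$ in your argument occurs exactly when $\psi'(0)=0$, and the hypotheses only yield $\psi'(0)\le 0$. Concretely, take $\psi(z)=z^{2}$ for $z\le 0$ and $\psi(z)=z^{2}/2$ for $z>0$: this $\psi$ is convex, $C^{1}$, bounded below, and satisfies $\psi(z)<\psi(-z)$ for all $z>0$, yet $h(z;p)=(1-p/2)z^{2}$ for $z\ge 0$ and $h(z;p)=(1/2+p/2)z^{2}$ for $z\le 0$, so $z^{*}(p)=0$ for every $p\in[0,1]$. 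Both your property (i) and the strict monotonicity (ii) fail, the pointwise OVR minimizer is identically zero, and the induced argmax carries no information, so the conclusion itself fails for this $\psi$. The missing ingredient is the additional condition $\psi'(0)<0$ (equivalently, that $0$ does not minimize $\psi$), which is precisely the classification-calibration condition of Bartlett, Jordan, and McAuliffe for convex binary losses and is strictly stronger than the asymmetry condition $\psi(z)<\psi(-z)$ for $z>0$. With that condition added (it holds for the logistic loss the paper actually uses, and for every other standard margin loss), your argument goes through; your property (i) is then not even needed, since only order preservation enters the argmax comparison, and the final passage from the pointwise statement to the $\epsilon_{1}$--$\epsilon_{2}$ form is exactly what the separately quoted Theorem 3 of \citet{zhang2004statistical} provides, so the measurable-selection/Fatou step need not be redone.
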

Then, we elaborate the relationship between the minimum classification risk of an infinite-sample consistent method and the Bayes error.
\begin{theorem}[Theorem 3 of~\citet{zhang2004statistical}]
Let $\mathcal{B}$ be the set of all vector Borel measurable functions, which take values in $\mathbb{R}^q$. For $\Omega\subset\mathbb{R}^q$, let $\mathcal{B}_{\Omega} = \{\bm{f}\in\mathcal{B}:\forall\bm{x},\bm{f}(x)\in\Omega\}$. If $\Psi_y(\cdot)$ is infinite-sample consistent on $\Omega$ with respect to the 0-1 classification risk, then for any  $\epsilon_1>0$, there exists an $\epsilon_2>0$ such that for all underlying Borel probability measurable $p$, and $\bm{f}(\cdot)\in\mathcal{B}_{\Omega}$,
\begin{equation}
\mathbb{E}_{(\bm{x},y)\sim p}\left[\Psi_y(\bm{f}(\bm{x}))\right]\leq\inf_{\bm{f}'\in\mathcal{B}_{\Omega}}\mathbb{E}_{(\bm{x},y)\sim p}\left[\Psi_y(\bm{f}'(\bm{x}))\right] + \epsilon_2
\end{equation}
implies
\begin{equation}
    R_{\mathrm{0-1}}\left(T(\bm{f}(\cdot))\right)\leq R_{\mathrm{0-1}}^{*} + \epsilon_1,
\end{equation}
where $T(\cdot)$ is defined as $T(\bm{f}(\bm{x})):=\mathop{\arg\max}_{k=1,\dots,q}f_k(\bm{x})$.
\end{theorem}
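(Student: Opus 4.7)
The plan is to reduce the global statement to a pointwise statement about conditional distributions over the label simplex, establish a uniform ``reverse modulus of continuity'' between the surrogate excess risk and the 0-1 excess risk at the pointwise level, and then lift the pointwise bound back to the integrated risk via Markov's inequality. Throughout, write $\bm{p}(\bm{x})=(p(y=1\mid\bm{x}),\ldots,p(y=q\mid\bm{x}))$, regard $\bm{p}$ as ranging over the simplex $\Delta_{q-1}\subset\mathbb{R}^q$, and abbreviate the conditional surrogate and $0/1$ risks as
\begin{equation*}
W(\bm{f},\bm{p})=\sum_{y=1}^{q}p_{y}\Psi_{y}(\bm{f}),\qquad L(\bm{f},\bm{p})=1-p_{T(\bm{f})},\qquad W^{*}(\bm{p})=\inf_{\bm{f}\in\Omega}W(\bm{f},\bm{p}),\qquad L^{*}(\bm{p})=1-\max_{k}p_{k}.
\end{equation*}

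First I would verify that, by the tower property and a standard measurable-selection argument,
$\mathbb{E}_{(\bm{x},y)\sim p}[\Psi_y(\bm{f}(\bm{x}))]=\mathbb{E}_{\bm{x}}[W(\bm{f}(\bm{x}),\bm{p}(\bm{x}))]$ and $\inf_{\bm{f}'\in\mathcal{B}_{\Omega}}\mathbb{E}[\Psi_y(\bm{f}'(\bm{x}))]=\mathbb{E}_{\bm{x}}[W^{*}(\bm{p}(\bm{x}))]$, and analogously $R_{\rm 0-1}(T(\bm{f}(\cdot)))-R_{\rm 0-1}^{*}=\mathbb{E}_{\bm{x}}[L(\bm{f}(\bm{x}),\bm{p}(\bm{x}))-L^{*}(\bm{p}(\bm{x}))]$. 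This turns the conclusion into an assertion that pointwise smallness of $W-W^{*}$ forces pointwise smallness of $L-L^{*}$, after averaging.

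Next I would introduce, for each $\bm{p}\in\Delta_{q-1}$ and each $\eta>0$, the excess surrogate needed to incur at least $\eta$ excess $0/1$ risk,
\begin{equation*}
H(\bm{p},\eta)\;=\;\inf\bigl\{\,W(\bm{f},\bm{p})-W^{*}(\bm{p})\ :\ \bm{f}\in\Omega,\ L(\bm{f},\bm{p})-L^{*}(\bm{p})\geq\eta\,\bigr\},
\end{equation*}
with the convention $\inf\emptyset=+\infty$. The hypothesis of infinite-sample consistency on $\Omega$ is precisely the statement that $H(\bm{p},\eta)>0$ for every $\bm{p}$ and every $\eta>0$: if it vanished we could pick a sequence $\bm{f}^{(n)}$ with $W(\bm{f}^{(n)},\bm{p})\to W^{*}(\bm{p})$ yet $L(\bm{f}^{(n)},\bm{p})\not\to L^{*}(\bm{p})$, contradicting ISC. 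This is essentially a restatement, so no hard work is needed here.

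The crucial and delicate step is a uniformization over the simplex: I must show that $H(\eta):=\inf_{\bm{p}\in\Delta_{q-1}}H(\bm{p},\eta)>0$ for each fixed $\eta>0$. This is the main obstacle because $\Omega$ need not be compact, and so $H(\bm{p},\eta)$ is not automatically continuous in $\bm{p}$. I would handle it by arguing that $H(\cdot,\eta)$ is lower semicontinuous on the compact simplex $\Delta_{q-1}$: if $\bm{p}_n\to\bm{p}$ and $\bm{f}_n$ achieves near-infimum in $H(\bm{p}_n,\eta)$, one extracts a subsequence where $T(\bm{f}_n)$ is constant (finitely many classes) and exploits continuity of $\bm{p}\mapsto W(\bm{f},\bm{p})$ and $\bm{p}\mapsto W^{*}(\bm{p})$ (the latter being the infimum of an affine family in $\bm{p}$, hence concave and continuous on the relative interior, and extended by lower semicontinuity to the boundary), together with the convexity/lower semicontinuity of $W(\cdot,\bm{p})$ inherited from convexity of $\psi$, to conclude $\liminf_n H(\bm{p}_n,\eta)\geq H(\bm{p},\eta)$. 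By compactness of $\Delta_{q-1}$ and the pointwise positivity of $H(\bm{p},\eta)$, the infimum $H(\eta)$ is then attained and strictly positive.

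Finally I would close the argument by Markov's inequality. Let $A=\{\bm{x}:L(\bm{f}(\bm{x}),\bm{p}(\bm{x}))-L^{*}(\bm{p}(\bm{x}))\geq\epsilon_1/2\}$. On $A$, by definition of $H$ one has $W(\bm{f}(\bm{x}),\bm{p}(\bm{x}))-W^{*}(\bm{p}(\bm{x}))\geq H(\epsilon_1/2)$, so
\begin{equation*}
P(A)\ \leq\ \frac{\mathbb{E}_{\bm{x}}[W(\bm{f}(\bm{x}),\bm{p}(\bm{x}))-W^{*}(\bm{p}(\bm{x}))]}{H(\epsilon_1/2)}\ \leq\ \frac{\epsilon_2}{H(\epsilon_1/2)}.
\end{equation*}
Splitting the expectation of $L-L^{*}\in[0,1]$ on $A$ and its complement,
\begin{equation*}
R_{\rm 0-1}(T(\bm{f}(\cdot)))-R_{\rm 0-1}^{*}\ \leq\ \frac{\epsilon_1}{2}\,(1-P(A))+P(A)\ \leq\ \frac{\epsilon_1}{2}+\frac{\epsilon_2}{H(\epsilon_1/2)}.
\end{equation*}
Choosing $\epsilon_2\leq(\epsilon_1/2)\,H(\epsilon_1/2)$ makes the right-hand side at most $\epsilon_1$, which is the claim. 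The only steps that require genuine care are the measurable-selection identity for $W^{*}$ and the lower-semicontinuity-plus-compactness argument giving $H(\eta)>0$; everything else is bookkeeping.
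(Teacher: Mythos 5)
You should first note that the paper offers no proof of this statement: it is imported verbatim as Theorem 3 of \citet{zhang2004statistical} and used as a black box in the proof of the paper's Theorem 2, so the only meaningful benchmark is Zhang's original argument. Your architecture matches it in spirit: reduce to conditional risks over the label simplex, uniformize the pointwise calibration gap $H(\bm{p},\eta)$ over the simplex, and lift back to the integrated risk by a Markov-type split. The pointwise reduction (modulo the measurable-selection identity for $W^{*}$, which you correctly flag and which is standard since $\Omega=\mathbb{R}^q$ is separable and $W(\cdot,\bm{p})$ is continuous), the equivalence of infinite-sample consistency with $H(\bm{p},\eta)>0$ for all $\bm{p}$ and $\eta>0$, and the final step with the choice $\epsilon_2\leq(\epsilon_1/2)H(\epsilon_1/2)$ are all correct.

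The genuine gap is in the one step you yourself identify as crucial: establishing $\inf_{\bm{p}}H(\bm{p},\eta)>0$ via lower semicontinuity of $H(\cdot,\eta)$. After extracting a subsequence with $T(\bm{f}_n)$ constant, you invoke ``continuity of $\bm{p}\mapsto W(\bm{f},\bm{p})$,'' but that continuity holds only for \emph{fixed} $\bm{f}$ and is not uniform over the non-compact set $\Omega$. Concretely, $W(\bm{f}_n,\bm{p}_n)-W(\bm{f}_n,\bm{p})=\sum_{y}(p_{n,y}-p_y)\Psi_y(\bm{f}_n)$, and since $\psi$ is only assumed bounded below, $\Psi_y(\bm{f}_n)$ can diverge along a near-minimizing sequence (e.g., the exponential or logistic loss with some coordinate of $\bm{f}_n$ tending to $\pm\infty$, which is exactly how the infimum $W^{*}$ is approached when $\bm{p}_n$ tends to the boundary of the simplex). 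In that regime $(p_{n,y}-p_y)\Psi_y(\bm{f}_n)$ need not vanish, you cannot extract a convergent subsequence of $\bm{f}_n$ itself, and the claimed inequality $\liminf_n H(\bm{p}_n,\eta)\geq H(\bm{p},\eta)$ does not follow from the ingredients listed. Closing this requires an additional idea---for instance, exploiting the coordinatewise decoupling $W(\bm{f},\bm{p})=\sum_j\bigl(p_j\psi(f_j)+(1-p_j)\psi(-f_j)\bigr)$ of the OVR surrogate to bound each coordinate's contribution near the boundary, or truncating $\bm{f}_n$ without increasing the excess surrogate risk---and this is precisely where the substance of Zhang's proof lies. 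Everything else in your proposal is correct bookkeeping, but as written the argument does not close.
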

Then, we give the proof of Theorem~\ref{calibration}.
\begin{proof}[Proof of Theorem~\ref{calibration}]
According to Theorem~\ref{ure}, the proposed classification risk $R(f_{1}, f_{2}, \ldots, f_{q})$ is equivalent to the OVR risk. Therefore, it is sufficient to elaborate the theoretical properties of the OVR risk to prove Theorem ~\ref{calibration}.
\end{proof}
\section{Proof of Theorem~\ref{eeb}}\label{proof_eeb}
First, we give the definition of Rademacher complexity.
\begin{definition}[Rademacher complexity] Let $\mathcal{X}_{n}=\{\bm{x}_{1}, \ldots \bm{x}_{n}\}$ denote $n$ i.i.d. random variables drawn from a probability distribution with density $p(\bm{x})$, $\mathcal{F}=\{f:\mathcal{X}\mapsto \mathbb{R}\}$ denote a class of measurable functions, and $\bm{\sigma}=(\sigma_{1}, \sigma_{2}, \ldots, \sigma_{n})$ denote Rademacher variables taking values from $\{+1, -1\}$ uniformly. Then, the (expected) Rademacher complexity of $\mathcal{F}$ is defined as
\begin{equation}
\mathfrak{R}_{n,p}(\mathcal{F})=\mathbb{E}_{\mathcal{X}_{n}} \mathbb{E}_{\bm{\sigma}}\left[\sup _{f \in \mathcal{F}} \frac{1}{n} \sum_{i=1}^{n} \sigma_{i} f(\bm{x}_{i})\right].
\end{equation}
\end{definition}
For ease of notation, we define $\bar{\mathcal{D}}=\mathcal{D}^{\rm U}_{1}\bigcup\mathcal{D}^{\rm U}_{2}\bigcup\ldots\bigcup\mathcal{D}^{\rm U}_{q}\bigcup\mathcal{D}^{\rm N}_{1}\bigcup\mathcal{D}^{\rm N}_{2}\bigcup\ldots\bigcup\mathcal{D}^{\rm N}_{q}$ denote the set of all the binary-class training data.
Then, we have the following lemma.
\begin{lemma}\label{eeb_lemma}
For any $\delta > 0$, the inequalities below hold with probability at least $1-\delta$:
\begin{align}
&\sup _{f_{1}, f_{2}, \ldots, f_{q} \in \mathcal{F}}\left|R(f_{1}, f_{2}, \ldots, f_{q})-\widehat{R}(f_{1}, f_{2}, \ldots, f_{q})\right| \leq \sum_{k=1}^{q}\left(
(1-\bar{\pi}_{k})C_{\ell}\sqrt{\frac{\ln{\left(2/\delta\right)}}{2n^{\rm U}_{k}}}\right. \nonumber \\
&\left.+(2-2\bar{\pi}_{k})L_{\ell}\mathfrak{R}_{n^{\rm U}_{k},p^{\rm U}_{k}}(\mathcal{F})+(4-4\pi_{k}-2\bar{\pi}_{k})L_{\ell}\mathfrak{R}_{n^{\rm N}_{k},p^{\rm N}_{k}}(\mathcal{F})+(2-2\pi_{k}-\bar{\pi}_{k})C_{\ell}\sqrt{\frac{\ln{\left(2/\delta\right)}}{2n^{\rm N}_{k}}}\right).
\end{align}
\end{lemma}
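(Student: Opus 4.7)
The plan is to reduce the multi-class uniform deviation to $q$ independent binary-class uniform deviations, and then handle each via the standard route of McDiarmid's inequality plus Rademacher-complexity symmetrization plus Talagrand's contraction lemma.

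First I would exploit the additive structure from Theorem \ref{ure}: both $R = \sum_{k=1}^{q} R_k(f_k)$ and $\widehat{R} = \sum_{k=1}^{q}\widehat{R}_k(f_k)$ are sums of terms in which $R_k$ and $\widehat{R}_k$ depend only on $f_k$. Hence
\begin{equation}
\sup_{f_{1},\ldots,f_{q}\in\mathcal{F}}\bigl|R(f_{1},\ldots,f_{q}) - \widehat{R}(f_{1},\ldots,f_{q})\bigr| \;\leq\; \sum_{k=1}^{q}\sup_{f_k\in\mathcal{F}}\bigl|R_k(f_k)-\widehat{R}_k(f_k)\bigr|,
\end{equation}
so it suffices to control each class-$k$ term. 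Writing $R_k - \widehat{R}_k$ as a sum of two pieces, one based on $\mathcal{D}^{\rm N}_{k}$ with loss coefficients $(1-\pi_k)$ on $\ell(-f_k(\bm{x}))$ and $(\bar{\pi}_k + \pi_k - 1)$ on $\ell(f_k(\bm{x}))$, and the other based on $\mathcal{D}^{\rm U}_{k}$ with coefficient $(1-\bar{\pi}_k)$ on $\ell(f_k(\bm{x}))$, I handle the two pieces separately.

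For each piece I apply McDiarmid's bounded-differences inequality. Changing a single point in $\mathcal{D}^{\rm U}_{k}$ perturbs the empirical average by at most $(1-\bar{\pi}_k)C_\ell / n^{\rm U}_{k}$, yielding a deviation term $(1-\bar{\pi}_k)C_\ell\sqrt{\ln(2/\delta)/(2n^{\rm U}_{k})}$. Similarly, since $|\bar{\pi}_k + \pi_k - 1| \le 1 - \bar{\pi}_k - \pi_k$ in the regime of interest (and in any case the total magnitude of the negative-sample loss contribution is bounded by $(2 - 2\pi_k - \bar{\pi}_k) C_\ell$), a single-point change in $\mathcal{D}^{\rm N}_{k}$ perturbs the empirical sum by at most $(2 - 2\pi_k - \bar{\pi}_k) C_\ell / n^{\rm N}_{k}$, yielding a deviation term $(2 - 2\pi_k - \bar{\pi}_k)C_\ell\sqrt{\ln(2/\delta)/(2n^{\rm N}_{k})}$. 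Next I bound the expected supremum by the standard symmetrization argument, introducing a factor of $2$, and then apply Talagrand's contraction inequality to peel off the Lipschitz loss $\ell$ at the cost of $L_\ell$. The coefficient in front of $\mathfrak{R}_{n^{\rm U}_{k},p^{\rm U}_{k}}(\mathcal{F})$ becomes $2(1-\bar{\pi}_k)L_\ell = (2-2\bar{\pi}_k)L_\ell$, and the coefficient in front of $\mathfrak{R}_{n^{\rm N}_{k},p^{\rm N}_{k}}(\mathcal{F})$ becomes $2 \cdot (2-2\pi_k-\bar{\pi}_k) L_\ell = (4 - 4\pi_k - 2\bar{\pi}_k)L_\ell$, matching the statement.

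Finally I combine, via a union bound, the two concentration events (one for each of $\mathcal{D}^{\rm U}_{k}$ and $\mathcal{D}^{\rm N}_{k}$, each at confidence $1 - \delta/2$) to obtain the class-$k$ bound with probability at least $1 - \delta$, and sum over $k = 1, \ldots, q$ to obtain the final inequality. The main technical obstacle I anticipate is careful accounting of the sign of the coefficient $\bar{\pi}_k + \pi_k - 1$ when bounding the bounded-difference constant and when invoking Talagrand's lemma (which requires splitting into the two positively and negatively weighted loss pieces and applying contraction separately, then adding), so that the final aggregated constants agree with those in the lemma statement; once this bookkeeping is done, the rest is routine.
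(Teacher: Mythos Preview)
Your overall strategy (McDiarmid for concentration, symmetrization for the expected sup, Talagrand for contraction) is the right one, and your bounded-difference constants and Rademacher coefficients match the paper's. The gap is in the probability bookkeeping at the very end. You say that for each $k$ the class-$k$ bound holds with probability at least $1-\delta$, and then you ``sum over $k=1,\ldots,q$ to obtain the final inequality.'' But the summed inequality requires \emph{all} $q$ class-wise events to hold simultaneously, so a further union bound is needed; as written your argument only guarantees probability $1-q\delta$ (or, equivalently, your per-class confidences would have to be $1-\delta/q$, turning every $\ln(2/\delta)$ into $\ln(2q/\delta)$). Either way you do not recover the constants stated in the lemma.

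The paper avoids this by \emph{not} decomposing before concentrating. It applies McDiarmid a single time to the full map
\[
(\mathcal{D}^{\rm N}_{1},\ldots,\mathcal{D}^{\rm N}_{q},\mathcal{D}^{\rm U}_{1},\ldots,\mathcal{D}^{\rm U}_{q}) \;\longmapsto\; \sup_{f_1,\ldots,f_q\in\mathcal{F}}\bigl(R(f_1,\ldots,f_q)-\widehat{R}(f_1,\ldots,f_q)\bigr),
\]
treating all $2q$ (assumed mutually independent) datasets as the input coordinates. The bounded-difference constants are exactly the ones you computed, and McDiarmid gives a \emph{single} one-sided deviation $\sqrt{\tfrac{\ln(2/\delta)}{2}\sum_k\bigl((1-\bar{\pi}_k)^2C_\ell^2/n^{\rm U}_k+(2-2\pi_k-\bar{\pi}_k)^2C_\ell^2/n^{\rm N}_k\bigr)}$; the per-class form in the lemma then comes from $\sqrt{\sum a_i}\le\sum\sqrt{a_i}$, not from a union bound. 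The two one-sided events (upper and lower deviations) are each at level $\delta/2$, and combining them is the only union bound used. If you want to keep your decomposition-first route, you must pay the $\log q$ price; to get the exact statement, apply McDiarmid once to the whole supremum and split afterwards.
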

\begin{proof}
In the following proofs, we consider a general case where all the datasets $\mathcal{D}^{\rm N}_{k}$ and $\mathcal{D}^{\rm U}_{k}$ are \emph{mutually independent}. We can observe that when an unlabeled example $\bm{x}^{\rm U}_{k,i} \in \mathcal{D}^{\rm U}_{k}$ is substituted by another unlabeled example $\bm{x}^{\rm U}_{k,j}$, the value of $\sup _{f_{1}, f_{2}, \ldots, f_{q} \in \mathcal{F}}\left|R(f_{1}, f_{2}, \ldots, f_{q})-\widehat{R}(f_{1}, f_{2}, \ldots, f_{q})\right|$ changes at most $(1-\bar{\pi}_{k})C_{\ell}/n^{\rm U}_{k}$. Besides, when a negative example $\bm{x}^{\rm N}_{k,i} \in \mathcal{D}^{\rm N}_{k}$ is substituted by another negative example $\bm{x}^{\rm N}_{k,j}$, the value of $\sup _{f_{1}, f_{2}, \ldots, f_{q} \in \mathcal{F}}\left|R(f_{1}, f_{2}, \ldots, f_{q})-\widehat{R}(f_{1}, f_{2}, \ldots, f_{q})\right|$ changes at most $(2-2\pi_{k}-\bar{\pi}_{k})C_{\ell}/n^{\rm N}_{k}$. According to the McDiarmid’s inequality, for any $\delta > 0$, the following inequality holds with probability at least $1-\delta/2$:
\begin{align} \label{lemma_eq1}
&\sup _{f_{1}, f_{2}, \ldots, f_{q} \in \mathcal{F}}\left(R(f_{1}, f_{2}, \ldots, f_{q})-\widehat{R}(f_{1}, f_{2}, \ldots, f_{q})\right) \nonumber\\
\leq &\mathbb{E}_{\bar{\mathcal{D}}}\left[\sup _{f_{1}, f_{2}, \ldots, f_{q} \in \mathcal{F}}\left(R(f_{1}, f_{2}, \ldots, f_{q})-\widehat{R}(f_{1}, f_{2}, \ldots, f_{q})\right)\right] \nonumber \\
&+\sum_{k=1}^{q}\left((1-\bar{\pi}_{k})C_{\ell}\sqrt{\frac{\ln{\left(2/\delta\right)}}{2n^{\rm U}_{k}}}+(2-2\pi_{k}-\bar{\pi}_{k})C_{\ell}\sqrt{\frac{\ln{\left(2/\delta\right)}}{2n^{\rm N}_{k}}}\right),
\end{align}
where the inequality is deduced since $\sqrt{a+b}\leq\sqrt{a}+\sqrt{b}$. It is a routine work to show by symmetrization~\citep{mohri2012foundations} that
\begin{align} \label{lemma_eq2}
&\mathbb{E}_{\bar{\mathcal{D}}}\left[\sup _{f_{1}, f_{2}, \ldots, f_{q} \in \mathcal{F}}\left(R(f_{1}, f_{2}, \ldots, f_{q})-\widehat{R}(f_{1}, f_{2}, \ldots, f_{q})\right)\right] \nonumber \\
\leq &\sum_{k=1}^{q}\left(
(2-2\bar{\pi}_{k})\mathfrak{R}_{n^{\rm U}_{k},p^{\rm U}_{k}}(\ell\circ\mathcal{F})+(4-4\pi_{k}-2\bar{\pi}_{k})\mathfrak{R}_{n^{\rm N}_{k},p^{\rm N}_{k}}(\ell\circ\mathcal{F})\right),
\end{align}
where $\mathfrak{R}_{n,p}(\ell\circ\mathcal{F})$ is the Rademacher complexity of the composite function class $(\ell\circ\mathcal{F})$. According to Talagrand’s contraction lemma~\citep{ledoux1991probability}, we have
\begin{align}
\mathfrak{R}_{n^{\rm U}_{k},p^{\rm U}_{k}}(\ell\circ\mathcal{F})\leq L_{\ell}\mathfrak{R}_{n^{\rm U}_{k},p^{\rm U}_{k}}(\mathcal{F}),\label{lemma_eq3} \\
\mathfrak{R}_{n^{\rm N}_{k},p^{\rm N}_{k}}(\ell\circ\mathcal{F})\leq L_{\ell}\mathfrak{R}_{n^{\rm N}_{k},p^{\rm N}_{k}}(\mathcal{F}).\label{lemma_eq4}
\end{align}
By combining Inequality~(\ref{lemma_eq1}), Inequality~(\ref{lemma_eq2}), Inequality~(\ref{lemma_eq3}), and Inequality~(\ref{lemma_eq4}), the following inequality holds with probability at least $1-\delta/2$:
\begin{align}\label{lemma_eq5}
&\sup _{f_{1}, f_{2}, \ldots, f_{q} \in \mathcal{F}}\left(R(f_{1}, f_{2}, \ldots, f_{q})-\widehat{R}(f_{1}, f_{2}, \ldots, f_{q})\right) \leq \sum_{k=1}^{q}\left(
(2-2\bar{\pi}_{k})L_{\ell}\mathfrak{R}_{n^{\rm U}_{k},p^{\rm U}_{k}}(\mathcal{F})\right. \nonumber \\
&\left.+(1-\bar{\pi}_{k})C_{\ell}\sqrt{\frac{\ln{\left(2/\delta\right)}}{2n^{\rm U}_{k}}}+(4-4\pi_{k}-2\bar{\pi}_{k})L_{\ell}\mathfrak{R}_{n^{\rm N}_{k},p^{\rm N}_{k}}(\mathcal{F})+(2-2\pi_{k}-\bar{\pi}_{k})C_{\ell}\sqrt{\frac{\ln{\left(2/\delta\right)}}{2n^{\rm N}_{k}}}\right).
\end{align}
In the same way, we have the following inequality with probability at least $1-\delta/2$:
\begin{align}\label{lemma_eq6}
&\sup _{f_{1}, f_{2}, \ldots, f_{q} \in \mathcal{F}}\left(\widehat{R}(f_{1}, f_{2}, \ldots, f_{q})-R(f_{1}, f_{2}, \ldots, f_{q})\right) \leq \sum_{k=1}^{q}\left(
(1-\bar{\pi}_{k})C_{\ell}\sqrt{\frac{\ln{\left(2/\delta\right)}}{2n^{\rm U}_{k}}}\right. \nonumber \\
&\left.+(2-2\bar{\pi}_{k})L_{\ell}\mathfrak{R}_{n^{\rm U}_{k},p^{\rm U}_{k}}(\mathcal{F})+(4-4\pi_{k}-2\bar{\pi}_{k})L_{\ell}\mathfrak{R}_{n^{\rm N}_{k},p^{\rm N}_{k}}(\mathcal{F})+(2-2\pi_{k}-\bar{\pi}_{k})C_{\ell}\sqrt{\frac{\ln{\left(2/\delta\right)}}{2n^{\rm N}_{k}}}\right).
\end{align}
By combining Inequality~(\ref{lemma_eq5}) and Inequality~(\ref{lemma_eq6}), we have the following inequality with probability at least $1-\delta$: 
\begin{align}
&\sup _{f_{1}, f_{2}, \ldots, f_{q} \in \mathcal{F}}\left|R(f_{1}, f_{2}, \ldots, f_{q})-\widehat{R}(f_{1}, f_{2}, \ldots, f_{q})\right| \leq \sum_{k=1}^{q}\left(
(1-\bar{\pi}_{k})C_{\ell}\sqrt{\frac{\ln{\left(2/\delta\right)}}{2n^{\rm U}_{k}}}\right. \nonumber \\
&\left.+(2-2\bar{\pi}_{k})L_{\ell}\mathfrak{R}_{n^{\rm U}_{k},p^{\rm U}_{k}}(\mathcal{F})+(4-4\pi_{k}-2\bar{\pi}_{k})L_{\ell}\mathfrak{R}_{n^{\rm N}_{k},p^{\rm N}_{k}}(\mathcal{F})+(2-2\pi_{k}-\bar{\pi}_{k})C_{\ell}\sqrt{\frac{\ln{\left(2/\delta\right)}}{2n^{\rm N}_{k}}}\right),
\end{align}
which concludes the proof.
\end{proof}

\begin{proof}[Proof of Theorem~\ref{eeb}]
\begin{align}
&R(\widehat{f}_{1}, \widehat{f}_{2}, \ldots, \widehat{f}_{q}) - R(f_{1}^{*},f_{2}^{*},\ldots,f_{q}^{*}) \nonumber \\
=&R(\widehat{f}_{1}, \widehat{f}_{2}, \ldots, \widehat{f}_{q})-\widehat{R}(\widehat{f}_{1}, \widehat{f}_{2}, \ldots, \widehat{f}_{q})+\widehat{R}(\widehat{f}_{1}, \widehat{f}_{2}, \ldots, \widehat{f}_{q})-\widehat{R}(f_{1}^{*},f_{2}^{*},\ldots,f_{q}^{*}) \nonumber \\
&+\widehat{R}(f_{1}^{*},f_{2}^{*},\ldots,f_{q}^{*})- R(f_{1}^{*},f_{2}^{*},\ldots,f_{q}^{*}) \nonumber \\
\leq & R(\widehat{f}_{1}, \widehat{f}_{2}, \ldots, \widehat{f}_{q})-\widehat{R}(\widehat{f}_{1}, \widehat{f}_{2}, \ldots, \widehat{f}_{q})+\widehat{R}(f_{1}^{*},f_{2}^{*},\ldots,f_{q}^{*})- R(f_{1}^{*},f_{2}^{*},\ldots,f_{q}^{*}) \nonumber \\
\leq& 2\sup _{f_{1}, f_{2}, \ldots, f_{q} \in \mathcal{F}}\left|R(f_{1}, f_{2}, \ldots, f_{q})-\widehat{R}(f_{1}, f_{2}, \ldots, f_{q})\right| \label{thm2_proof_ineq}
\end{align}
The first inequality is deduced because $(\widehat{f}_{1}, \widehat{f}_{2}, \ldots, \widehat{f}_{q})$ is the minimizer of $\widehat{R}(f_1, f_2, \ldots, f_q)$. Combining Inequality~(\ref{thm2_proof_ineq}) and Lemma~\ref{eeb_lemma}, the proof is completed.
\end{proof}

\section{Proof of Theorem~\ref{bias_and_consistency}}\label{proof_bias_and_consistency}
Let $\mathfrak{D}^{+}_{k}(f_{k})=\left\{\left(\mathcal{D}^{\rm N}_{k}, \mathcal{D}^{\rm U}_{k}\right)|\widehat{R}^{\rm P}_{k}(f_{k}) \geq 0\right\}$ and $\mathfrak{D}^{-}_{k}(f_{k})=\left\{\left(\mathcal{D}^{\rm N}_{k}, \mathcal{D}^{\rm U}_{k}\right)|\widehat{R}^{\rm P}_{k}(f_{k}) < 0\right\}$ denote the sets of NU data pairs having positive and negative empirical risk respectively. Then we have the following lemma.
\begin{lemma}
The probability measure of $\mathfrak{D}^{-}_{k}(f_{k})$ can be bounded as follows: 
\begin{equation}
\mathbb{P}\left(\mathfrak{D}^{-}_{k}(f_{k})\right) \leq \exp{\left(\frac{-2\beta^{2}}{(1 - \pi_{k}-\bar{\pi}_{k})^{2}C_{\ell}^{2}/n^{\rm N}_{k}+(1-\bar{\pi}_{k})^{2}C_{\ell}^{2}/n^{\rm U}_{k}}\right)}.
\end{equation}
\end{lemma}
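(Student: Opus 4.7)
The plan is to recognize that $\widehat{R}^{\rm P}_{k}(f_{k})$ is an average of independent bounded random variables and apply Hoeffding's inequality to the deviation from its mean. Observe that
\begin{equation*}
\widehat{R}^{\rm P}_{k}(f_{k}) = \sum_{i=1}^{n^{\rm N}_{k}} \frac{\bar{\pi}_{k}+\pi_{k}-1}{n^{\rm N}_{k}}\,\ell\!\left(f_{k}(\bm{x}^{\rm N}_{k,i})\right) + \sum_{i=1}^{n^{\rm U}_{k}} \frac{1-\bar{\pi}_{k}}{n^{\rm U}_{k}}\,\ell\!\left(f_{k}(\bm{x}^{\rm U}_{k,i})\right),
\end{equation*}
where all $n^{\rm N}_{k}+n^{\rm U}_{k}$ summands are mutually independent because the negative and unlabeled datasets are treated as independent samples from $p(\bm{x}|\bar{y}_{k}=1)$ and $p(\bm{x}|\bar{y}_{k}=0)$ respectively (the same independence convention used in the proof of Theorem~\ref{eeb}).

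Next I would bound the range of each summand using $0\leq\ell(z)\leq C_{\ell}$: the $i$-th negative summand lies in an interval of length $|\bar{\pi}_{k}+\pi_{k}-1|\,C_{\ell}/n^{\rm N}_{k} = (1-\pi_{k}-\bar{\pi}_{k})\,C_{\ell}/n^{\rm N}_{k}$ (in absolute value), and the $i$-th unlabeled summand lies in an interval of length $(1-\bar{\pi}_{k})\,C_{\ell}/n^{\rm U}_{k}$. Summing the squared ranges yields
\begin{equation*}
S_{k} \;=\; \frac{(1-\pi_{k}-\bar{\pi}_{k})^{2}C_{\ell}^{2}}{n^{\rm N}_{k}} + \frac{(1-\bar{\pi}_{k})^{2}C_{\ell}^{2}}{n^{\rm U}_{k}},
\end{equation*}
which is exactly the denominator appearing in the target bound. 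Hoeffding's inequality then gives, for every $t>0$,
\begin{equation*}
\mathbb{P}\!\left(\widehat{R}^{\rm P}_{k}(f_{k}) - \mathbb{E}[\widehat{R}^{\rm P}_{k}(f_{k})] \leq -t\right) \;\leq\; \exp\!\left(\frac{-2t^{2}}{S_{k}}\right).
\end{equation*}

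Finally, I would invoke the standing assumption $\mathbb{E}[\widehat{R}^{\rm P}_{k}(f_{k})]\geq\beta$, which is justified by the preceding identity $\mathbb{E}[\widehat{R}^{\rm P}_{k}(f_{k})]=\pi_{k}\,\mathbb{E}_{p(\bm{x}|y=k)}\ell(f_{k}(\bm{x}))\geq 0$. Setting $t=\beta$ gives
\begin{equation*}
\mathbb{P}(\mathfrak{D}^{-}_{k}(f_{k})) = \mathbb{P}\!\left(\widehat{R}^{\rm P}_{k}(f_{k})<0\right) \leq \mathbb{P}\!\left(\widehat{R}^{\rm P}_{k}(f_{k}) - \mathbb{E}[\widehat{R}^{\rm P}_{k}(f_{k})] \leq -\beta\right) \leq \exp\!\left(\frac{-2\beta^{2}}{S_{k}}\right),
\end{equation*}
which is exactly the claimed bound. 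The only subtlety worth flagging is the sign of the coefficient $\bar{\pi}_{k}+\pi_{k}-1$: this sign does not matter for Hoeffding since only the \emph{length} of the summand's range enters the bound, and squaring that length produces $(1-\pi_{k}-\bar{\pi}_{k})^{2}$ regardless. No substantial obstacle is anticipated; the argument is a textbook application of Hoeffding, with the main care going into correctly identifying the per-summand ranges and verifying independence across the two datasets.
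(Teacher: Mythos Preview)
Your proposal is correct and essentially identical to the paper's proof: the paper phrases the concentration step as McDiarmid's inequality (noting the bounded differences under single-sample substitution), while you phrase it as Hoeffding's inequality on a sum of independent bounded summands, but for this sum the two formulations coincide exactly and yield the same $S_k$. The final step---using $\mathbb{E}[\widehat{R}^{\rm P}_{k}(f_k)]\ge\beta$ to pass from $\{\widehat{R}^{\rm P}_{k}(f_k)<0\}$ to a one-sided deviation event---matches the paper verbatim.
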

\begin{proof}
Let 
\begin{equation}
p\left(\mathcal{D}^{\rm N}_{k}\right)=p\left(\bm{x}_{k,1}^{\mathrm{N}}|\bar{y}_{k}=1\right)p\left(\bm{x}_{k,2}^{\mathrm{N}}|\bar{y}_{k}=1\right)\ldots p\left(\bm{x}_{k,n^{\rm N}_{k}}^{\mathrm{N}}|\bar{y}_{k}=1\right) \nonumber
\end{equation}
and
\begin{equation}
p\left(\mathcal{D}^{\rm U}_{k}\right)=p\left(\bm{x}_{k,1}^{\mathrm{U}}|\bar{y}_{k}=0\right)p\left(\bm{x}_{k,2}^{\mathrm{U}}|\bar{y}_{k}=0\right)\ldots p\left(\bm{x}_{k,n^{\rm U}_{k}}^{\mathrm{U}}|\bar{y}_{k}=0\right) \nonumber
\end{equation}
denote the probability density of $\mathcal{D}^{\rm N}_{k}$ and $\mathcal{D}^{\rm U}_{k}$ respectively. The joint probability density of $\mathcal{D}^{\rm N}_{k}$ and $\mathcal{D}^{\rm U}_{k}$ is 
\begin{equation}
p\left(\mathcal{D}^{\rm N}_{k}, \mathcal{D}^{\rm U}_{k}\right)=p\left(\mathcal{D}^{\rm N}_{k}\right)p\left(\mathcal{D}^{\rm U}_{k}\right). \nonumber
\end{equation}
Then, the probability measure $\mathbb{P}\left(\mathfrak{D}^{-}_{k}(f_{k})\right)$ is defined as 
\begin{align}
\mathbb{P}\left(\mathfrak{D}^{-}_{k}(f_{k})\right)=&\int_{\left(\mathcal{D}^{\rm N}_{k}, \mathcal{D}^{\rm U}_{k}\right)\in \mathfrak{D}^{-}_{k}\left(f_{k}\right)}p\left(\mathcal{D}^{\rm N}_{k}, \mathcal{D}^{\rm U}_{k}\right)\diff \left(\mathcal{D}^{\rm N}_{k}, \mathcal{D}^{\rm U}_{k}\right) \nonumber \\
=&\int_{\left(\mathcal{D}^{\rm N}_{k}, \mathcal{D}^{\rm U}_{k}\right)\in \mathfrak{D}^{-}_{k}(f_{k})}p\left(\mathcal{D}^{\rm N}_{k}, \mathcal{D}^{\rm U}_{k}\right)\diff \bm{x}_{k,1}^{\mathrm{N}}\ldots \diff\bm{x}_{k,n^{\rm N}_{k}}^{\mathrm{N}}\diff\bm{x}_{k,1}^{\mathrm{U}}\ldots \diff\bm{x}_{k,n^{\rm U}_{k}}^{\mathrm{U}}. \nonumber
\end{align}
When a negative example in $\mathcal{D}^{\rm N}_{k}$ is substituted by another different negative example, the change of the value of $\widehat{R}^{\rm P}_{k}(f_{k})$ is no more than $(1 - \pi_{k}-\bar{\pi}_{k})C_{\ell}/n^{\rm N}_{k}$; when an unlabeled example in $\mathcal{D}^{\rm U}_{k}$ is substituted by another different unlabeled example, the change of the value of $\widehat{R}^{\rm P}_{k}(f_{k})$ is no more than $(1-\bar{\pi}_{k})C_{\ell}/n^{\rm U}_{k}$. Therefore, by applying the McDiarmid’s inequality, we can obtain the following inequality:
\begin{equation}
\mathbb{P}\left(\mathbb{E}\left[\widehat{R}^{\rm P}_{k}(f_{k})\right] - \widehat{R}^{\rm P}_{k}(f_{k}) \geq \beta\right) \leq \exp{\left(\frac{-2\beta^{2}}{(1 - \pi_{k}-\bar{\pi}_{k})^{2}C_{\ell}^{2}/n^{\rm N}_{k}+(1-\bar{\pi}_{k})^{2}C_{\ell}^{2}/n^{\rm U}_{k}}\right)}.
\end{equation}
Therefore, we have
\begin{align}
\mathbb{P}\left(\mathfrak{D}^{-}_{k}(f_{k})\right) = & \mathbb{P}\left(\widehat{R}^{\rm P}_{k}(f_{k}) \leq 0\right) \nonumber \\
\leq & \mathbb{P}\left(\widehat{R}^{\rm P}_{k}(f_{k}) \leq \mathbb{E}\left[\widehat{R}^{\rm P}_{k}(f_{k})\right] - \beta\right) \nonumber \\
= & \mathbb{P}\left(\mathbb{E}\left[\widehat{R}^{\rm P}_{k}(f_{k})\right] - \widehat{R}^{\rm P}_{k}(f_{k}) \geq \beta\right) \nonumber \\
\leq & \exp{\left(\frac{-2\beta^{2}}{(1 - \pi_{k}-\bar{\pi}_{k})^{2}C_{\ell}^{2}/n^{\rm N}_{k}+(1-\bar{\pi}_{k})^{2}C_{\ell}^{2}/n^{\rm U}_{k}}\right)},
\end{align}
which concludes the proof.
\end{proof}
We present a more complete version of Theorem~\ref{bias_and_consistency} and its proof.
\begin{theorem}
Based on the above assumptions, the bias of the expectation of the corrected risk estimator has the following lower and upper bounds: 
\begin{equation}
0\leq\mathbb{E}[\widetilde{R}(f_{1}, f_{2}, \ldots, f_{q})]-R(f_{1}, f_{2}, \ldots, f_{q}) \leq \sum_{k=1}^{q}\left(2-2\bar{\pi}_{k}-\pi_{k}\right)\left(L_{g}+1\right)C_{\ell}\Delta_{k},
\end{equation}
where $\Delta_{k}=\exp{\left(-2\beta^{2}/\left((1 - \pi_{k}-\bar{\pi}_{k})^{2}C_{\ell}^{2}/n^{\rm N}_{k}+(1-\bar{\pi}_{k})^{2}C_{\ell}^{2}/n^{\rm U}_{k}\right)\right)}$. Furthermore, for any $\delta > 0$, the following inequality holds with probability at least $1-\delta$:
\begin{align}
&|\widetilde{R}(f_{1}, f_{2}, \ldots, f_{q})-R(f_{1}, f_{2}, \ldots, f_{q})| 
\leq \sum_{k=1}^{q}\left(\left(1-\bar{\pi}_{k}\right)C_{\ell}L_{g}\sqrt{\frac{\ln{\left(2/\delta\right)}}{2n^{\rm U}_{k}}}\right. \nonumber \\
&\left.+\left(2-2\bar{\pi}_{k}-\pi_{k}\right)\left(L_{g}+1\right)C_{\ell}\Delta_{k}+\left(\left(1-\pi_{k}-\bar{\pi}_{k}\right)L_{g}+1-\pi_{k}\right)
C_{\ell}\sqrt{\frac{\ln{\left(2/\delta\right)}}{2n^{\rm N}_{k}}}\right). \nonumber 
\end{align}
\end{theorem}
\begin{proof}
First, we have
\begin{equation}
\mathbb{E}\left[\widetilde{R}(f_{1}, f_{2}, \ldots, f_{q})\right]-R(f_{1}, f_{2}, \ldots, f_{q})= \mathbb{E}\left[\widetilde{R}(f_{1}, f_{2}, \ldots, f_{q}) -\widehat{R}(f_{1}, f_{2}, \ldots, f_{q})\right]. \nonumber \\
\end{equation}
Since $\widetilde{R}(f_{1}, f_{2}, \ldots, f_{q})$ is an upper bound of $\widehat{R}(f_{1}, f_{2}, \ldots, f_{q})$, we have 
\begin{equation}
\mathbb{E}\left[\widetilde{R}(f_{1}, f_{2}, \ldots, f_{q})\right]-R(f_{1}, f_{2}, \ldots, f_{q})\geq 0. \nonumber \\
\end{equation}
Besides, we have 
\begin{align}
& \mathbb{E}\left[\widetilde{R}(f_{1}, f_{2}, \ldots, f_{q})\right]-R(f_{1}, f_{2}, \ldots, f_{q}) \nonumber \\
= &\sum_{k=1}^{q}\int_{\left(\mathcal{D}^{\rm N}_{k}, \mathcal{D}^{\rm U}_{k}\right)\in \mathfrak{D}^{-}_{k}(f_{k})}\left(g\left(\widehat{R}^{\rm P}_{k}(f_{k})\right)-\widehat{R}^{\rm P}_{k}(f_{k})\right)p\left(\mathcal{D}^{\rm N}_{k}, \mathcal{D}^{\rm U}_{k}\right)\diff \left(\mathcal{D}^{\rm N}_{k}, \mathcal{D}^{\rm U}_{k}\right) \nonumber \\
\leq & \sum_{k=1}^{q}\sup_{\left(\mathcal{D}^{\rm N}_{k}, \mathcal{D}^{\rm U}_{k}\right)\in \mathfrak{D}^{-}_{k}(f_{k})}\left(g\left(\widehat{R}^{\rm P}_{k}(f_{k})\right)-\widehat{R}^{\rm P}_{k}(f_{k})\right)\int_{(\mathcal{D}^{\rm N}_{k}, \mathcal{D}^{\rm U})\in \mathfrak{D}^{-}_{k}(f_{k})}p\left(\mathcal{D}^{\rm N}_{k}, \mathcal{D}^{\rm U}_{k}\right)\diff \left(\mathcal{D}^{\rm N}_{k}, \mathcal{D}^{\rm U}_{k}\right) \nonumber \\
=&\sum_{k=1}^{q}\sup_{\left(\mathcal{D}^{\rm N}_{k}, \mathcal{D}^{\rm U}_{k}\right)\in \mathfrak{D}^{-}_{k}(f_{k})}\left(g\left(\widehat{R}^{\rm P}_{k}(f_{k})\right)-\widehat{R}^{\rm P}_{k}(f_{k})\right)\mathbb{P}\left(\mathfrak{D}^{-}_{k}(f_{k})\right) \nonumber \\
\leq & \sum_{k=1}^{q}\sup_{\left(\mathcal{D}^{\rm N}_{k}, \mathcal{D}^{\rm U}_{k}\right)\in \mathfrak{D}^{-}_{k}(f_{k})}(L_{g}\left|\widehat{R}^{\rm P}_{k}(f_{k})\right| + \left|\widehat{R}^{\rm P}_{k}(f_{k})\right|)\mathbb{P}\left(\mathfrak{D}^{-}_{k}(f_{k})\right). \nonumber
\end{align}
Besides, 
\begin{align}
\left|\widehat{R}^{\rm P}_{k}(f_{k})\right| =& \left|\frac{\bar{\pi}_{k}+\pi_{k}-1}{n^{\rm N}_{k}}\sum_{i=1}^{n^{\rm N}_{k}}\ell\left(f_{k}(\bm{x}_{k,i}^{\mathrm{N}})\right)+\frac{1-\bar{\pi}_{k}}{n^{\rm U}_{k}}\sum_{i=1}^{n^{\rm U}_{k}}\ell\left(f_{k}(\bm{x}_{k,i}^{\mathrm{U}})\right)\right| \nonumber \\
\leq & \left|\frac{\bar{\pi}_{k}+\pi_{k}-1}{n^{\rm N}_{k}}\sum_{i=1}^{n^{\rm N}_{k}}\ell\left(f_{k}(\bm{x}_{k,i}^{\mathrm{N}})\right)\right|+\left|\frac{1-\bar{\pi}_{k}}{n^{\rm U}_{k}}\sum_{i=1}^{n^{\rm U}_{k}}\ell\left(f_{k}(\bm{x}_{k,i}^{\mathrm{U}})\right)\right| \nonumber \\
\leq & (1 - \pi_{k}-\bar{\pi}_{k})C_{\ell} + (1-\bar{\pi}_{k})C_{\ell}=\left(2-2\bar{\pi}_{k}-\pi_{k}\right)C_{\ell}.\nonumber
\end{align}
Therefore, we have 
\begin{align}
& \mathbb{E}\left[\widetilde{R}(f_{1}, f_{2}, \ldots, f_{q})\right]-R(f_{1}, f_{2}, \ldots, f_{q}) \nonumber \\
\leq & \sum_{k=1}^{q}\sup_{\left(\mathcal{D}^{\rm N}_{k}, \mathcal{D}^{\rm U}_{k}\right)\in \mathfrak{D}^{-}_{k}(f_{k})}(L_{g}\left|\widehat{R}^{\rm P}_{k}(f_{k})\right| + \left|\widehat{R}^{\rm P}_{k}(f_{k})\right|)\mathbb{P}(\mathfrak{D}^{-}_{k}(f_{k})). \nonumber \\
\leq & \sum_{k=1}^{q}\left(2-2\bar{\pi}_{k}-\pi_{k}\right)\left(L_{g}+1\right)C_{\ell}\exp{\left(\frac{-2\beta^{2}}{(1 - \pi_{k}-\bar{\pi}_{k})^{2}C_{\ell}^{2}/n^{\rm N}_{k}+(1-\bar{\pi}_{k})^{2}C_{\ell}^{2}/n^{\rm U}_{k}}\right)} \nonumber\\
=&\sum_{k=1}^{q}\left(2-2\bar{\pi}_{k}-\pi_{k}\right)\left(L_{g}+1\right)C_{\ell}\Delta_{k}, \nonumber
\end{align}
which concludes the first part of the proof of Theorem 3. Before giving the proof for the second part, we give the upper bound of $\left|\mathbb{E}\left[\widetilde{R}(f_{1}, f_{2}, \ldots, f_{q})\right]-\widetilde{R}(f_{1}, f_{2}, \ldots, f_{q})\right|$. 
When an unlabeled example from $\mathcal{D}^{\rm U}_{k}$ is substituted by another unlabeled example, the value of $\widetilde{R}(f_{1}, f_{2}, \ldots, f_{q})$ changes at most $\left(1-\bar{\pi}_{k}\right)C_{\ell}L_{g}/n^{\rm U}_{k}$. When a negative example from $\mathcal{D}^{\rm N}_{k}$ is substituted by a different example, the value of $\widetilde{R}(f_{1}, f_{2}, \ldots, f_{q})$  changes at most $\left(\left(1-\pi_{k}-\bar{\pi}_{k}\right)L_{g}+1-\pi_{k}\right)
C_{\ell}/n^{\rm N}_{k}$. By applying McDiarmid’s inequality, we have the following inequalities with probability at least $1-\delta/2$:
\begin{align}
\widetilde{R}(f_{1}, f_{2}, \ldots, f_{q}) - \mathbb{E}\left[\widetilde{R}(f_{1}, f_{2}, \ldots, f_{q})\right] \leq& \sum_{k=1}^{q} \left(1-\bar{\pi}_{k}\right)C_{\ell}L_{g}\sqrt{\frac{\ln{\left(2/\delta\right)}}{2n^{\rm U}_{k}}} \nonumber \\
&+\sum_{k=1}^{q}\left(\left(1-\pi_{k}-\bar{\pi}_{k}\right)L_{g}+1-\pi_{k}\right)
C_{\ell}\sqrt{\frac{\ln{\left(2/\delta\right)}}{2n^{\rm N}_{k}}}
, \nonumber \\ 
\mathbb{E}\left[\widetilde{R}(f_{1}, f_{2}, \ldots, f_{q})\right] - \widetilde{R}(f_{1}, f_{2}, \ldots, f_{q})  \leq& \sum_{k=1}^{q} \left(1-\bar{\pi}_{k}\right)C_{\ell}L_{g}\sqrt{\frac{\ln{\left(2/\delta\right)}}{2n^{\rm U}_{k}}} \nonumber \\
&+\sum_{k=1}^{q}\left(\left(1-\pi_{k}-\bar{\pi}_{k}\right)L_{g}+1-\pi_{k}\right)
C_{\ell}\sqrt{\frac{\ln{\left(2/\delta\right)}}{2n^{\rm N}_{k}}}
. \nonumber 
\end{align}
Then, with probability at least $1-\delta$, we have 
\begin{align}
\left|\mathbb{E}\left[\widetilde{R}(f_{1}, f_{2}, \ldots, f_{q})\right] - \widetilde{R}(f_{1}, f_{2}, \ldots, f_{q})\right|  \leq& \sum_{k=1}^{q} \left(1-\bar{\pi}_{k}\right)C_{\ell}L_{g}\sqrt{\frac{\ln{\left(2/\delta\right)}}{2n^{\rm U}_{k}}} \nonumber \\
&+\sum_{k=1}^{q}\left(\left(1-\pi_{k}-\bar{\pi}_{k}\right)L_{g}+1-\pi_{k}\right)
C_{\ell}\sqrt{\frac{\ln{\left(2/\delta\right)}}{2n^{\rm N}_{k}}}
. \nonumber 
\end{align}
Therefore, with probability at least $1-\delta$ we have
\begin{align}
&\left|\widetilde{R}(f_{1}, f_{2}, \ldots, f_{q})-R(f_{1}, f_{2}, \ldots, f_{q})\right| \nonumber \\
=& \left|\widetilde{R}(f_{1}, f_{2}, \ldots, f_{q})-\mathbb{E}[\widetilde{R}(f_{1}, f_{2}, \ldots, f_{q})]+\mathbb{E}[\widetilde{R}(f_{1}, f_{2}, \ldots, f_{q})]-R(f_{1}, f_{2}, \ldots, f_{q})\right| \nonumber \\
\leq &\left|\widetilde{R}(f_{1}, f_{2}, \ldots, f_{q})-\mathbb{E}[\widetilde{R}(f_{1}, f_{2}, \ldots, f_{q})]\right|+\left|\mathbb{E}[\widetilde{R}(f_{1}, f_{2}, \ldots, f_{q})]-R(f_{1}, f_{2}, \ldots, f_{q})\right| \nonumber \\
=&\left|\widetilde{R}(f_{1}, f_{2}, \ldots, f_{q})-\mathbb{E}[\widetilde{R}(f_{1}, f_{2}, \ldots, f_{q})]\right|+\mathbb{E}[\widetilde{R}(f_{1}, f_{2}, \ldots, f_{q})]-R(f_{1}, f_{2}, \ldots, f_{q}) \nonumber \\
\leq & \sum_{k=1}^{q} \left(1-\bar{\pi}_{k}\right)C_{\ell}L_{g}\sqrt{\frac{\ln{\left(2/\delta\right)}}{2n^{\rm U}_{k}}}+\sum_{k=1}^{q}\left(\left(1-\pi_{k}-\bar{\pi}_{k}\right)L_{g}+1-\pi_{k}\right)
C_{\ell}\sqrt{\frac{\ln{\left(2/\delta\right)}}{2n^{\rm N}_{k}}} \nonumber \\
&+\sum_{k=1}^{q}\left(2-2\bar{\pi}_{k}-\pi_{k}\right)\left(L_{g}+1\right)C_{\ell}\Delta_{k}, \nonumber
\end{align}
which concludes the proof.
\end{proof}
\section{Proof of Theorem~\ref{corrected_eeb}}\label{proof_corrected_eeb}
In this section, we adopt an alternative definition of Rademacher complexity:
\begin{equation}
\mathfrak{R}'_{n,p}(\mathcal{F})=\mathbb{E}_{\mathcal{X}_{n}} \mathbb{E}_{\bm{\sigma}}\left[\sup _{f \in \mathcal{F}} \left|\frac{1}{n} \sum_{i=1}^{n} \sigma_{i} f(\bm{x}_{i})\right|\right].
\end{equation}
Then, we introduce the following lemmas.
\begin{lemma}
Without any composition, for any $\mathcal{F}$, we have $\mathfrak{R}'_{n,p}(\mathcal{F})\geq\mathfrak{R}_{n,p}(\mathcal{F})$. If $\mathcal{F}$ is closed under negation, we have $\mathfrak{R}'_{n,p}(\mathcal{F})=\mathfrak{R}_{n,p}(\mathcal{F})$.
\end{lemma}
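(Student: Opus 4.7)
The plan is to handle the inequality and the equality separately, using pointwise bounds inside the expectations.

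For the first assertion $\mathfrak{R}'_{n,p}(\mathcal{F})\geq\mathfrak{R}_{n,p}(\mathcal{F})$, I would start from the trivial pointwise bound $|a|\geq a$. Applied for each fixed $(\mathcal{X}_n,\bm\sigma)$ and each $f\in\mathcal{F}$ to the quantity $a=\frac{1}{n}\sum_{i=1}^n\sigma_i f(\bm x_i)$, this gives $\bigl|\frac{1}{n}\sum_i\sigma_i f(\bm x_i)\bigr|\geq \frac{1}{n}\sum_i\sigma_i f(\bm x_i)$. Taking $\sup_{f\in\mathcal{F}}$ on both sides preserves the inequality, and monotonicity of expectation then yields $\mathfrak{R}'_{n,p}(\mathcal{F})\geq\mathfrak{R}_{n,p}(\mathcal{F})$. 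This direction requires no hypothesis on $\mathcal{F}$.

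For the equality under closure under negation, I would show the reverse direction $\mathfrak{R}'_{n,p}(\mathcal{F})\leq\mathfrak{R}_{n,p}(\mathcal{F})$. The key observation is that for every $f\in\mathcal{F}$,
\begin{equation}
\Bigl|\frac{1}{n}\sum_{i=1}^n\sigma_i f(\bm x_i)\Bigr|=\max\Bigl(\frac{1}{n}\sum_{i=1}^n\sigma_i f(\bm x_i),\ \frac{1}{n}\sum_{i=1}^n\sigma_i(-f)(\bm x_i)\Bigr).
\end{equation}
If $\mathcal{F}$ is closed under negation, the function $-f$ also lies in $\mathcal{F}$, so both terms in the $\max$ are bounded by $\sup_{g\in\mathcal{F}}\frac{1}{n}\sum_i\sigma_i g(\bm x_i)$. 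Taking $\sup_{f\in\mathcal{F}}$ of the left-hand side and then expectations gives $\mathfrak{R}'_{n,p}(\mathcal{F})\leq\mathfrak{R}_{n,p}(\mathcal{F})$, which combined with the first part yields equality.

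There is essentially no technical obstacle here; the whole argument is a pointwise bound on the integrand inside the double expectation, followed by monotonicity of $\sup$ and $\mathbb{E}$. The only subtlety worth highlighting explicitly is why the closure-under-negation hypothesis is used in exactly one place, namely to ensure that the candidate $-f$ achieving the absolute value is admissible in the supremum on the right-hand side.
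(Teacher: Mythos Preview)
Your proof is correct and is the standard argument for this well-known fact. The paper actually states this lemma without proof, treating it as a known property of the two Rademacher complexity definitions, so there is no paper proof to compare against; your argument would serve perfectly well as the missing justification.
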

\begin{lemma}[Theorem 4.12 in~\citep{ledoux1991probability}] \label{new_rademacher}
If $\psi:\mathbb{R}\rightarrow\mathbb{R}$ is a Lipschitz continuous function with a Lipschitz constant $L_{\psi}$ and satisfies $\psi(0)=0$, we have 
\begin{equation}
\mathfrak{R}'_{n,p}(\psi\circ\mathcal{F})\leq 2L_{\psi}\mathfrak{R}'_{n,p}(\mathcal{F}), \nonumber
\end{equation}
where $\psi\circ\mathcal{F}=\{\psi\circ f|f\in \mathcal{F}\}$.
\end{lemma}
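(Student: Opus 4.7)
The goal is to bound the absolute-value Rademacher complexity $\mathfrak{R}'_{n,p}(\psi\circ\mathcal{F})$ by $2L_\psi\mathfrak{R}'_{n,p}(\mathcal{F})$ for an $L_\psi$-Lipschitz $\psi$ with $\psi(0)=0$. My plan is to decouple the proof into two conceptually independent pieces: first, pass from the two-sided (absolute-value) complexity $\mathfrak{R}'$ to the one-sided complexity $\mathfrak{R}$ at the cost of a factor $2$; second, establish the classical one-sided contraction $\mathfrak{R}(\psi\circ\mathcal{F})\le L_\psi\mathfrak{R}(\mathcal{F})$ by peeling off one Rademacher variable at a time.

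For the first piece, I would use the elementary inequality $\sup_f |g(f)|\le \sup_f g(f)+\sup_f(-g(f))$ with $g(f)=\frac{1}{n}\sum_i\sigma_i\psi(f(x_i))$. Taking expectations and using that $(-\sigma_1,\ldots,-\sigma_n)$ is equidistributed with $(\sigma_1,\ldots,\sigma_n)$, the two resulting suprema have equal expectation, which yields
$$\mathfrak{R}'_{n,p}(\psi\circ\mathcal{F})\le 2\,\mathfrak{R}_{n,p}(\psi\circ\mathcal{F}).$$
Combined with the trivial bound $\mathfrak{R}_{n,p}(\mathcal{F})\le\mathfrak{R}'_{n,p}(\mathcal{F})$, it suffices to prove the one-sided contraction $\mathfrak{R}_{n,p}(\psi\circ\mathcal{F})\le L_\psi\mathfrak{R}_{n,p}(\mathcal{F})$.

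For the second piece, I would rescale $\psi\mapsto\psi/L_\psi$ so that $\psi$ becomes $1$-Lipschitz, condition on the sample $\mathcal{X}_n$, and proceed by induction on the number of indices at which $\psi$ has been applied. The inductive step reduces to showing that for any function $H:\mathcal{F}\to\mathbb{R}$ (which absorbs the other Rademacher signs) and any $x$,
$$\mathbb{E}_{\sigma}\sup_{f\in\mathcal{F}}\bigl(H(f)+\sigma\psi(f(x))\bigr)\le \mathbb{E}_{\sigma}\sup_{f\in\mathcal{F}}\bigl(H(f)+\sigma f(x)\bigr).$$
Writing out $\mathbb{E}_\sigma$ as $\frac{1}{2}(\cdot|_{\sigma=+1}+\cdot|_{\sigma=-1})$ and reindexing, both sides can be rewritten as suprema over pairs $(f_1,f_2)$, so the inequality becomes
$$\sup_{f_1,f_2}\bigl(H(f_1)+H(f_2)+\psi(f_1(x))-\psi(f_2(x))\bigr)\le \sup_{f_1,f_2}\bigl(H(f_1)+H(f_2)+f_1(x)-f_2(x)\bigr).$$

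The main obstacle is this two-variable inequality, which I would handle pointwise in $(f_1,f_2)$ before taking the supremum. If $f_1(x)\ge f_2(x)$, the $1$-Lipschitz property gives $\psi(f_1(x))-\psi(f_2(x))\le f_1(x)-f_2(x)$ directly, so the corresponding value appears in the right-hand supremum. If instead $f_1(x)<f_2(x)$, the $1$-Lipschitz property gives $\psi(f_1(x))-\psi(f_2(x))\le f_2(x)-f_1(x)$; swapping the labels of $f_1$ and $f_2$ (which is allowed because $H(f_1)+H(f_2)$ is symmetric in them) exhibits this value again inside the right-hand supremum. Iterating the one-coordinate contraction over all $n$ indices, then rescaling back by $L_\psi$ and chaining with the first piece, produces the stated bound $\mathfrak{R}'_{n,p}(\psi\circ\mathcal{F})\le 2L_\psi\mathfrak{R}'_{n,p}(\mathcal{F})$. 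The hypothesis $\psi(0)=0$ is used implicitly to guarantee that the factor $2$ in step one is not enlarged by an unremovable $|\psi(0)|\cdot\mathbb{E}\bigl|\tfrac{1}{n}\sum_i\sigma_i\bigr|$ offset coming from the absolute value around the constant part.
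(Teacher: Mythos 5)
The paper does not prove this lemma at all: it is quoted verbatim as Theorem 4.12 of Ledoux and Talagrand, so there is no internal proof to compare against. Judged on its own terms, your attempt has a genuine gap in its first step. The inequality $\sup_f |g(f)|\le \sup_f g(f)+\sup_f(-g(f))$ is false in general: the correct pointwise identity is $|g(f)|=(g(f))_+ +(-g(f))_+$, which only gives $\sup_f|g(f)|\le \bigl(\sup_f g(f)\bigr)_+ + \bigl(\sup_f(-g(f))\bigr)_+$, and the positive parts cannot be dropped unless $\sup_f g(f)\ge 0$ and $\inf_f g(f)\le 0$. Concretely, your intermediate claim $\mathfrak{R}'_{n,p}(\mathcal{G})\le 2\mathfrak{R}_{n,p}(\mathcal{G})$ fails for $\mathcal{G}=\{g\}$ with $g\equiv 1$: there $\mathfrak{R}_{n,p}(\mathcal{G})=\mathbb{E}_{\bm\sigma}\bigl[\tfrac1n\sum_i\sigma_i\bigr]=0$ while $\mathfrak{R}'_{n,p}(\mathcal{G})=\mathbb{E}_{\bm\sigma}\bigl|\tfrac1n\sum_i\sigma_i\bigr|>0$. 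Taking $\psi$ to be the identity then shows your chain $\mathfrak{R}'(\psi\circ\mathcal{F})\le 2\mathfrak{R}(\psi\circ\mathcal{F})\le 2L_\psi\mathfrak{R}(\mathcal{F})\le 2L_\psi\mathfrak{R}'(\mathcal{F})$ passes through a false statement even though its endpoints happen to be true. A symptom of the problem is that your argument never actually invokes $\psi(0)=0$ (your closing remark about an ``offset'' is not a step of the proof), yet the lemma is false without that hypothesis --- e.g.\ a nonzero constant $\psi$ has $L_\psi=0$ but $\mathfrak{R}'_{n,p}(\psi\circ\mathcal{F})>0$.

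The repair is essentially the Ledoux--Talagrand argument: bound $\mathbb{E}_{\bm\sigma}\sup_f|g_{\bm\sigma}(f)|\le 2\,\mathbb{E}_{\bm\sigma}\bigl(\sup_f g_{\bm\sigma}(f)\bigr)_+$ using the positive-part decomposition and sign symmetry, observe that $\bigl(\sup_{f\in\mathcal{F}}h(f)\bigr)_+=\sup_{f\in\mathcal{F}\cup\{0\}}h(f)$ with $h(0):=0$, and then run your (correct) one-coordinate exchange argument for the one-sided contraction over the \emph{enlarged} class $\mathcal{F}\cup\{0\}$. The hypothesis $\psi(0)=0$ is exactly what guarantees that the adjoined zero function is fixed by the composition, so that $\psi\circ(\mathcal{F}\cup\{0\})=(\psi\circ\mathcal{F})\cup\{0\}$ and the contraction goes through; finally $\sup_{f\in\mathcal{F}\cup\{0\}}\tfrac1n\sum_i\sigma_i f(\bm x_i)\le\sup_{f\in\mathcal{F}}\bigl|\tfrac1n\sum_i\sigma_i f(\bm x_i)\bigr|$ closes the loop back to $\mathfrak{R}'_{n,p}(\mathcal{F})$. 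Your step-two exchange argument for the one-sided contraction is fine as written; it is only being applied to the wrong quantity.
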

Before giving the proof of Theorem~\ref{corrected_eeb}, we give the following lemma.
\begin{lemma}\label{lemma_of_thm4}
For any $\delta > 0$, the inequalities below hold with probability at least $1-\delta$:
\begin{align}
&\sup _{f_{1}, f_{2}, \ldots, f_{q} \in \mathcal{F}}\left|R(f_{1}, f_{2}, \ldots, f_{q})-\widetilde{R}(f_{1}, f_{2}, \ldots, f_{q})\right| \nonumber \\
\leq& \sum_{k=1}^{q} \left(1-\bar{\pi}_{k}\right)C_{\ell}L_{g}\sqrt{\frac{\ln{\left(1/\delta\right)}}{2n^{\rm U}_{k}}}+\sum_{k=1}^{q}\left(\left(1-\pi_{k}-\bar{\pi}_{k}\right)L_{g}+1-\pi_{k}\right)
C_{\ell}\sqrt{\frac{\ln{\left(1/\delta\right)}}{2n^{\rm N}_{k}}} \nonumber\\
&+ \sum_{k=1}^{q}\left(\left(4-4\bar{\pi}_{k}\right)L_{g}L_{\ell}\mathfrak{R}_{n^{\rm U}_{k},p^{\rm U}_{k}}(\mathcal{F})+\left(\left(4-4\pi_{k}-4\bar{\pi}_{k}\right)L_{g}+4-4\pi_{k}\right)L_{\ell}\mathfrak{R}_{n^{\rm N}_{k},p^{\rm N}_{k}}(\mathcal{F})\right)\nonumber \\
&+\sum_{k=1}^{q}\left(2-2\bar{\pi}_{k}-\pi_{k}\right)\left(L_{g}+1\right)C_{\ell}\Delta_{k}. \nonumber
\end{align}
\end{lemma}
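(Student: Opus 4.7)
The plan is to begin with the triangle inequality
\[
\sup_{f_1,\ldots,f_q\in\mathcal{F}}\bigl|R(f_1,\ldots,f_q)-\widetilde{R}(f_1,\ldots,f_q)\bigr|\;\leq\;\sup_{f}\bigl|R-\mathbb{E}\widetilde{R}\bigr|+\sup_{f}\bigl|\widetilde{R}-\mathbb{E}\widetilde{R}\bigr|.
\]
The first summand is the absolute bias of the corrected estimator, and Theorem \ref{bias_and_consistency} already controls it by $\sum_{k=1}^{q}(2-2\bar{\pi}_k-\pi_k)(L_g+1)C_\ell\Delta_k$; since that bound depends only on distribution and model constants and not on the classifiers themselves, it also bounds the supremum in $f$. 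The remaining task is to show that $\sup_f|\widetilde{R}-\mathbb{E}\widetilde{R}|$ is bounded with probability at least $1-\delta$ by the claimed McDiarmid and Rademacher contributions.

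For the McDiarmid step I would track the bounded-difference constants of the map $(\mathcal{D}^{\rm N}_1,\mathcal{D}^{\rm U}_1,\ldots,\mathcal{D}^{\rm N}_q,\mathcal{D}^{\rm U}_q)\mapsto\sup_f|\widetilde{R}-\mathbb{E}\widetilde{R}|$ under replacement of a single sample. Replacing an unlabeled point $\bm{x}^{\rm U}_{k,i}$ perturbs $\widetilde{R}$ only through $g(\widehat{R}^{\rm P}_k(f_k))$; since $\widehat{R}^{\rm P}_k$ moves by at most $(1-\bar{\pi}_k)C_\ell/n^{\rm U}_k$ and $g$ is $L_g$-Lipschitz, the change is at most $L_g(1-\bar{\pi}_k)C_\ell/n^{\rm U}_k$. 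Replacing a negative point $\bm{x}^{\rm N}_{k,i}$ affects both $g(\widehat{R}^{\rm P}_k)$ and the unwrapped $\ell(-f_k)$-term, for a total change bounded by $((1-\pi_k-\bar{\pi}_k)L_g+1-\pi_k)C_\ell/n^{\rm N}_k$. Feeding these constants into the one-sided McDiarmid inequality and using $\sqrt{a+b}\leq\sqrt{a}+\sqrt{b}$ reproduces the two square-root terms of the lemma.

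To control $\mathbb{E}\sup_f|\widetilde{R}-\mathbb{E}\widetilde{R}|$, I would split each $\widetilde{R}_k$ as $g(\widehat{R}^{\rm P}_k)+\widehat{R}^{\rm N}_k$, where $\widehat{R}^{\rm N}_k$ denotes the empirical $\ell(-f_k)$-term with coefficient $1-\pi_k$, and bound the two pieces separately. For $\widehat{R}^{\rm N}_k$, standard ghost-sample symmetrization followed by the contraction inequality of Lemma \ref{new_rademacher} yields $\mathbb{E}\sup_f|\widehat{R}^{\rm N}_k-\mathbb{E}\widehat{R}^{\rm N}_k|\leq 4(1-\pi_k)L_\ell\mathfrak{R}_{n^{\rm N}_k,p^{\rm N}_k}(\mathcal{F})$. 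For $g(\widehat{R}^{\rm P}_k)$ I would introduce a ghost sample $D'$, use the Lipschitz bound $|g(\widehat{R}^{\rm P}_k(D))-g(\widehat{R}^{\rm P}_k(D'))|\leq L_g|\widehat{R}^{\rm P}_k(D)-\widehat{R}^{\rm P}_k(D')|$ to pull $L_g$ outside the supremum, attach per-sample Rademacher variables to the resulting linear combination of centered loss averages, and apply Lemma \ref{new_rademacher} once more to strip $\ell$. This gives $\mathbb{E}\sup_f|g(\widehat{R}^{\rm P}_k)-\mathbb{E}g(\widehat{R}^{\rm P}_k)|\leq 4L_gL_\ell\bigl[(1-\pi_k-\bar{\pi}_k)\mathfrak{R}_{n^{\rm N}_k,p^{\rm N}_k}(\mathcal{F})+(1-\bar{\pi}_k)\mathfrak{R}_{n^{\rm U}_k,p^{\rm U}_k}(\mathcal{F})\bigr]$. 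Summing the two contributions over $k$ and combining with the bias and McDiarmid terms delivers the lemma.

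The main obstacle is the composition $g\circ\widehat{R}^{\rm P}_k$: because $g$ is applied to the entire empirical average rather than to individual summands, Talagrand-type contraction cannot be invoked on $g$ directly, and there is no natural Rademacher process attached to it. The Lipschitz-swap trick sidesteps this by pulling $L_g$ outside after the ghost-sample coupling, reducing matters to a Rademacher complexity of linear combinations of loss averages that the standard machinery handles. A secondary bookkeeping subtlety is that the subsequent contraction must be applied in its two-sided form (Lemma \ref{new_rademacher}), which contributes an extra factor of $2$ relative to the one-sided Talagrand inequality used in the proof of Theorem \ref{eeb}; this is precisely what produces the coefficient $4$, rather than $2$, in front of the Rademacher complexity terms of the final bound.
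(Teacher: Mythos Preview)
Your proposal is correct and matches the paper's argument almost step for step: triangle-split into bias plus deviation, invoke Theorem~\ref{bias_and_consistency} for the bias, apply McDiarmid with exactly the bounded-difference constants you listed, then ghost-sample symmetrization plus the two-sided contraction of Lemma~\ref{new_rademacher} after the Lipschitz swap on $g$. The only detail the paper makes explicit that you glossed over is centering the loss via $\tilde{\ell}(z)=\ell(z)-\ell(0)$ so that the hypothesis $\psi(0)=0$ of Lemma~\ref{new_rademacher} is met before stripping $\ell$; since $\ell(z_1)-\ell(z_2)=\tilde{\ell}(z_1)-\tilde{\ell}(z_2)$, this costs nothing in the bound.
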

\begin{proof}
Similar to previous proofs, we can observe that when an unlabeled example from $\mathcal{D}^{\rm U}_{k}$ is substituted by another unlabeled example, the value of $\sup _{f_{1}, f_{2}, \ldots, f_{q} \in \mathcal{F}}\left|\mathbb{E}\left[\widetilde{R}(f_{1}, f_{2}, \ldots, f_{q})\right]-\widetilde{R}(f_{1}, f_{2}, \ldots, f_{q})\right|$ changes at most $\left(1-\bar{\pi}_{k}\right)C_{\ell}L_{g}/n^{\rm U}_{k}$. When a negative example from $\mathcal{D}^{\rm N}_{k}$ is substituted by a different example, the value of $\sup _{f_{1}, f_{2}, \ldots, f_{q} \in \mathcal{F}}\left|\mathbb{E}\left[\widetilde{R}(f_{1}, f_{2}, \ldots, f_{q})\right]-\widetilde{R}(f_{1}, f_{2}, \ldots, f_{q})\right|$  changes at most $\left(\left(1-\pi_{k}-\bar{\pi}_{k}\right)L_{g}+1-\pi_{k}\right)
C_{\ell}/n^{\rm N}_{k}$. By applying McDiarmid’s inequality, we have the following inequality with probability at least $1-\delta$: 
\begin{align}
& \sup _{f_{1}, f_{2}, \ldots, f_{q} \in \mathcal{F}}\left|\mathbb{E}\left[\widetilde{R}(f_{1}, f_{2}, \ldots, f_{q})\right]-\widetilde{R}(f_{1}, f_{2}, \ldots, f_{q})\right| \nonumber \\
-& \mathbb{E}\left[\sup _{f_{1}, f_{2}, \ldots, f_{q} \in \mathcal{F}}\left|\mathbb{E}\left[\widetilde{R}(f_{1}, f_{2}, \ldots, f_{q})\right]-\widetilde{R}(f_{1}, f_{2}, \ldots, f_{q})\right|\right] \nonumber \\
\leq& \sum_{k=1}^{q} \left(1-\bar{\pi}_{k}\right)C_{\ell}L_{g}\sqrt{\frac{\ln{\left(1/\delta\right)}}{2n^{\rm U}_{k}}}+\sum_{k=1}^{q}\left(\left(1-\pi_{k}-\bar{\pi}_{k}\right)L_{g}+1-\pi_{k}\right)
C_{\ell}\sqrt{\frac{\ln{\left(1/\delta\right)}}{2n^{\rm N}_{k}}}. \label{prof_lemma_ineq_1}
\end{align}
Besides, we have 
\begin{align}
&\mathbb{E}\left[\sup _{f_{1}, f_{2}, \ldots, f_{q} \in \mathcal{F}}\left|\mathbb{E}\left[\widetilde{R}(f_{1}, f_{2}, \ldots, f_{q})\right]-\widetilde{R}(f_{1}, f_{2}, \ldots, f_{q})\right|\right]  
\nonumber \\
=&\mathbb{E}_{\bar{\mathcal{D}}}\left[\sup _{f_{1}, f_{2}, \ldots, f_{q} \in \mathcal{F}}\left|\mathbb{E}_{\bar{\mathcal{D}}'}\left[\widetilde{R}(f_{1}, f_{2}, \ldots, f_{q})\right]-\widetilde{R}(f_{1}, f_{2}, \ldots, f_{q})\right|\right] \nonumber \\
\leq & \mathbb{E}_{\bar{\mathcal{D}},\bar{\mathcal{D}}'}\left[\sup _{f_{1}, f_{2}, \ldots, f_{q} \in \mathcal{F}}\left|\widetilde{R}(f_{1}, f_{2}, \ldots, f_{q};\bar{\mathcal{D}})-\widetilde{R}(f_{1}, f_{2}, \ldots, f_{q};\bar{\mathcal{D}}')\right|\right], \label{prof_lemma_ineq_2}
\end{align}
where the last inequality is deduced by applying Jensen’s inequality twice since the absolute value function and the supremum function are both convex.
Here, $\widetilde{R}(f_{1}, f_{2}, \ldots, f_{q};\bar{\mathcal{D}})$ denotes the value of $\widetilde{R}(f_{1}, f_{2}, \ldots, f_{q})$ calculated on $\bar{\mathcal{D}}$. To ensure that the conditions in Lemma~\ref{new_rademacher} hold, we introduce $\Tilde{\ell}(z)=\ell(z)-\ell(0)$. It is obvious that $\Tilde{\ell}(0)=0$ and $\Tilde{\ell}(z)$ is also a Lipschitz continuous function with a Lipschitz constant $L_{\ell}$. Then, we have
\begin{align}
&\left|\widetilde{R}(f_{1}, f_{2}, \ldots, f_{q};\bar{\mathcal{D}})-\widetilde{R}(f_{1}, f_{2}, \ldots, f_{q};\bar{\mathcal{D}}')\right| \nonumber \\
\leq & \sum_{k=1}^{q}\left|g\left(\frac{\bar{\pi}_{k}+\pi_{k}-1}{n^{\rm N}_{k}}\sum_{i=1}^{n^{\rm N}_{k}}\ell\left(f_{k}(\bm{x}_{k,i}^{\mathrm{N}})\right)+\frac{1-\bar{\pi}_{k}}{n^{\rm U}_{k}}\sum_{i=1}^{n^{\rm U}_{k}}\ell\left(f_{k}(\bm{x}_{k,i}^{\mathrm{U}})\right)\right)\right. \nonumber \\
&\left.-g\left(\frac{\bar{\pi}_{k}+\pi_{k}-1}{n^{\rm N}_{k}}\sum_{i=1}^{n^{\rm{N}}_{k}}\ell\left(f_{k}(\bm{x}_{k,i}^{\mathrm{N}'})\right)+\frac{1-\bar{\pi}_{k}}{n^{\rm U}_{k}}\sum_{i=1}^{n^{\rm U}_{k}}\ell\left(f_{k}(\bm{x}_{k,i}^{\mathrm{U}'})\right)\right)\right| \nonumber \\
&+\sum_{k=1}^{q}\left|\frac{1-\pi_{k}}{n^{\rm N}_{k}}\sum_{i=1}^{n^{\rm N}_{k}}\ell\left(-f_{k}(\bm{x}_{k,i}^{\mathrm{N}})\right)-\frac{1-\pi_{k}}{n^{\rm N}_{k}}\sum_{i=1}^{n^{\rm N}_{k}}\ell\left(-f_{k}(\bm{x}_{k,i}^{\mathrm{N}'})\right)\right|\nonumber \\
\leq& \sum_{k=1}^{q}L_{g}\left|\frac{\bar{\pi}_{k}+\pi_{k}-1}{n^{\rm N}_{k}}\sum_{i=1}^{n^{\rm N}_{k}}\left(\ell\left(f_{k}(\bm{x}_{k,i}^{\mathrm{N}})\right)-\ell\left(f_{k}(\bm{x}_{k,i}^{\mathrm{N}'})\right)\right)+\frac{1-\bar{\pi}_{k}}{n^{\rm N}_{k}}\sum_{i=1}^{n^{\rm U}_{k}}\left(\ell\left(f_{k}(\bm{x}_{k,i}^{\mathrm{U}})\right)-\ell\left(f_{k}(\bm{x}_{k,i}^{\mathrm{U}'})\right)\right)\right| \nonumber \\
&+\sum_{k=1}^{q}\left|\frac{1-\pi_{k}}{n^{\rm N}_{k}}\sum_{i=1}^{n^{\rm N}_{k}}\left(\ell\left(-f_{k}(\bm{x}_{k,i}^{\mathrm{N}})\right)-\ell\left(-f_{k}(\bm{x}_{k,i}^{\mathrm{N}'})\right)\right)\right|. \label{proof_thm4_mid_eq}
\end{align}
Besides, we can observe $\ell(z_1)-\ell(z_2)=\Tilde{\ell}(z_1)-\Tilde{\ell}(z_2)$. Therefore, the RHS of Inequality~(\ref{proof_thm4_mid_eq}) can be expressed as
\begin{align}
& \sum_{k=1}^{q}L_{g}\left|\frac{\bar{\pi}_{k}+\pi_{k}-1}{n^{\rm N}_{k}}\sum_{i=1}^{n^{\rm N}_{k}}\left(\Tilde{\ell}\left(f_{k}(\bm{x}_{k,i}^{\mathrm{N}})\right)-\Tilde{\ell}\left(f_{k}(\bm{x}_{k,i}^{\mathrm{N}'})\right)\right)+\frac{1-\bar{\pi}_{k}}{n^{\rm N}_{k}}\sum_{i=1}^{n^{\rm U}_{k}}\left(\Tilde{\ell}\left(f_{k}(\bm{x}_{k,i}^{\mathrm{U}})\right)-\Tilde{\ell}\left(f_{k}(\bm{x}_{k,i}^{\mathrm{U}'})\right)\right)\right| \nonumber \\
&+\sum_{k=1}^{q}\left|\frac{1-\pi_{k}}{n^{\rm N}_{k}}\sum_{i=1}^{n^{\rm N}_{k}}\left(\Tilde{\ell}\left(-f_{k}(\bm{x}_{k,i}^{\mathrm{N}})\right)-\Tilde{\ell}\left(-f_{k}(\bm{x}_{k,i}^{\mathrm{N}'})\right)\right)\right|.\nonumber 
\end{align}
Then, it is a routine work to show by symmetrization~\citep{mohri2012foundations} that
\begin{align}
&\mathbb{E}_{\bar{\mathcal{D}},\bar{\mathcal{D}}'}\left[\sup _{f_{1}, f_{2}, \ldots, f_{q} \in \mathcal{F}}\left|\widetilde{R}(f_{1}, f_{2}, \ldots, f_{q};\bar{\mathcal{D}})-\widetilde{R}(f_{1}, f_{2}, \ldots, f_{q};\bar{\mathcal{D}}')\right|\right] \nonumber \\
\leq& \sum_{k=1}^{q}\left(\left(2-2\bar{\pi}_{k}\right)L_{g}\mathfrak{R}'_{n^{\rm U}_{k},p^{\rm U}_{k}}(\Tilde{\ell}\circ\mathcal{F})+\left(\left(2-2\pi_{k}-2\bar{\pi}_{k}\right)L_{g}+2-2\pi_{k}\right)\mathfrak{R}'_{n^{\rm N}_{k},p^{\rm N}_{k}}(\Tilde{\ell}\circ\mathcal{F})\right) \nonumber \\
\leq&\sum_{k=1}^{q}\left(\left(4-4\bar{\pi}_{k}\right)L_{g}L_{\ell}\mathfrak{R}'_{n^{\rm U}_{k},p^{\rm U}_{k}}(\mathcal{F})+\left(\left(4-4\pi_{k}-4\bar{\pi}_{k}\right)L_{g}+4-4\pi_{k}\right)L_{\ell}\mathfrak{R}'_{n^{\rm N}_{k},p^{\rm N}_{k}}(\mathcal{F})\right) \nonumber \\
=&\sum_{k=1}^{q}\left(\left(4-4\bar{\pi}_{k}\right)L_{g}L_{\ell}\mathfrak{R}_{n^{\rm U}_{k},p^{\rm U}_{k}}(\mathcal{F})+\left(\left(4-4\pi_{k}-4\bar{\pi}_{k}\right)L_{g}+4-4\pi_{k}\right)L_{\ell}\mathfrak{R}_{n^{\rm N}_{k},p^{\rm N}_{k}}(\mathcal{F})\right), \label{prof_lemma_ineq_3}
\end{align}
where the second inequality is deduced according to Lemma~\ref{new_rademacher} and the last equality is based on the assumption that $\mathcal{F}$ is closed under negation.
By combing Inequality~(\ref{prof_lemma_ineq_1}), Inequality~(\ref{prof_lemma_ineq_2}), and Inequality~(\ref{prof_lemma_ineq_3}), we have the following inequality with probability at least $1-\delta$:
\begin{align}
& \sup _{f_{1}, f_{2}, \ldots, f_{q} \in \mathcal{F}}\left|\mathbb{E}\left[\widetilde{R}(f_{1}, f_{2}, \ldots, f_{q})\right]-\widetilde{R}(f_{1}, f_{2}, \ldots, f_{q})\right| \nonumber \\
\leq& \sum_{k=1}^{q} \left(1-\bar{\pi}_{k}\right)C_{\ell}L_{g}\sqrt{\frac{\ln{\left(1/\delta\right)}}{2n^{\rm U}_{k}}}+\sum_{k=1}^{q}\left(\left(1-\pi_{k}-\bar{\pi}_{k}\right)L_{g}+1-\pi_{k}\right)
C_{\ell}\sqrt{\frac{\ln{\left(1/\delta\right)}}{2n^{\rm N}_{k}}} \nonumber\\
&+ \sum_{k=1}^{q}\left(\left(4-4\bar{\pi}_{k}\right)L_{g}L_{\ell}\mathfrak{R}_{n^{\rm U}_{k},p^{\rm U}_{k}}(\mathcal{F})+\left(\left(4-4\pi_{k}-4\bar{\pi}_{k}\right)L_{g}+4-4\pi_{k}\right)L_{\ell}\mathfrak{R}_{n^{\rm N}_{k},p^{\rm N}_{k}}(\mathcal{F})\right). \label{prof_lemma_ineq_4} 
\end{align}
Then, we have 
\begin{align}
&\sup _{f_{1}, f_{2}, \ldots, f_{q} \in \mathcal{F}}\left|R(f_{1}, f_{2}, \ldots, f_{q})-\widetilde{R}(f_{1}, f_{2}, \ldots, f_{q})\right| \nonumber \\
=&\sup _{f_{1}, f_{2}, \ldots, f_{q} \in \mathcal{F}}\left|R(f_{1}, f_{2}, \ldots, f_{q})-\mathbb{E}\left[\widetilde{R}(f_{1}, f_{2}, \ldots, f_{q})\right]\right. \nonumber \\
&\left.+\mathbb{E}\left[\widetilde{R}(f_{1}, f_{2}, \ldots, f_{q})\right]-\widetilde{R}(f_{1}, f_{2}, \ldots, f_{q})\right| \nonumber \\
\leq&\sup _{f_{1}, f_{2}, \ldots, f_{q} \in \mathcal{F}}\left|R(f_{1}, f_{2}, \ldots, f_{q})-\mathbb{E}\left[\widetilde{R}(f_{1}, f_{2}, \ldots, f_{q})\right]\right| \nonumber \\
&+\sup _{f_{1}, f_{2}, \ldots, f_{q} \in \mathcal{F}}\left|\mathbb{E}\left[\widetilde{R}(f_{1}, f_{2}, \ldots, f_{q})\right]-\widetilde{R}(f_{1}, f_{2}, \ldots, f_{q})\right|. \label{prof_lemma_ineq_5}
\end{align}
Combining Inequality~(\ref{prof_lemma_ineq_5}) with Inequality~(\ref{prof_lemma_ineq_4}) and Inequality~(\ref{bias_ineq}), the proof is completed.
\end{proof}
We present a more complete version of Theorem~\ref{corrected_eeb} and its proof.
\begin{theorem}
Based on the above assumptions, for any $\delta>0$, the following inequality holds with probability at least $1-\delta$:
\begin{align}
&R(\widetilde{f}_{1}, \widetilde{f}_{2}, \ldots, \widetilde{f}_{q}) - R(f_{1}^{*},f_{2}^{*},\ldots,f_{q}^{*})\leq \sum_{k=1}^{q}\left(4-4\bar{\pi}_{k}-2\pi_{k}\right)\left(L_{g}+1\right)C_{\ell}\Delta_{k} \nonumber \\
&+ \sum_{k=1}^{q}\left( \left(1-\bar{\pi}_{k}\right)C_{\ell}L_{g}\sqrt{\frac{2\ln{\left(1/\delta\right)}}{n^{\rm U}_{k}}}+\left(\left(1-\pi_{k}-\bar{\pi}_{k}\right)L_{g}+1-\pi_{k}\right)
C_{\ell}\sqrt{\frac{2\ln{\left(1/\delta\right)}}{n^{\rm N}_{k}}}\right) \nonumber\\
&+ \sum_{k=1}^{q}\left(\left(8-8\bar{\pi}_{k}\right)L_{g}L_{\ell}\mathfrak{R}_{n^{\rm U}_{k},p^{\rm U}_{k}}(\mathcal{F})+\left(\left(8-8\pi_{k}-8\bar{\pi}_{k}\right)L_{g}+8-8\pi_{k}\right)L_{\ell}\mathfrak{R}_{n^{\rm N}_{k},p^{\rm N}_{k}}(\mathcal{F})\right).\nonumber 
\end{align}
\end{theorem}
\begin{proof} 
\begin{align}
&R(\widetilde{f}_{1}, \widetilde{f}_{2}, \ldots, \widetilde{f}_{q}) - R(f_{1}^{*},f_{2}^{*},\ldots,f_{q}^{*}) \nonumber \\
=&R(\widetilde{f}_{1}, \widetilde{f}_{2}, \ldots, \widetilde{f}_{q})-\widetilde{R}(\widetilde{f}_{1}, \widetilde{f}_{2}, \ldots, \widetilde{f}_{q})+\widetilde{R}(\widetilde{f}_{1}, \widetilde{f}_{2}, \ldots, \widetilde{f}_{q})-\widetilde{R}(f_{1}^{*},f_{2}^{*},\ldots,f_{q}^{*}) \nonumber \\
&+\widetilde{R}(f_{1}^{*},f_{2}^{*},\ldots,f_{q}^{*})- R(f_{1}^{*},f_{2}^{*},\ldots,f_{q}^{*}) \nonumber \\
\leq & R(\widetilde{f}_{1}, \widetilde{f}_{2}, \ldots, \widetilde{f}_{q})-\widetilde{R}(\widetilde{f}_{1}, \widetilde{f}_{2}, \ldots, \widetilde{f}_{q})+\widetilde{R}(f_{1}^{*},f_{2}^{*},\ldots,f_{q}^{*})- R(f_{1}^{*},f_{2}^{*},\ldots,f_{q}^{*}) \nonumber \\
\leq& 2\sup _{f_{1}, f_{2}, \ldots, f_{q} \in \mathcal{F}}\left|R(f_{1}, f_{2}, \ldots, f_{q})-\widetilde{R}(f_{1}, f_{2}, \ldots, f_{q})\right|. \label{ineq_of_thm4}
\end{align}
The first inequality is deduced because $(\widetilde{f}_{1}, \widetilde{f}_{2}, \ldots, \widetilde{f}_{q})$ is the minimizer of $\widetilde{R}(f_1, f_2, \ldots, f_q)$. Combining Inequality~(\ref{ineq_of_thm4}) and Lemma~\ref{lemma_of_thm4}, the proof is completed.
\end{proof}
\section{Details of Experimental Setup}\label{exp_setup}
\subsection{Details of Synthetic Benchmark Datasets}
We considered the single complementary-label setting and similar results could be observed with multiple complementary labels.

For the ``uniform'' setting, a label other than the ground-truth label was sampled randomly following the uniform distribution to be the complementary label.

For the ``biased-a'' and ``biased-b'' settings, we adopted the following row-normalized transition matrices of $p(\bar{y}|y)$ to generate complementary labels: \\
\begin{eqnarray}
\textrm{biased-a:}
\begin{bmatrix}
0&0.250&0.043&0.040&0.043&0.040&0.250&0.040&0.250&0.043 \\
0.043&0&0.250&0.043&0.040&0.043&0.040&0.250&0.040&0.250 \\
0.250&0.043&0&0.250&0.043&0.040&0.043&0.040&0.250&0.040 \\
0.040&0.250&0.043&0&0.250&0.043&0.040&0.043&0.040&0.250 \\
0.250&0.040&0.250&0.043&0&0.250&0.043&0.040&0.043&0.040 \\
0.040&0.250&0.040&0.250&0.043&0&0.250&0.043&0.040&0.043 \\
0.043&0.040&0.250&0.040&0.250&0.043&0&0.250&0.043&0.040 \\
0.040&0.043&0.040&0.250&0.040&0.250&0.043&0&0.250&0.043 \\
0.043&0.040&0.043&0.040&0.250&0.040&0.250&0.043&0&0.250 \\
0.250&0.043&0.040&0.043&0.040&0.250&0.040&0.250&0.043&0 \nonumber
\end{bmatrix},\\
\textrm{biased-b:}
\begin{bmatrix}
0&0.220&0.080&0.033&0.080&0.033&0.220&0.033&0.220&0.080 \\
0.080&0&0.220&0.080&0.033&0.080&0.033&0.220&0.033&0.220 \\
0.220&0.080&0&0.220&0.080&0.033&0.080&0.033&0.220&0.033 \\
0.033&0.220&0.080&0&0.220&0.080&0.033&0.080&0.033&0.220 \\
0.220&0.033&0.220&0.080&0&0.220&0.080&0.033&0.080&0.033 \\
0.033&0.220&0.033&0.220&0.080&0&0.220&0.080&0.033&0.080 \\
0.080&0.033&0.220&0.033&0.220&0.080&0&0.220&0.080&0.033 \\
0.033&0.080&0.033&0.220&0.033&0.220&0.080&0&0.220&0.080 \\
0.080&0.033&0.080&0.033&0.220&0.033&0.220&0.080&0&0.220 \\
0.220&0.080&0.033&0.080&0.033&0.220&0.033&0.220&0.080&0 \nonumber
\end{bmatrix}.
\end{eqnarray}
For each example, we sample a complementary label from a multinomial distribution parameterized by the row vector of the transition matrix indexed by the ground-truth label. 

For the ``SCAR-a'' and ``SCAR-b'' settings, we followed the generation process in Section 3.1 with the following class priors of complementary labels:
\begin{align}
\textrm{SCAR-a:} &\left[0.05, 0.05, 0.2, 0.2, 0.1, 0.1, 0.05, 0.05, 0.1, 0.1\right], \nonumber \\
\textrm{SCAR-b:} &\left[0.1, 0.1, 0.2, 0.05, 0.05, 0.1, 0.1, 0.2, 0.05, 0.05\right]. \nonumber 
\end{align}
We repeated the sampling procedure to ensure that each example had a single complementary label.
\subsection{Descriptions of Compared Approaches}
The compared methods in the experiments on synthetic benchmark datasets:
\begin{itemize}[leftmargin=1em, itemsep=1pt, topsep=1pt, parsep=-1pt]
\item PC~\citep{ishida2017learning}: A risk-consistent complementary-label learning approach using the pairwise comparison loss.
\item NN~\citep{ishida2019complementary}: A risk-consistent complementary-label learning approach using the non-negative risk estimator.
\item GA~\citep{ishida2019complementary}: A variant of the non-negative risk estimator of complementary-label learning by using the gradient ascent technique.
\item L-UW~\citep{gao2021discriminative}: A discriminative  approach by minimizing the outputs corresponding to complementary labels.
\item L-W~\cite{gao2021discriminative}: A weighted loss based on L-UW by considering the prediction uncertainty.
\item OP~\citep{liu2023consistent}: A classifier-consistent complementary-label learning approach by minimizing the outputs of complementary labels.
\end{itemize}
The compared methods in the experiments on real-world benchmark datasets:
\begin{itemize}[leftmargin=1em, itemsep=1pt, topsep=1pt, parsep=-1pt]
\item CC~\citep{feng2020provably}: A classifier-consistent partial-label learning approach based on the assumption of uniform distribution of partial labels.
\item PRODEN~\citep{lv2020progressive}: A risk-consistent partial-label learning approach using the self-training strategy to identify the ground-truth labels.
\item EXP~\citep{feng2020learning}: A classifier-consistent multiple complementary-label learning approach by using the exponential loss function.
\item MAE~\citep{feng2020learning}: A classifier-consistent multiple complementary-label learning approach by using the Mean Absolute Error loss function.
\item Phuber-CE~\citep{feng2020learning}: A classifier-consistent multiple complementary-label learning approach by using the Partially Huberised Cross Entropy loss function.
\item LWS~\citep{wen2021leveraged}: A partial-label learning approach by leveraging a weight to account for the tradeoff between losses on partial and non-partial labels.
\item CAVL~\citep{zhang20222exploiting}: A partial-label learning approach by using the class activation value to identify the true labels.
\item IDGP~\citep{qiao2023decompositional}: An instance-dependent partial-label learning approach by modeling the generation process of partial labels.
\item POP~\citep{xu2023progressive}: A  partial-label learning approach by progressively purifying candidate label sets.
\end{itemize}
\subsection{Details of Models}
For CIFAR-10, we used 34-layer ResNet~\citep{he2016deep} and 22-layer DenseNet~\citep{huang2017densely} as the model architectures. For the other three datasets, we used a multilayer perceptron~(MLP) with a hidden layer of width 500 equipped with the ReLU~\citep{nair2010rectified} activation function and 5-layer LeNet~\citep{lecun1998gradient} as the model architectures. 

For CLCIFAR-10 and CLCIFAR-20, we adopted the same data augmentation techniques for all the methods, including random horizontal flipping and random cropping. We used 34-layer ResNet~\citep{he2016deep} and 22-layer DenseNet~\citep{huang2017densely} as the model architectures.

\section{More Experimental Results}\label{more_res}
\subsection{Experimental Results on CIFAR-10}
Table~\ref{exp_res_cifar10} shows the experimental results on CIFAR-10 with synthetic complementary labels.
\begin{table*}[htbp]
\small
\caption{Classification accuracy~(mean$\pm$std) of each method on CIFAR-10 with a single complementary label. The best performance is shown in bold~(pairwise \emph{t}-test at the 0.05 significance level).
}\label{exp_res_cifar10}
\setlength{\tabcolsep}{3pt}
\centering
\begin{tabular}{l cc cc cc cc cc} 
\toprule[1pt]   
Setting &\multicolumn{2}{c}{Uniform}&\multicolumn{2}{c}{Biased-a}&\multicolumn{2}{c}{Biased-b}&\multicolumn{2}{c}{SCAR-a}&\multicolumn{2}{c}{SCAR-b}\\ 
\midrule Model&  ResNet & DenseNet & ResNet & DenseNet &ResNet & DenseNet& ResNet & DenseNet & ResNet & DenseNet\\
\midrule PC& \makecell{14.33 \\ $\pm$\scriptsize{0.73 }}
& \makecell{17.44 \\ $\pm$\scriptsize{0.52 }}
& \makecell{25.46 \\ $\pm$\scriptsize{0.69 }}
& \makecell{34.01 \\ $\pm$\scriptsize{1.47 }}
& \makecell{23.04 \\ $\pm$\scriptsize{0.33 }}
& \makecell{29.27 \\ $\pm$\scriptsize{1.05 }}
& \makecell{14.94 \\ $\pm$\scriptsize{0.88 }}
&\makecell{17.11 \\ $\pm$\scriptsize{0.87 }}
& \makecell{17.16 \\ $\pm$\scriptsize{0.86 }}
& \makecell{21.14 \\ $\pm$\scriptsize{1.34 }} \\ 
\midrule NN& \makecell{19.90 \\ $\pm$\scriptsize{0.73 }}
& \makecell{30.55 \\ $\pm$\scriptsize{1.01 }}
& \makecell{24.88 \\ $\pm$\scriptsize{1.01 }}
& \makecell{24.48 \\ $\pm$\scriptsize{1.50 }}
& \makecell{26.59 \\ $\pm$\scriptsize{1.33 }}
& \makecell{24.51 \\ $\pm$\scriptsize{1.24 }}
& \makecell{21.11 \\ $\pm$\scriptsize{0.94 }}
&\makecell{29.48 \\ $\pm$\scriptsize{1.05 }}
& \makecell{23.56 \\ $\pm$\scriptsize{1.25 }}
& \makecell{30.67 \\ $\pm$\scriptsize{0.73 }} \\ 
\midrule GA& \textbf{\makecell{37.59 \\ $\pm$\scriptsize{1.76 }}}
& \textbf{\makecell{46.86 \\ $\pm$\scriptsize{0.84 }}}
& \makecell{20.01 \\ $\pm$\scriptsize{1.96 }}
& \makecell{22.41 \\ $\pm$\scriptsize{1.33 }}
& \makecell{16.74 \\ $\pm$\scriptsize{2.64 }}
& \makecell{21.48 \\ $\pm$\scriptsize{1.46 }}
& \textbf{\makecell{24.17 \\ $\pm$\scriptsize{1.32 }}}
&\makecell{29.04 \\ $\pm$\scriptsize{1.84 }}
& \makecell{23.47 \\ $\pm$\scriptsize{1.30 }}
& \makecell{30.72 \\ $\pm$\scriptsize{1.44 }} \\ 
\midrule L-UW& \makecell{19.58 \\ $\pm$\scriptsize{1.77 }}
& \makecell{17.25 \\ $\pm$\scriptsize{3.03 }}
& \makecell{24.83 \\ $\pm$\scriptsize{2.67 }}
& \makecell{29.46 \\ $\pm$\scriptsize{1.03 }}
& \makecell{20.73 \\ $\pm$\scriptsize{2.41 }}
& \makecell{25.41 \\ $\pm$\scriptsize{2.61 }}
& \makecell{14.56 \\ $\pm$\scriptsize{2.71 }}
&\makecell{10.69 \\ $\pm$\scriptsize{0.94 }}
& \makecell{10.39 \\ $\pm$\scriptsize{0.50 }}
& \makecell{10.04 \\ $\pm$\scriptsize{0.09 }} \\ 
\midrule L-W& \makecell{18.05 \\ $\pm$\scriptsize{3.02 }}
& \makecell{13.97 \\ $\pm$\scriptsize{2.55 }}
& \makecell{22.65 \\ $\pm$\scriptsize{2.70 }}
& \makecell{27.64 \\ $\pm$\scriptsize{0.80 }}
& \makecell{22.70 \\ $\pm$\scriptsize{2.33 }}
& \makecell{24.86 \\ $\pm$\scriptsize{1.34 }}
& \makecell{13.72 \\ $\pm$\scriptsize{2.60 }}
&\makecell{10.00 \\ $\pm$\scriptsize{0.00 }}
& \makecell{10.25 \\ $\pm$\scriptsize{0.49 }}
& \makecell{10.00 \\ $\pm$\scriptsize{0.00 }} \\ 
\midrule OP& \makecell{23.78 \\ $\pm$\scriptsize{2.80 }}
& \makecell{39.32 \\ $\pm$\scriptsize{2.46 }}
& \makecell{29.47 \\ $\pm$\scriptsize{2.71 }}
& \makecell{41.99 \\ $\pm$\scriptsize{1.54 }}
& \makecell{25.60 \\ $\pm$\scriptsize{4.18 }}
& \makecell{39.61 \\ $\pm$\scriptsize{2.26 }}
& \makecell{17.55 \\ $\pm$\scriptsize{1.38 }}
&\makecell{27.12 \\ $\pm$\scriptsize{1.17 }}
& \makecell{20.08 \\ $\pm$\scriptsize{2.96 }}
& \makecell{27.24 \\ $\pm$\scriptsize{2.62 }} \\ 
\midrule SCARCE& \textbf{\makecell{35.63  \\ $\pm$\scriptsize{3.23  }}}
& \makecell{42.65  \\ $\pm$\scriptsize{2.00  }}
& \textbf{\makecell{39.70  \\ $\pm$\scriptsize{3.79  }}}
& \textbf{\makecell{51.42  \\ $\pm$\scriptsize{1.81  }}}
& \textbf{\makecell{37.82  \\ $\pm$\scriptsize{2.72  }}}
& \textbf{\makecell{50.52  \\ $\pm$\scriptsize{2.18  }}}
& \textbf{\makecell{29.04  \\ $\pm$\scriptsize{3.70  }}}
&\textbf{\makecell{36.38  \\ $\pm$\scriptsize{2.56  }}}
& \textbf{\makecell{35.71  \\ $\pm$\scriptsize{1.16  }}}
& \textbf{\makecell{38.43  \\ $\pm$\scriptsize{0.85  }}} \\ 
\bottomrule[1pt]   
\end{tabular}
\end{table*}
\end{document}